\newcommand{\lemref}[1]{Lemma~\ref{#1}}
\newcommand{\thmref}[1]{Theorem~\ref{#1}}
\renewcommand{\eqref}[1]{Equation~(\ref{#1})}
\newcommand{\secref}[1]{Section~\ref{#1}}
\newcommand{\defref}[1]{Definition~\ref{#1}}
\newtheorem{theorem}{Theorem}[section]
\newtheorem{lemma}[theorem]{Lemma}
\newtheorem{remark}[theorem]{Remark}
\newtheorem{example}[theorem]{Example}
\newtheorem{counter-example}[theorem]{Counter example}
\newtheorem{open question}[theorem]{Open question}
\newtheorem{corollary}[theorem]{Corollary}
\newtheorem{conjecture}[theorem]{Conjecture}
\newtheorem{definition}[theorem]{Definition}
\newtheorem{claim}{Claim}
\DeclareMathOperator*{\argmax}{argmax}
\newcommand{\ignore}[1]{}
\newcommand{\ca}{{\cal A}}
\newcommand{\cd}{{\cal D}}
\newcommand{\ch}{{\cal H}}
\newcommand{\cl}{{\cal L}}
\newcommand{\cp}{{\cal P}}
\newcommand{\cx}{{\cal X}}
\newcommand{\cy}{{\cal Y}}
\newcommand{\cz}{{\cal Z}}
\newcommand{\cn}{{\cal N}}
\newcommand{\cR}{{\cal R}}
\newcommand{\onefunc}{{\mathbb I}}
\newcommand{\reals}{{\mathbb R}}
\DeclareMathOperator*{\E}{\mathbb{E}}
\DeclareMathOperator*{\Err}{Err}
\DeclareMathOperator*{\Gain}{Gain}
\DeclareMathOperator*{\VC}{VC}
\newcommand{\bad}{\mathrm{bad}}
\newcommand{\good}{\mathrm{good}}
\newcommand{\pac}{\mathrm{PAC}}
\newcommand{\erm}{\mathrm{ERM}}
\newcommand{\range}{\operatorname{range}}
\newcommand{\ldim}{\operatorname{L-Dim}}
\newcommand{\bldim}{\operatorname{BL-Dim}}
\newcommand{\pbi}{\mathrm{PBI}}
\newcommand{\todoinline}[1]{{\bf \color{red} todo: #1}}
\title{Multiclass Learnability and the ERM principle}
\author{Amit Daniely\thanks{Dept. of Mathematics, The Hebrew University, Jerusalem, Israel}
\hspace{1cm}
Sivan Sabato\thanks{School of Computer Science and Engineering, The Hebrew University, Jerusalem, Israel}
\hspace{1cm}
Shai Ben-David\thanks{David R. Cheriton School of Computer Science, University of Waterloo, Waterloo, Ontario, Canada}
\hspace{1cm}
Shai Shalev-Shwartz\thanks{School of Computer Science and Engineering, The Hebrew University, Jerusalem, Israel}
}
\begin{document}

\maketitle

\begin{abstract}
  We study the sample
  complexity of multiclass prediction in several learning settings. For the PAC setting our analysis reveals a surprising phenomenon: In sharp contrast to binary classification, we show that there exist multiclass hypothesis classes for which
  some Empirical Risk Minimizers (ERM learners) have lower sample complexity than
  others. Furthermore, there are classes that are learnable by some
  ERM learners, while other ERM learners will fail to learn them. 
   We
  propose a principle for designing good ERM learners, and use this
  principle to prove tight bounds on the sample complexity of learning
  {\em symmetric} multiclass hypothesis classes---classes that
  are invariant under permutations of label names.
  We further provide a characterization of mistake and regret bounds for multiclass learning in the online setting and the bandit setting, using new generalizations of Littlestone's dimension.
\end{abstract}

\section{Introduction}
Multiclass prediction is the problem of classifying an object into one of several 
possible target classes. This task surfaces in
many domains. Common practical examples include document categorization, object recognition in
computer vision, and web advertisement.

The centrality of the multiclass learning problem has spurred the
development of various approaches for tackling this task. Most of these
approaches fall under the following general description: There is an instance 
domain $\cx$ and a set of possible class labels $\cy$. The goal of the learner is to
learn a mapping from instances to labels. The learner receives training examples, and outputs a predictor which belongs to some hypothesis class $\ch \subseteq \cy^\cx$, where $\cy^\cx$ is the set of all functions from $\cx$ to
$\cy$.
We study the
sample complexity of the task of learning $\ch$, namely, how many random training examples
are needed for learning an accurate predictor from $\ch$. This question has been
extensively studied and is quite well understood for the binary case
(i.e, where $|\cy|=2$). In contrast, as we shall see, existing
theory of the multiclass case is less complete.

In the first part of the paper we consider multiclass learning in the
classical PAC setting of \cite{Valiant84}. 
Since the 1970's, following Vapnik and Chervonenkis's seminal work
on binary classification \citep{VapnikCh71}, it was widely believed that excluding
trivialities, if a problem is at all learnable, then uniform
convergence holds, and the problem is also learnable by every Empirical Risk Minimizer (ERM learner). The
equivalence between learnability and uniform convergence has been
proved for binary classification and for regression problems
\citep{KearnsScSe94,BartlettLoWi1996,AlonBeCsHa97}.  Recently,
\cite{ShalevShSrSr10} have shown that in the general
setting of learning of \citet{Vapnik95},
learnability is not equivalent to uniform convergence. Moreover,
some learning problems are learnable, but not with every
ERM. In particular, this was shown for an unsupervised learning
problem in the class of stochastic convex learning problems. The
conclusion in \cite{ShalevShSrSr10} is that the conditions for
learnability in the general setting are significantly more complex
than in supervised learning.  In this work we show that even in multiclass learning,
uniform convergence is not equivalent to learnability. We find this result surprising, since multiclass
prediction is very similar to binary classification.

This result raises once more the question of determining the true sample
complexity of multiclass learning, and the optimal learning algorithm in this setting. We provide conditions under which tight characterization of the sample complexity of a multiclass hypothesis class can be provided. Specifically, we consider the important case of hypothesis classes which are invariant to renaming of class labels. We term such classes \emph{symmetric} hypothesis classes. We show that the sample complexity for symmetric classes is tightly characterized
by a known combinatorial measure called the Natarajan dimension. We 
conjecture that this result holds for non-symmetric classes as well.

We further study multiclass sample complexity in other learning models. 
Overall, we consider the following categorization of learning models:
\begin{itemize}
\item Interaction with the data source (batch vs. online protocols):
  In the batch protocol, we assume that the training data is generated
  i.i.d. by some distribution $\cd$ over $\cx \times \cy$. The goal is
  to find, with a high probability over the training samples, a
  predictor $h$ such that $\Pr_{(x,y)\sim \cd}(h(x)\ne y)$ is as small as possible. In the online protocol  we receive examples one by one, and are asked to predict the label of each given example on the fly. Our goal is to make as few prediction mistakes as possible in the worst case (see \citealt{Littlestone87}).
\item The type of feedback (full information vs. bandits): In the full
  information setting, we receive the correct label of every example. 
  In the bandit setting, the learner first sees an unlabeled
  example, and then outputs its prediction for the label. Then, a binary feedback is received, indicating only whether the prediction was correct or not, but not
  revealing the correct label in the case of a wrong guess (see for example \citealt{AuerCeFrSc03,AuerCeFi02,KakadeShTe08}).
\end{itemize}
The batch/full-information model is the standard PAC setting, while the online/full-information model is the usual online setting. The online/bandits model is the usual multiclass-bandits setting. We are not aware of a treatment of the batch/bandit model in previous works.

\subsection*{Paper Overview}
After presenting formal definitions and notations in \secref{sec:notation},
we begin our investigation of multiclass sample complexity in the classical PAC learning setting.
Previous results have provided upper and lower bounds on the sample complexity of multiclass learning in this setting when using any ERM algorithm. The lower bounds are controlled by the \emph{Natarajan dimension}, a combinatorial measure which generalizes the VC dimension for the multiclass case, while the upper bounds are controlled by the \emph{graph dimension}, which is another generalization of the VC dimension. The ratio between these two measures can be as large as $\Theta(\ln(k))$, where $k = |\cy|$ is the number of class labels. In \secref{sec:previous} we survey known results, and also present a new improvement for the upper bound in the realizable case.
All the bounds here are uniform, that is, they hold for all ERM learners. 

These uniform bounds are the departure point of our research. Our goal is to find a combinatorial measure, similar to the VC-Dimension, that characterizes the sample complexity of a given class, up to logarithmic factors, {\em independent of the number of classes}. We delve into this challenge in \secref{sec:pac}. First, we show that no uniform bound on arbitrary ERM learners can tightly characterize the sample complexity: We describe a family of concept classes for which there
exist `good' ERM learners and `bad' ERM learners, with a ratio of $\Theta(\ln(k))$ between their sample complexities. We further show that if $k$ is infinite, then
there are hypothesis classes that are learnable by some ERM learners but not by other ERM learners. Moreover, we show that for any hypothesis class, the sample complexity of the {\em worst} ERM learner in the realizable case is characterized by the graph dimension. 

These results indicate that classical concepts which are commonly used to provide upper bounds for all ERM learners of some hypothesis class, such as the growth function, cannot lead to tight sample complexity characterization for the multiclass case. We thus propose algorithmic-dependent versions of these quantities, that allow bounding the sample complexity of specific ERM learners.

We consider three cases in which we show that the true sample complexity of multiclass learning in the PAC setting is fully characterized by the Natarajan dimension. The first case includes any ERM algorithm that does not use too many class labels, in a precise sense that we define via the new notion of \emph{essential range} of an algorithm. In particular, the requirement is satisfied by any ERM learner which only predicts labels that appeared in the sample. The second case includes any ERM learner for symmetric hypothesis classes. The third case is the scenario where we have no prior knowledge on the different class labels, which we defined precisely in \secref{sec:symmetrize}.

We conjecture that the upper bound obtained for symmetric classes holds for non-symmetric classes as well. Such a result cannot be
implied by uniform convergence alone, since, by the results mentioned above,  there always exist ERM learners with a sample complexity that is higher than this conjectured upper bound. It therefore follows that a proof of our conjecture will require the derivation of new learning rules. We hope that this would lead to new insights in other statistical learning problems as well.

In \secref{sec:other} we study multiclass learnability in the online model and in the bandit model. We introduce two generalizations of the Littlestone dimension, which characterize multiclass learnability in each of these models respectively. Our bounds are tight for the realizable case.

\section{Problem Setting and Notation}\label{sec:notation}
Let $\cx$ be a space, $\cy$ a discrete space\footnote{To avoid measurability issues, we assume that $\cx$ and $\cy$ are countable.} and $\ch$ a class of functions from $\cx$ to $\cy$. Denote $k = |\cy|$ (note that $k$ can be infinite).
For a distribution $\cd$ over $\cx \times \cy$, the error of a function $f:\cx\to\cy$ with respect to $\cd$ is defined as
$\Err(f)=\Err_{\cd}(f)=\Pr_{(x,y)\sim \cd}(f(x)\ne y)$.
The best error achievable by $\ch$ on $\cd$, namely, $\Err_\cd(\ch):=\inf_{f\in\ch}\Err_{\cd}(f)$, is called the \emph{approximation error} of $\ch$ on $\cd$.

In the PAC setting, a \emph{learning algorithm} for a class $\ch$ is a function, $\ca:\cup_{n=0}^\infty (\cx\times
\cy)^n \to \cy^{\cx}$.  We denote a training sequence by $S_m =
\{(x_1,y_1),\ldots,(x_m,y_m)\}$.  An \emph{ERM learner} for class $\ch$ is a learning algorithm that for any sample $S_m$ returns a function that minimizes the empirical error relative to any other function in $\ch$. Formally, the empirical error of a function $f$ on a sample $S_m$ is 
\[
\Err_{S_m}(f)=\frac 1m|\{i \in [m] : f(x_i)\ne y_i\}|.
\]
A learning algorithm $\ca$ of class $\ch$ is an ERM learner if $\Err_{S_m}(\ca(S_m)) = \min_{f \in \ch} \Err_{S_m}(f)$.

The \emph{agnostic sample complexity} of a learning algorithm $\ca$ is the function $m_{\ca,\ch}^a$ defined as follows: For every $\epsilon,\delta>0$, $m^a_{\ca,\ch}(\epsilon,\delta)$ is the minimal integer such that for every $m\ge m^a_{\ca,\ch}(\epsilon,\delta)$ and every distribution $\cd$ on $\cx\times \cy$,
\begin{equation}\label{eq:samplecomplexity}
\Pr_{S_m\sim \cd^m}\left(\Err_{\cd}(\ca(S_m))> \Err_{\cd}(\ch)+\epsilon\right)\le\delta.
\end{equation}
Here and in subsequent definitions, we omit the subscript $\ch$ when it is clear from context. 
If there is no integer satisfying the inequality above, define
$m^a_{\ca}(\epsilon,\delta)=\infty$. $\ch$ is learnable with $\ca$ if for all $\epsilon$ and $\delta$ the agnostic sample complexity is finite. The agnostic sample complexity of a class $\ch$ is
\[
m^a_{\pac,\ch}(\epsilon,\delta)=
\inf_{\ca}m^a_{\ca,\ch}(\epsilon,\delta) ~,
\]
where the infimum is taken over all learning algorithms for $\ch$. 
The {\em agnostic ERM sample complexity} of $\ch$ is the sample complexity that can be guaranteed for any ERM learner. It is defined by
\[
m^a_{\erm,\ch}(\epsilon,\delta)=
\sup_{\ca \in \erm}m^a_{\ca,\ch}(\epsilon,\delta) ~,
\]
where the supremum is taken over all ERM learners for $\ch$. Note that always $m_{\pac}\le m_{\erm}$.

We say that a distribution
$\cd$ is \emph{realizable} by a hypothesis class $\ch$ if there exists some $f\in \ch$
such that $\Err_{\cd}(f)=0$.  The \emph{realizable sample complexity} of an algorithm
$\ca$ for a class $\ch$, denoted $m^r_\ca$, is the minimal integer such that for
every $m\ge m^r_\ca(\epsilon,\delta)$ and every distribution $\cd$ on $\cx\times
\cy$ which is realizable by $\ch$, \eqref{eq:samplecomplexity}
holds. The realizable sample complexity of a class $\ch$ is
$m^r_{\pac,\ch}(\epsilon,\delta)=\inf_{\ca}m^r_{\ca}(\epsilon,\delta)$, where the infimum is taken
over all learning algorithms for $\ch$. The realizable ERM sample complexity of a class $\ch$ is
$m^r_{\erm,\ch}(\epsilon,\delta)=\sup_{\ca\in \erm}m^r_{\ca}(\epsilon,\delta)$, where the supremum is taken over all ERM learners for $\ch$. 

Given a subset $S \subseteq \cx$, we denote
$\ch|_S=\{f|_S:f\in\ch\}$, where $f|_S$ is the restriction of $f$ to
$S$, namely, $f|_S : S \to \cy$ is such that for
all $x \in S$, $f|_S(x)=f(x)$.

\section{Uniform Sample Complexity Bounds for ERM Learners}\label{sec:previous}

We first recall some known results regarding the sample complexity of
multiclass learning.  Recall the definition of the Vapnik-Chervonenkis
dimension \citep{Vapnik95}:
\begin{definition}[VC dimension]
Let $\ch\subseteq \{0,1\}^\cx$ be a hypothesis class. A subset $S\subseteq \cx$ is \emph{shattered} by $\ch$ if $\ch|_S=\{0,1\}^S$. The \emph{VC-dimension} of $\ch$, denoted $\VC(\ch)$, is the maximal cardinality of a subset $S\subseteq \cx$ that is shattered by $\ch$.
\end{definition}
The VC-dimension, a cornerstone in statistical learning theory, characterizes the sample complexity of learning {\em binary} hypothesis classes, as the following bounds suggest.
\begin{theorem}[\citealp{Vapnik95} and \citealp{BartlettMe02}]\label{th:binary-case}
There are absolute constants $C_1,C_2>0$ such that for every $\ch\subseteq \{0,1\}^\cx$, 
\[
C_1\left(\frac{\VC(\ch)+\ln(\frac{1}{\delta})}{\epsilon}\right)\le
m^r_{\pac}(\epsilon,\delta)\le m^r_{\erm}(\epsilon,\delta)\le C_2\left(\frac{\VC(\ch)\ln(\frac{1}{\epsilon})+\ln(\frac{1}{\delta})}{\epsilon}\right),
\]
and
\[
C_1\left(\frac{\VC(\ch)+\ln(\frac{1}{\delta})}{\epsilon^2}\right) \leq 
m_{\pac}^a(\epsilon,\delta) \leq m_{\erm}^a(\epsilon,\delta) \leq C_2\left(\frac{\VC(\ch)+\ln(\frac{1}{\delta})}{\epsilon^2}\right).
\]
\end{theorem}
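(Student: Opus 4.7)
The plan is to prove the four bounds separately, pairing the lower bounds (which apply to $m^r_{\pac}$ and $m^a_{\pac}$ and therefore also to $m^r_{\erm}, m^a_{\erm}$) with matching ERM upper bounds, using two distinct styles of argument: adversarial probabilistic constructions on a shattered set for the lower bounds, and uniform-convergence arguments controlled by the VC-based growth function for the upper bounds.

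For the lower bounds, I would fix a shattered set $S=\{x_1,\ldots,x_d\}$ with $d=\VC(\ch)$ and build two families of distributions on $S \times \{0,1\}$. To obtain the $d/\epsilon$ (resp.\ $d/\epsilon^2$) contribution, put mass $1-c\epsilon$ on $x_1$ with a fixed label and spread mass $c\epsilon/(d-1)$ uniformly on $x_2,\ldots,x_d$; pick the labels of the latter uniformly among the $2^{d-1}$ binary labelings on them. Shattering guarantees each choice is realizable by some $h\in\ch$. A classical no-free-lunch / averaging argument (realizable case) or Le~Cam-type two-point argument (agnostic case, exploiting that many coordinates must be estimated with accuracy $\sqrt{1/m}$) then shows that any learner failing with probability $<\delta$ must use $\Omega\bigl((d+\ln(1/\delta))/\epsilon\bigr)$ and $\Omega\bigl((d+\ln(1/\delta))/\epsilon^{2}\bigr)$ samples respectively. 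The separate $\ln(1/\delta)/\epsilon$ and $\ln(1/\delta)/\epsilon^2$ terms come from a standard two-point distribution reduction to testing a biased coin.

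For the realizable upper bound, the main ingredient is the Sauer–Shelah lemma, $|\ch|_S| \le \sum_{i=0}^{d}\binom{m}{i} \le (em/d)^d$ for $|S|=m$, combined with the classical double-sampling (symmetrization) trick: the probability that some $h\in\ch$ is consistent with $m$ i.i.d.\ examples but has true error exceeding $\epsilon$ is at most $2(2em/d)^d\,2^{-\epsilon m/2}$. Solving for $m$ so that this bound is at most $\delta$ yields $m \le C_2(d\ln(1/\epsilon)+\ln(1/\delta))/\epsilon$. Any ERM learner on a realizable distribution returns a consistent hypothesis, so the bound transfers directly to $m^r_{\erm}$.

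For the agnostic upper bound, a direct union bound over the growth function produces the suboptimal rate $(d\ln m + \ln(1/\delta))/\epsilon^2$; removing the $\ln m$ factor is where the real work lies. I would follow the Bartlett–Mendelson route via Rademacher complexity: bound $\sup_{h\in\ch}|\Err_{S_m}(h)-\Err_{\cd}(h)|$ by twice the empirical Rademacher complexity plus a McDiarmid concentration term $O(\sqrt{\ln(1/\delta)/m})$, and then invoke Dudley's entropy integral together with Haussler's packing bound $\log N(\epsilon,\ch,L_2) = O(d\log(1/\epsilon))$ to obtain empirical Rademacher complexity $O(\sqrt{d/m})$ without a logarithmic factor. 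Applying this uniform-convergence bound to any ERM gives the claimed $C_2(\VC(\ch)+\ln(1/\delta))/\epsilon^2$ bound. The chaining step in this last paragraph is the main technical obstacle: a naive $\epsilon$-net argument cannot eliminate the logarithmic gap, so the proof essentially requires Dudley's integral (or, alternatively, a one-sided Talagrand-style argument) to match the lower bound up to constants.
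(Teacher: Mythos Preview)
The paper does not prove this theorem at all: it is stated as a known result, attributed directly to \citet{Vapnik95} and \citet{BartlettMe02}, and used as a black box (in particular, the proof of Theorem~\ref{th:multiclaas-simple-bounds} reduces the multiclass bounds to this theorem). So there is no ``paper's own proof'' to compare against.

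That said, your outline is a faithful sketch of the standard arguments behind those citations. The realizable lower bound via a shattered set with most mass on one point and the rest spread thin is the Ehrenfeucht--Haussler--Kearns--Valiant construction; the realizable upper bound via Sauer--Shelah plus double sampling is the Blumer--Ehrenfeucht--Haussler--Warmuth/Vapnik--Chervonenkis argument; and your agnostic upper bound via Rademacher complexity, Dudley chaining, and Haussler's $L_2$ packing bound is exactly the route in \citet{BartlettMe02} that removes the $\ln(1/\epsilon)$ factor. The agnostic lower bound sketch is a bit telegraphic---the clean way is Assouad/Fano on the $2^{d-1}$ labelings of the shattered set, giving the $d/\epsilon^2$ term, plus the two-point coin argument for $\ln(1/\delta)/\epsilon^2$---but the ingredients you name are the right ones. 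Nothing in your proposal is wrong; it simply supplies what the paper deliberately omits.
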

One of the important implications of this result is that in binary classification, \emph{all} ERM learners are as good, up to a multiplicative factor of $\ln(1/\epsilon)$.

It is natural to seek a generalization of the VC-dimension to
hypothesis classes of non-binary functions. We recall two generalizations, both introduced by \cite{Natarajan89b}. In
both generalizations, shattering of a set $S$ is redefined by requiring that for any partition of $S$ into $T$ and $S \setminus T$, there exists a $g \in \ch$ whose behavior on $T$ differs from its behavior on $S \setminus T$.
The two definitions are distinguished by their definition of ``different behavior''.
\begin{definition}[Graph dimension]
Let $\ch\subseteq \cy^\cx$ be a hypothesis class and let $S\subseteq \cx$. We say that $\ch$ \emph{G-shatters} $S$ if there exists an $f:S\rightarrow \cy$ such that for every $T\subseteq S$ there is a $g\in \ch$ such that
\[
\forall x\in T,\: g(x)=f(x),\text{ and \:}\forall x\in S\setminus T,\: g(x)\ne f(x).
\]
The \emph{graph dimension} of $\ch$, denoted $d_G(\ch)$, is the maximal cardinality of a set that is G-shattered by $\ch$. 
\end{definition}

\begin{definition}[Natarajan dimension]
Let $\ch\subseteq \cy^\cx$ be a hypothesis class and let $S\subseteq \cx$. 
We say that $\ch$ \emph{N-shatters} $S$ if there exist $f_1,f_2: S \rightarrow \cy$ such that $\forall y\in S,\; f_1(y)\ne f_2(y)$, and for every $T\subseteq S$ there is a $g\in \ch$ such that
\[
\forall x\in T,\: g(x)=f_1(x),\text{ and \:}\forall x\in S\setminus T,\: g(x)=f_2(x).
\]
The \emph{Natarajan dimension} of $\ch$, denoted $d_N(\ch)$, is the maximal cardinality of a set that is N-shattered by $\ch$.
\end{definition}

Both of these dimensions coincide with the VC-dimension for
$k=2$. Note also that we always have $d_N\le d_G$.
By reductions to and from the binary case, similarly to \cite{Natarajan89b} and \cite{Ben-DavidCeHaLo95} one can show the following result: 
\begin{theorem}\label{th:multiclaas-simple-bounds}
For the constants $C_1,C_2$ from Theorem \ref{th:binary-case}, for every $\ch\subseteq \cy^\cx$ we have
\[
C_1\left(\frac{d_N(\ch)+\ln(\frac{1}{\delta})}{\epsilon}\right)\le
m^r_{\pac}(\epsilon,\delta)\le m^r_{\erm}(\epsilon,\delta)\le
C_2\left(\frac{d_G(\ch)\ln(\frac{1}{\epsilon})+\ln(\frac{1}{\delta})}{\epsilon}\right),
\]
and 
\[%\label{eq:agnostic-upperlower}
C_1\left(\frac{d_N(\ch)+\ln(\frac{1}{\delta})}{\epsilon^2}\right)
 \leq m_{\pac}^a(\epsilon,\delta) \leq m_{\erm}^a(\epsilon,\delta) \leq C_2\left(\frac{d_G(\ch)+\ln(\frac{1}{\delta})}{\epsilon^2}\right).
\]
\end{theorem}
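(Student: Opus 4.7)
The plan is to prove both directions by reducing to the binary classification bounds of Theorem~\ref{th:binary-case}. The left inequality (Natarajan lower bound) comes from embedding a hard binary problem into the multiclass problem via an $N$-shattered set, and the right inequality (graph upper bound) comes from embedding the multiclass problem into a binary problem over an enlarged domain whose $\VC$-dimension equals $d_G(\ch)$.

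For the upper bound, I would introduce the binary class
\[
\ch^G \;=\; \{\, h^G : h \in \ch \,\} \subseteq \{0,1\}^{\cx\times\cy}, \qquad h^G(x,y) := \onefunc[h(x)=y].
\]
The first key step is the identity $\VC(\ch^G)=d_G(\ch)$: given a set $S\subseteq\cx$ that is $G$-shattered by $\ch$ with witness $f\colon S\to\cy$, the set $\{(x,f(x)):x\in S\}$ is $\VC$-shattered by $\ch^G$ (the condition $g(x)=f(x)$ versus $g(x)\ne f(x)$ is exactly $h^G(x,f(x))\in\{1,0\}$); conversely, from any $\VC$-shattered set $\{(x_i,y_i)\}\subseteq\cx\times\cy$ one recovers $S=\{x_i\}$ (the $x_i$ must be distinct, since two identical $x$'s force the same value under $h^G$) and the labelling $f(x_i):=y_i$ witnesses $G$-shattering. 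The second step is to turn the learning problem for $\ch$ into a learning problem for $\ch^G$: any distribution $\cd$ on $\cx\times\cy$ induces the distribution of $((x,y),1)$ on $(\cx\times\cy)\times\{0,1\}$, and under this correspondence $\Err_\cd(h)=\Pr[h^G(x,y)\ne 1]$ and, for a sample, $\Err_{S_m}(h)=\Err_{S_m'}(h^G)$ where $S_m'$ is the induced sample. Thus every ERM learner for $\ch$ pulls back to an ERM learner for $\ch^G$; applying Theorem~\ref{th:binary-case} to $\ch^G$ gives the claimed $d_G(\ch)$-based upper bounds in both the realizable and agnostic cases.

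For the lower bound, I would let $S=\{x_1,\dots,x_d\}$ be $N$-shattered by $\ch$ with witnesses $f_1,f_2$, so for every $b\in\{0,1\}^d$ there is $g_b\in\ch$ with $g_b(x_i)=f_{b_i}(x_i)$ (writing $f_0,f_1$ in place of $f_2,f_1$ after relabelling). Consider only distributions $\cd$ supported on $\{(x_i,f_1(x_i)),(x_i,f_2(x_i))\}_{i=1}^d$. Because $f_1(x_i)\ne f_2(x_i)$, identifying the label $f_1(x_i)$ with $1$ and $f_2(x_i)$ with $0$ yields a bijection with binary distributions on $S$, and the subclass $\{g_b:b\in\{0,1\}^d\}\subseteq\ch$ corresponds to the full binary cube $\{0,1\}^S$. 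On this subfamily the multiclass problem is literally a binary classification problem on a $\VC$-shattered set of size $d=d_N(\ch)$, so the lower bound of Theorem~\ref{th:binary-case} applies (any multiclass learner becomes a binary learner at no cost by the same identification). This yields the realizable and agnostic lower bounds involving $d_N(\ch)$.

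The main obstacle I expect is purely bookkeeping rather than conceptual: verifying that the reductions are faithful at the level of ERM learners, sample complexities, and the $\epsilon,\delta$ constants, so that the constants $C_1,C_2$ of Theorem~\ref{th:binary-case} indeed transfer without loss. In particular, one must check that an ERM for $\ch^G$ restricted to the image of the reduction map extends to a full ERM on $\ch^G$ (or rather that the bounds from the binary case depend only on the distribution and class, not on the specific ERM), and, for the lower bound, that restricting attention to a subclass of $\ch$ and to a structured family of distributions still suffices to invoke the binary lower bound. These are the same reductions used by~\cite{Natarajan89b} and~\cite{Ben-DavidCeHaLo95}, so the resulting bounds match those in the theorem.
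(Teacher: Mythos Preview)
Your proposal is correct and follows essentially the same approach as the paper: the upper bound via the binary class $\ch^G=\{h^G:h\in\ch\}$ with $h^G(x,y)=\onefunc[h(x)=y]$ (the paper calls it $\bar\ch$) together with $\VC(\ch^G)=d_G(\ch)$, and the lower bound via an $N$-shattered set identifying the problem with the full binary cube $\{0,1\}^{[d]}$. The bookkeeping concern you flag---extending the induced learner on the image of the reduction to a bona fide ERM on $\ch^G$---is exactly the point the paper handles by noting that the induced learner is consistent on the relevant samples and hence can be extended to an ERM, so your anticipated obstacle is real but routine.
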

\begin{proof} (sketch)
For the lower bound, let $\ch\subseteq \cy^\cx$ be a hypothesis class of Natarajan dimension $d$ and  Let $\ch_d:=\{0,1\}^{[d]}$. We claim that $m^r_{\pac,\ch_d}\le m^r_{\pac,\ch}$, and similarly for the agnostic sample complexity, so the lower bounds are obtained by Theorem \ref{th:binary-case}. Let $\ca$ be a learning algorithm for $\ch$. Consider the learning algorithm, $\bar \ca$, for $\ch_d$ defined as follows. Let $S=\{s_1,\ldots,s_d\}\subseteq X$ be a set and let $f_0,f_1$ be functions that witness the $N$-shattering of $\ch$. Given a sample $((x_i,y_i))_{i=1}^m \subseteq [d]\times \{0,1\}$, let $g=\ca((s_{x_i},f_{y_i}(s_{x_i}))_{i=1}^m)$. $\bar \ca$ returns $f:[d] \rightarrow \{0,1\}$ such that $f(i)=1$ if and only if $g(s_i)=f_1(s_i)$. It is not hard to see that $m^r_{\bar \ca,\ch_d}\le m^r_{\ca,\ch}$, thus $m^r_{\pac,\ch_d}\le m^r_{\pac,\ch}$ and similarly for the agnostic case.

For the upper bound, let $\ch\subseteq \cy^\cx$ be a hypothesis class of graph dimension $d$. For every $f\in \ch$ define $\bar f:\cx\times \cy\to \{0,1\}$ by setting $\bar f(x,y)=1$ if and only if $f(x)=y$ and let $\bar\ch =\{\bar f:f\in\ch\}$. It is not hard to see that $\VC(\bar \ch)=d_G(\ch)$.
Let $\ca$ be an ERM algorithm for $\ch$. Let $\bar \ca$ be an ERM algorithm for $\bar\ch$ such that for a sample $(((x_i,z_i),y_i))_{i=1}^m \subseteq \cx\times \cy \times \{0,1\}$, 
if for all $i$, $y_i = 1$, $\bar \ca$ returns $\bar{f}$, where $f = \ca((x_i,z_i)_{i=1}^m)$. It is easy to check that $\bar \ca$ is consistent and therefore can be extended to an ERM learner for $\bar \ch$, and that $m^r_{\ca,\ch} \leq m^r_{\bar \ca,\bar \ch}$. Thus $m^r_{\erm,\ch} \leq m^r_{\erm,\bar \ch}$. The analogous inequalities hold for the agnostic sample complexity as well. Thus the desired upper bounds follow from \thmref{th:binary-case}.
\end{proof}

This theorem shows that the finiteness
of the Natarajan dimension is a necessary condition for learnability, and the
finiteness of the graph dimension is a sufficient condition for
learnability. In \citet{Ben-DavidCeHaLo95} it was proved that for every hypotheses class $\ch\subseteq \cy^\cx$,
\begin{equation}\label{eq:ben-david}
d_N(\ch)\le d_G(\ch)\le 4.67\log_2(k)d_N(\ch) ~.
\end{equation}
It follows that if $k < \infty$ then the finiteness of the Natarajan
dimension is both a necessary and a sufficient condition for
learnability.\footnote{The result of \citet{Ben-DavidCeHaLo95} in fact holds also for a rich family of generalizations of the VC dimension, of which the Graph dimension is one example.} Incorporating \eqref{eq:ben-david} into \thmref{th:multiclaas-simple-bounds}, it can be seen that the Natarajan dimension,
as well as the graph dimension, characterize the sample complexity of
$\ch\subseteq \cy^\cx$ up to a multiplicative factor of
$O(\ln(k)\ln(\frac{1}{\epsilon}))$. Precisely, the following result can be derived:
\begin{theorem}\label{th:multiclaas-bounds}
There are constants $C_1,C_2$ such that, for every $\ch\subseteq \cy^\cx$,
\[
C_1\left(\frac{d_N(\ch)+\ln(\frac{1}{\delta})}{\epsilon}\right)\le
m^r_{\pac}(\epsilon,\delta)\le m^r_{\erm}(\epsilon,\delta)\le C_2\left(\frac{d_N(\ch)\ln(k)\cdot \ln(\frac{1}{\epsilon})+\ln(\frac{1}{\delta})}{\epsilon}\right),
\]
and
\[
C_1\left(\frac{d_N(\ch)+\ln(\frac{1}{\delta})}{\epsilon^2}\right)
 \leq m_{\pac}^a(\epsilon,\delta) \leq m_{\erm}^a(\epsilon,\delta) \leq C_2\left(\frac{d_N(\ch)\ln(k)+\ln(\frac{1}{\delta})}{\epsilon^2}\right).
\]
\end{theorem}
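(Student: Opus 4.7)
The plan is that this theorem is a direct corollary of \thmref{th:multiclaas-simple-bounds} combined with the Ben-David--Cesa-Bianchi--Haussler--Long inequality \eqref{eq:ben-david}. Nothing new has to be proved; the argument is just a substitution, and the only thing to watch is the regime $k = \infty$.

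For the lower bounds, I would simply copy the lower bounds of \thmref{th:multiclaas-simple-bounds} verbatim, since they are already phrased in terms of $d_N(\ch)$ and hold for the chosen absolute constant $C_1$. No modification is needed.

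For the upper bounds, I would start from the upper bounds in \thmref{th:multiclaas-simple-bounds}, namely
\[
m^r_{\erm}(\epsilon,\delta) \le C_2'\,\frac{d_G(\ch)\ln(1/\epsilon)+\ln(1/\delta)}{\epsilon}
\quad\text{and}\quad
m^a_{\erm}(\epsilon,\delta) \le C_2'\,\frac{d_G(\ch)+\ln(1/\delta)}{\epsilon^2},
\]
where $C_2'$ is the constant supplied by \thmref{th:multiclaas-simple-bounds}. Plugging in $d_G(\ch) \le 4.67\,\log_2(k)\,d_N(\ch)$ from \eqref{eq:ben-david} and absorbing the factor $4.67$ into the constant yields
\[
m^r_{\erm}(\epsilon,\delta) \le C_2\,\frac{d_N(\ch)\ln(k)\ln(1/\epsilon)+\ln(1/\delta)}{\epsilon},
\qquad
m^a_{\erm}(\epsilon,\delta) \le C_2\,\frac{d_N(\ch)\ln(k)+\ln(1/\delta)}{\epsilon^2},
\]
for a suitably enlarged absolute constant $C_2$, as required. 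Since $m^a_{\pac}\le m^a_{\erm}$ and $m^r_{\pac}\le m^r_{\erm}$, the middle inequalities in the displayed bounds of the theorem are trivial.

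The only mild subtlety is the case $k=\infty$, where the factor $\ln(k)$ on the right-hand side is infinite and so the upper bounds are vacuous; but the statement remains formally correct, and the interesting content of the theorem is exactly the finite-$k$ regime, which is the regime in which \eqref{eq:ben-david} gives a nontrivial bound. There is no genuine obstacle: both ingredients (\thmref{th:multiclaas-simple-bounds} and \eqref{eq:ben-david}) are already established, and the proof reduces to choosing the constants so that they absorb the $4.67$ and any constants hidden in the restatement.
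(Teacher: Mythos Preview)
Your proposal is correct and matches the paper's own derivation exactly: the paper does not give a separate proof but simply states that the result follows by incorporating \eqref{eq:ben-david} into \thmref{th:multiclaas-simple-bounds}, which is precisely the substitution you carry out. Your remark about the $k=\infty$ case is a reasonable aside, though the paper does not comment on it.
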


\subsection{An Improved Upper Bound for the Realizable Case}\label{sec:realizable}
The following theorem provides a sample complexity upper bound which provides a tighter dependence on $\epsilon$. %The bound is tighter than \thmref{th:multiclaas-bounds} for $\epsilon = o(d_N(\ch)^{-\ln(k)})$.

\begin{theorem}\label{th:multiclass-bounds-our}
For every concept class $\ch\subseteq \cy^\cx$,
\[
m^r_{\erm}(\epsilon,\delta)=O\left(\frac{d_N(\ch)\left(\ln(\frac{1}{\epsilon})+\ln(k)+\ln(d_N(\ch))\right)+\ln(\frac{1}{\delta})}{\epsilon}\right). 
\]
\end{theorem}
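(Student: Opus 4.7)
The plan is to improve \thmref{th:multiclaas-simple-bounds} by bypassing the reduction through the graph dimension. That reduction costs a multiplicative $\ln k$ factor via \eqref{eq:ben-david}, which then combines with the $\ln(1/\epsilon)$ from the binary Sauer-Shelah growth function to yield the undesired $\ln k \cdot \ln(1/\epsilon)$ factor. Instead, I would apply a standard realizable-case symmetrization argument directly to $\ch$, while bounding the growth function $|\ch|_S|$ using the Natarajan dimension. The key qualitative gain is that this converts the $\ln k$ overhead from multiplicative with $\ln(1/\epsilon)$ into additive.

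The first ingredient is the Natarajan generalization of the Sauer-Shelah lemma, due to \cite{Natarajan89b}: if $d_N(\ch) = d$, then for every $S \subseteq \cx$ with $|S|=m$,
\[
|\ch|_S| \;\le\; (m k^2)^{d}.
\]
This bound is proved by a double induction on $m$ and $d$ via a shifting argument that, given a distinguished point, reduces the Natarajan dimension by fixing a pair of labels. The second ingredient is the classical realizable-case double-sampling argument: for every $m \ge 8/\epsilon$ and every distribution $\cd$ realizable by $\ch$, drawing an i.i.d.\ ghost sample $S'_m$ and applying the random-swap symmetrization with a union bound over the restrictions $\ch|_{S_m \cup S'_m}$ gives
\[
\Pr_{S_m}\bigl[\,\exists h \in \ch: \Err_\cd(h) > \epsilon,\; \Err_{S_m}(h) = 0\,\bigr]
\;\le\; 2 \cdot |\ch|_{S_m \cup S'_m}| \cdot 2^{-\epsilon m/2}
\;\le\; 2\,(2 m k^2)^{d_N(\ch)}\, 2^{-\epsilon m/2}.
\]

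Setting the right-hand side to $\delta$ reduces the theorem to solving the transcendental inequality $d_N \ln(2 m k^2) + \ln(2/\delta) \le (\epsilon \ln 2)\, m/2$ for $m$. Applying the standard inversion lemma (that $m \ge 2a\ln(2a) + 2b$ implies $m \ge a \ln m + b$) with $a = \Theta(d_N/\epsilon)$ and $b$ absorbing the remaining constants yields $m = O\bigl((d_N(\ln(1/\epsilon) + \ln k + \ln d_N) + \ln(1/\delta))/\epsilon\bigr)$, which is exactly the claimed bound: the $\ln d_N$ term arises from inverting the $\ln m$ dependence, and the $\ln(1/\epsilon)$ from the $1/\epsilon$ scaling of $a$. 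Because any ERM output on a realizable sample is consistent with the sample, the failure event above covers every ERM learner simultaneously, so the bound transfers to $m^r_{\erm,\ch}(\epsilon,\delta)$.

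The step I expect to be the main obstacle is the first ingredient: one needs the $k$-dependence of the Natarajan growth function to be $k^{O(d_N)}$ for the final bound to be additive in $\ln k$ with coefficient proportional to $d_N$. This requires the correct shifting argument on $\ch$ itself, rather than a naive reduction through the binary ``graph'' class $\bar\ch$ of \thmref{th:multiclaas-simple-bounds} (which would bring back $d_G$ and nullify the improvement). The double-sampling step and the inversion are standard and should not present difficulties beyond careful bookkeeping of constants.
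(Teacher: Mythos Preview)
Your proposal is correct and is essentially the paper's own argument: the paper derives \thmref{th:multiclass-bounds-our} from \thmref{thm:restricted_range} by observing that the essential range of any ERM is at most $k$, and the proof of \thmref{thm:restricted_range} combines exactly the two ingredients you name --- Natarajan's growth bound (\lemref{lemma:growth-function}, $|\ch|_S|\le |S|^{d_N}|\cy|^{2d_N}$) and the double-sampling argument (\lemref{lem:doublesample}). The only cosmetic difference is that the paper routes the argument through its algorithm-dependent growth function $\Pi^r_\ca(m)$, a layer of abstraction introduced for later refinements but not needed for this particular theorem; your direct uniform-convergence phrasing is equivalent since $F_\ca(S)\subseteq\ch|_S$.
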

The proof of this theorem is immediate given \thmref{thm:restricted_range}, which is provided in  \secref{sec:pac}.
We give the short proof of this theorem thereafter. 
While a proof for the Theorem can be established  by a simple
adaptation of previous techniques, we find it valuable to present this result
here, as we could not find it in the literature.

\section{PAC Sample Complexity with ERM Learners}\label{sec:pac}
In this section we study the sample complexity of multiclass ERM learners in the PAC setting. First, we show that unlike the binary case, in the multiclass setting different ERM learners can have very different sample complexities.

\begin{example} [A Large Gap Between ERM Learners] \label{ex:baderm}
Let $\cx$ be any finite or countable domain set. Let $\cp_{f}(\cx)$ denote the collection of finite and co-finite subsets $A\subseteq \cx$. We will take the label space to be $\cp_f(\cx)$ together with a special label, denoted by $*$ (I.e.
$\cy=\cp_{f}(\cx)\cup\{*\}$).
For every $A\in \cp_f(\cx)$, define $f_{A}:\cx\to \cy$ by
\[
f_A(x)=\begin{cases}
A & \text{if $x\in A$}\\
* & \text{otherwise},
\end{cases}
\]
and consider the hypothesis class $\ch_\cx=\{f_A:A \in \cp_f(\cx)\}$. It is not hard to see that $d_N(\ch_\cx) = 1$.
On the other hand, if $\cx$ is finite then $\cx$ is G-shattered using the function $f_\emptyset$, therefore $d_G(\ch_\cx) = |\cx|$. If $\cx$ is infinite, then every finite subset of $\cx$ is G-shattered, thus $d_G(\ch_\cx) = \infty$.

Consider two ERM algorithms for $\ch_\cx$, $\ca_\bad$ and $\ca_\good$, which satisfy the following properties. 
For $\ca_\bad$, whenever a sample of the form $S_m=\{(x_1,*),\ldots,(x_m,*)\}$ is observed,
$\ca_\bad$ returns $f_{\{x_1,\ldots,x_m\}^c}$. Intuitively, while $\ca_\bad$ selects a hypothesis that minimizes the empirical error, its choice for $S_m$ seems to be sub-optimal. We will show later, based on \thmref{th:ERM-Bounds}, that the sample complexity of ${\ca}_{\bad}$ is $\Omega\left(\frac{|\cx|+\ln(\frac{1}{\delta})}{\epsilon}\right)$.

For $\ca_\good$, we require that the algorithm only ever returns either $f_\emptyset$, or a hypothesis $A$ such that the label $A$ appeared in the sample---One can easily verify that there exists an ERM algorithm that satisfies this condition. Specifically, this means that for the sample $S_m=\{(x_1,*),\ldots,(x_m,*)\}$, $\ca_\good$ necessarily returns $f_\emptyset$. We have the following guarantee for $\ca_\good$:
\begin{claim} $m^r_{\ca_\good,\ch_\cx}(\epsilon,\delta) \leq \frac{1}{\epsilon}\ln{\frac{1}{\delta}}$, and  $m^a_{\ca_\good,\ch_\cx}(\epsilon,\delta) \leq \frac{1}{\epsilon^2}\ln(\frac{1}{\epsilon})\ln{\frac{1}{\delta}}$.
\end{claim}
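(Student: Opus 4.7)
The plan is to treat the realizable and agnostic bounds separately, in both cases first identifying the very small set of hypotheses that $\ca_\good$ can ever output.

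\textbf{Realizable case.} Let $\cd$ be realized by some $f_{A^*}\in\ch_\cx$. Then the sample labels lie almost surely in $\{A^*,*\}$, so the only non-$*$ label that can appear as some $y_i$ is $A^*$. By the defining property of $\ca_\good$ its output lies in $\{f_\emptyset,f_{A^*}\}$. Moreover, if at least one $y_i$ equals $A^*$, the ERM requirement rules out $f_\emptyset$ (which predicts $*$ everywhere), forcing $\ca_\good(S_m)=f_{A^*}$, whose true error is $0$. Hence $\ca_\good$ can err by more than $\epsilon$ only when every $y_i=*$, i.e., no $x_i$ lies in $A^*$. Since $\Err(f_\emptyset)=\Pr_{(x,y)\sim\cd}[x\in A^*]$, this failure event occurs with probability at most $(1-\epsilon)^m\le e^{-\epsilon m}$ whenever $\Err(f_\emptyset)>\epsilon$. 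Setting $e^{-\epsilon m}\le\delta$ gives the claimed $m\ge \frac{1}{\epsilon}\ln\frac{1}{\delta}$.

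\textbf{Agnostic case.} I would realize $\ca_\good$ as a sample compression scheme of size at most $1$ and invoke the standard generalization bound for such schemes. Define $\kappa(S_m)=\emptyset$ if $\ca_\good(S_m)=f_\emptyset$, and otherwise set $\kappa(S_m)=(x_i,y_i)$ for any $i$ with $y_i=A$, where $\ca_\good(S_m)=f_A$; define $\rho(\emptyset)=f_\emptyset$ and $\rho((x,A))=f_A$ for $A\ne *$. By construction $\rho(\kappa(S_m))=\ca_\good(S_m)$ and $|\kappa(S_m)|\le 1$. The standard agnostic sample-compression bound (Graepel--Herbrich--Shawe-Taylor and related) then yields, with probability at least $1-\delta/2$,
\[
\Err_\cd(\ca_\good(S_m)) \le \Err_{S_m}(\ca_\good(S_m)) + O\!\left(\sqrt{\frac{\ln m+\ln(1/\delta)}{m}}\right).
\]
Combining this with a single-hypothesis Hoeffding bound for a (near-)minimizer $f^*\in\ch_\cx$ of $\Err_\cd$, and using the ERM inequality $\Err_{S_m}(\ca_\good(S_m))\le \Err_{S_m}(f^*)$, I would obtain $\Err_\cd(\ca_\good(S_m))\le \Err_\cd(\ch_\cx)+O(\sqrt{(\ln m+\ln(1/\delta))/m})$ with probability $\ge 1-\delta$. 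Requiring the right-hand side to be at most $\epsilon$ and solving the resulting fixed-point inequality gives $m=O((\ln(1/\epsilon)+\ln(1/\delta))/\epsilon^2)$, which is bounded by $\frac{1}{\epsilon^2}\ln(1/\epsilon)\ln(1/\delta)$ up to an absolute constant.

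\textbf{Main obstacle.} The only subtle point is the sample-dependent nature of the set of hypotheses $\ca_\good$ may output: a naive union bound over $\{f_A:A\in\cp_f(\cx)\}$ is hopeless, since this family is infinite even though the Natarajan dimension is only $1$. The compression-scheme framework is designed precisely to absorb this sample-dependent bias at the cost of a $\sqrt{\ln m / m}$ overhead. If one preferred a self-contained argument, the alternative would be to (i) control $f_\emptyset$ by a single-hypothesis Hoeffding inequality and (ii) for outputs $f_A$ with $A\ne\emptyset$ appearing in $S_m$, apply symmetrization with a ghost sample and union-bound over the at most $m$ distinct non-$*$ labels present in $S_m$; I would expect this to reproduce the same rate.
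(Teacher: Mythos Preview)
Your realizable-case argument is essentially identical to the paper's.

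For the agnostic case, the paper takes a different route: it does not argue directly but defers to its general machinery, namely Corollary~\ref{cor:only_observed_labels}, which in turn rests on Theorem~\ref{thm:restricted_range} (bounding the algorithm-dependent growth function via the essential range and Natarajan's counting lemma, then applying double sampling). Since $d_N(\ch_\cx)=1$ and $\ca_\good$'s essential range is at most $2m+1$, that corollary yields the same $O((\ln(1/\epsilon)+\ln(1/\delta))/\epsilon^2)$ rate you obtain.

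Your sample-compression argument is a genuinely different and more elementary approach: it bypasses Natarajan's lemma and the essential-range framework entirely. The tradeoff is that your compression scheme exploits the very specific structure of $\ch_\cx$ (each non-trivial hypothesis is determined by a single labeled point), so the argument is tailored to this example, whereas the paper's machinery applies uniformly to any ERM learner that only emits observed labels. Your alternative plan~(ii) --- symmetrization with a ghost sample plus a union bound over the at most $m$ non-$*$ labels present --- is in fact close in spirit to what the paper's double-sampling lemma does in general.

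One minor technical remark: standard agnostic compression bounds control the empirical error on $S_m\setminus\kappa(S_m)$ rather than on all of $S_m$; since $|\kappa(S_m)|\le 1$ the discrepancy is $O(1/m)$ and does not affect the rate.
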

\begin{proof}
We prove the bound for the realizable case. The bound for the agnostic case will be immediate using Cor.\ \ref{cor:only_observed_labels}, which we prove later. 

Let $\cd$ be a distribution over $\cx \times \cy$ and suppose that the correct labeling for $\cd$ is
$f_A$. Let $m$ be the size of the sample. For any sample, $\ca_{\good}$ returns
either $f_\emptyset$ or $f_A$. If it returns $f_A$ then its error on $\cd$ is zero. On the other hand, $\Err_\cd(f_\emptyset)=\Pr_{(X,Y)\sim \cd} (X \in A)$. Thus, $\ca_{\good}$ returns a hypothesis with error $\epsilon$ or more only if $\Pr_{(X,Y)\sim \cd} (X \in A)\ge\epsilon$ and all the $m$ examples in the sample
are from $A^c$. Assume $m\ge \frac{1}{\epsilon}\ln(\frac{1}{\delta})$, then the probability of the latter event is $(P(A^c))^m\le(1-\epsilon)^m\le
e^{-\epsilon m} \leq \delta$.
\end{proof}
\end{example}

This example shows that the gap between two different ERM learners can be as large as the gap between the Natarajan dimension and the graph dimension. By considering $\ch_\cx$ with an infinite $\cx$, we conclude the following corollary.
\begin{corollary}
There exist sets $\cx$, $\cy$ and a hypothesis class $\ch\subseteq \cy^\cx$, such that $\ch$ is learnable by some ERM learner but is not learnable by some other ERM learner.
\end{corollary}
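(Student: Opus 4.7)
The plan is to instantiate Example~\ref{ex:baderm} with a countably infinite domain $\cx$ (for concreteness, $\cx=\mathbb{N}$), and to take $\cy=\cp_{f}(\cx)\cup\{*\}$ and $\ch=\ch_\cx$ together with the two ERM algorithms $\ca_\good$ and $\ca_\bad$ defined there. The Claim already established inside the example shows that $m^r_{\ca_\good,\ch_\cx}(\epsilon,\delta)\le \frac{1}{\epsilon}\ln\frac{1}{\delta}$, and this bound is entirely independent of $|\cx|$, so $\ch_\cx$ is learnable by $\ca_\good$ even when $\cx$ is infinite. The whole remaining task is to exhibit, in this infinite-domain instance, a single realizable distribution on which $\ca_\bad$ fails to converge.

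For the non-learnability of $\ca_\bad$, I would use the target hypothesis $f_\emptyset$, which labels every $x\in\cx$ by $*$ and is clearly in $\ch_\cx$. Every sample then has the form $S_m=\{(x_1,*),\ldots,(x_m,*)\}$, and by the specification of $\ca_\bad$ the output is $\ca_\bad(S_m)=f_{\{x_1,\ldots,x_m\}^c}$. This is a legitimate empirical risk minimizer, since $\{x_1,\ldots,x_m\}^c$ is co-finite and hence belongs to $\cp_{f}(\cx)$, and $f_{\{x_1,\ldots,x_m\}^c}(x_i)=*$ for every $i$, giving zero empirical error. From the definition of $f_A$ one checks that $f_A(x)\ne *$ exactly when $x\in A$, so the error of this output under any marginal $\cd_\cx$ on $\cx$ equals $\cd_\cx(\{x_1,\ldots,x_m\}^c)=1-\cd_\cx(\{x_1,\ldots,x_m\})$.

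The final step is to pick the distribution adversarially for each fixed sample size $m$: take $\cd_\cx$ to be uniform on $\{1,\ldots,N\}\subset\cx$ with $N>m/(1-\epsilon)$. Then $\cd_\cx(\{x_1,\ldots,x_m\})\le m/N<1-\epsilon$ with probability one, so the error of $\ca_\bad$'s output exceeds $\epsilon$ deterministically, yielding $m^r_{\ca_\bad,\ch_\cx}(\epsilon,\delta)=\infty$ for every $\epsilon,\delta\in(0,1)$. Combined with the learnability of $\ch_\cx$ by $\ca_\good$, this proves the corollary. I do not expect a real obstacle; the only subtlety is verifying that the choice $\ca_\bad$ makes on all-$*$ samples is actually an empirical minimizer in $\ch_\cx$, which is immediate from the definitions of $\cp_{f}(\cx)$ and $f_A$.
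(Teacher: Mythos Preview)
Your proposal is correct and takes essentially the same instantiation as the paper (the class $\ch_\cx$ from Example~\ref{ex:baderm} with infinite $\cx$, together with $\ca_\good$ and $\ca_\bad$), and relies on the Claim in that example for the learnability of $\ca_\good$.

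The one genuine difference is in how you establish non-learnability of $\ca_\bad$. The paper does not argue this directly; it defers to Theorem~\ref{th:ERM-Bounds}, which shows in general that the worst ERM has realizable sample complexity $\Omega\!\left(\frac{d_G(\ch)+\ln(1/\delta)}{\epsilon}\right)$, and then observes that the specific $\ca_\bad$ of the example matches the ``bad ERM'' construction in that proof, so that $m^r_{\ca_\bad}=\infty$ follows from $d_G(\ch_\cx)=\infty$. Your argument is more elementary and self-contained: you fix the target $f_\emptyset$, note that on any all-$*$ sample the output $f_{\{x_1,\ldots,x_m\}^c}$ has error exactly $\cd_\cx(\{x_1,\ldots,x_m\}^c)$, and then for each $m$ choose the uniform distribution on a set of size $N>m/(1-\epsilon)$ to force the error above $\epsilon$ with probability one. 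This bypasses the general machinery of Theorem~\ref{th:ERM-Bounds} (Chernoff bounds, the two-regime argument) at the cost of being specific to $\ch_\cx$; conversely, the paper's route yields the quantitative lower bound $\Omega(|\cx|/\epsilon)$ uniformly for finite $\cx$ as well, which your direct argument does not attempt.
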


In Example \ref{ex:baderm}, the bad ERM indeed requires as many examples as the graph dimension, while the good ERM requires only as many as the Natarajan dimension. Do such a `bad' ERM and a `good' ERM always exist? 
Our next result answers the question for the `bad' ERM in the affirmative. Indeed, the graph dimension determines the learnability of $\ch$ using the {\em worst} ERM learner.

\begin{theorem}\label{th:ERM-Bounds} There are constants $C_1,C_2>0$ such that the following holds. 
For every hypothesis class $\ch\subseteq \cy^\cx$ of Natarajan dimension $\ge 2$, 
 there exists an ERM learner $\ca_\bad$ such that
 for every $\epsilon<\frac{1}{12}$ and $\delta<\frac{1}{100}$, 
\[
C_1\left(\frac{d_G(\ch)+\ln(\frac{1}{\delta})}{\epsilon}\right)\le
m^r_{\ca_\bad}(\epsilon,\delta)\le m^r_{\erm}(\epsilon,\delta)\le C_2\left(\frac{d_G(\ch)\ln(\frac{1}{\epsilon})+\ln(\frac{1}{\delta})}{\epsilon}\right).
\]
\end{theorem}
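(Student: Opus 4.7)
The upper bound on $m^r_{\erm}$ is immediate from \thmref{th:multiclaas-simple-bounds}, so the work is entirely in the lower bound. Moreover, the term $\Omega(\ln(1/\delta)/\epsilon)$ comes for free: since $m^r_{\ca_\bad} \ge m^r_{\pac,\ch}$ for any learner, the lower bound in \thmref{th:multiclaas-simple-bounds} already gives $m^r_{\pac,\ch} \ge C_1(d_N(\ch) + \ln(1/\delta))/\epsilon \ge C_1 \ln(1/\delta)/\epsilon$. Thus the only substantive task is to produce an ERM $\ca_\bad$ on which some realizable distribution forces $\Omega(d_G(\ch)/\epsilon)$ samples. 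My strategy is to extract a G-shattered set from $\ch$ and then mimic, on this set, the classical bad-ERM construction behind the binary VC lower bound.

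Concretely, write $d := d_G(\ch)$ and fix a G-shattered $S = \{s_1,\ldots,s_d\} \subseteq \cx$ with witness $f_0 : S \to \cy$; for each $T \subseteq S$ fix a hypothesis $g_T \in \ch$ satisfying $g_T|_T = f_0|_T$ and $g_T(x) \neq f_0(x)$ for $x \in S \setminus T$. Define $\ca_\bad$ on a sample $S_m$ by the rule: if every $(x_i,y_i)$ lies in the graph $\{(s,f_0(s)) : s \in S\}$, output $g_T$ where $T := \{x_i : i \in [m]\}$; otherwise output some fixed ERM hypothesis. Since $g_T$ is zero-error on any such sample, $\ca_\bad$ is a valid ERM, and on realizable distributions concentrated on $\{(s,f_0(s)) : s \in S\}$ the error of its output equals exactly the mass of the unseen points of $S$. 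This is the multiclass analogue of the bad ERM that memorizes observed examples and then predicts the opposite label everywhere else.

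Against $\ca_\bad$ I use the textbook hard distribution: let $\cd$ place mass $1-4\epsilon$ on $(s_1, f_0(s_1))$ and mass $4\epsilon/(d-1)$ on each $(s_i, f_0(s_i))$ for $i \ge 2$; this is realizable via $g_S$ since $g_S|_S = f_0$. Every sample from $\cd$ activates the bad branch of $\ca_\bad$, so the error of its output is exactly $(4\epsilon/(d-1)) \cdot |\{i \ge 2 : s_i \text{ unseen}\}|$. A standard estimate shows that for $m \le c\,d/\epsilon$ with a small absolute constant $c$, at least $(d-1)/4$ of the hard points stay unseen with probability bounded below by an absolute constant, producing error $> \epsilon$. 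The lower bound needs to hold only at some one value of $\delta$ bounded away from $1$ and then extends to all smaller $\delta$ by monotonicity; the constant $\delta < 1/100$ in the statement is well within this window. Combining the resulting $\Omega(d/\epsilon)$ with the free $\Omega(\ln(1/\delta)/\epsilon)$ via the elementary inequality $\max(a,b) \ge (a+b)/2$ yields the claimed additive form, and the hypothesis $\epsilon < 1/12$ merely ensures $4\epsilon < 1/3$ so that $\cd$ is a legitimate probability distribution with the intended tail behaviour.

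The main obstacle I expect is the concentration step. The indicators ``$s_i$ unseen'' are not independent under sampling with replacement, so a naive Chernoff bound on the number of unseen hard points is not directly available. I plan to handle this either by Poissonizing the sample size (which decouples the indicators exactly) or by a Chebyshev argument using the exact second moment $\Pr(s_i,s_j \text{ both unseen}) = (1 - 8\epsilon/(d-1))^m$, which for the relevant regime of $m$ makes the pairwise covariances essentially zero. Once that estimate is in hand, the remaining checks—that $\ca_\bad$ is an ERM, that $\cd$ is realizable, and that the error formula above is correct—are routine.
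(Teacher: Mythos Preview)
Your approach is essentially the paper's: fix a G-shattered set with witness $f_0$, define $\ca_\bad$ to return a hypothesis that agrees with $f_0$ on the observed points and disagrees elsewhere on $S$, and use the standard hard distribution with one heavy point and $d-1$ light points. Two remarks are worth making.

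First, the concentration step you flag as the main obstacle is handled in the paper by a simpler device than Poissonization or a second-moment argument. Rather than controlling the number of \emph{unseen} light points (which are correlated), the paper bounds the number of \emph{sample draws} that land in $\{s_2,\ldots,s_d\}$: this is exactly $\mathrm{Binomial}(m, 4\epsilon)$ in your parametrization, so a direct Chernoff bound applies. If at most $(d-1)/2$ draws land there, then at most $(d-1)/2$ distinct light points are seen, hence at least $(d-1)/2$ are unseen and the error exceeds $\epsilon$. This sidesteps the dependency entirely.

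Second, two small points. The paper derives the $\ln(1/\delta)/\epsilon$ term directly (probability that every draw hits the heavy point is $(1-2\epsilon)^m \ge e^{-4\epsilon m}$), rather than importing it from the PAC lower bound; your route via $m^r_{\ca_\bad} \ge m^r_{\pac}$ is equally valid. More substantively, you do not address $d_G(\ch)=\infty$: the paper handles this by taking a sequence of pairwise disjoint shattered sets $S_n$ of growing size and building a single $\ca_\bad$ that behaves badly on each, yielding $m^r_{\ca_\bad}(\epsilon,\delta)=\infty$.
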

\begin{proof} The upper bound is simply a restatement of Theorem \ref{th:multiclaas-simple-bounds}. It remains to prove that there exists an ERM learner, ${\ca}_{\bad}$, with $m^r_{\ca_{bad}}(\epsilon,\delta)\ge C_1\left(\frac{d_G(\ch)+\ln(\frac{1}{\delta})}{\epsilon}\right)$.

First, assume that $d=d_G(\ch)<\infty$.
Let $S=\{x_0,\ldots,x_{d-1}\}\subseteq \cx$ be a set which is $G$-Shattered by $\ch$ using the function $f_0$. Let $\ca_{\bad}$ be an ERM learner with the following property. Upon seeing a sample $T \subseteq S$ which is consistent with $f_0$, $\ca_{\bad}$ returns a function that coincides with $f_0$ on $T$ and disagrees with $f_0$ on $S \setminus T$. Such a function exists since $S$ is G-shattered using $f_0$.

Fix $\delta < \frac{1}{100}$ and $\epsilon<\frac{1}{12}$. Note that $1-2\epsilon
\ge e^{-4\epsilon}$.  Define a distribution on $\cx$ by setting
$\Pr(x_0)=1-2\epsilon$ and for all $1\le i\le d-1$,
$\Pr(x_i)=\frac{2\epsilon}{d-1}$.  Suppose that the correct hypothesis is $f_0$
and let $\{(X_i,f_0(X_i))\}_{i=1}^m$ be a sample.  Clearly, the hypothesis returned by $\ca_{\bad}$
will err on all the examples from $S$ which are not in the sample.
By Chernoff's bound, if $m\le \frac{d-1}{6\epsilon}$, then with probability at least $\frac{1}{100} \ge \delta$, the sample will include no more than
$\frac{d-1}{2}$ examples from $S\setminus\{x_0\}$, so that the returned hypothesis will have error
at least $\epsilon$. To see that, define r.v. $Y_i,\;1\le i\le m$ by setting $Y_i=1$ if $X_i\ne x_0$ and $0$ otherwise. By Chernoff's bound, if $r=\lfloor \frac{d-1}{6\epsilon} \rfloor$ then
$$\Pr\left(\sum_{i=1}^mY_i\ge\frac{d-1}{2} \right)\le \Pr\left(\sum_{i=1}^rY_i\ge 3\epsilon k \right)\le\exp\left(-\frac{\frac{1}{2}^2}{3}2\epsilon r\right)<0.99$$

Moreover, the probability that the sample includes only $x_0$
(and thus $\ca_{\bad}$ will return a hypothesis with error $2\epsilon$) is
$(1-2\epsilon)^m\ge e^{-4\epsilon m}$, which is more than $\delta$ if $m\le
\frac{1}{4\epsilon}\ln(\frac{1}{\delta})$. We therefore obtain that
\[
m^r_{{\ca}_{\bad}}(\epsilon,\delta)\ge \max\left\{\frac{d-1}{6\epsilon} , \frac{1}{2\epsilon}\ln(1/\delta)\right\} \ge
\frac{d-1}{12\epsilon} + \frac{1}{4\epsilon}\ln(1/\delta) ~,
\]
as required.

If $d_G(\ch)=\infty$, let $S_n,\;n=2,3,\ldots$ be a sequence of pairwise disjoint shattered sets such that $|S_n|=n$. For every $n$, suppose that $f_0^n$ indicated that $S_n$ is $G$-shattered. 
Let $\ca_{\bad}$ be an ERM learner with the following property. Upon seeing a sample $T \subseteq S_n$ labeled by $f^n_0$, $\ca_{\bad}$ returns a function that coincides with $f^n_0$ on $T$ and disagrees with $f_0$ on $S_n \setminus T$. Repeating the argument of the finite case for $S_n$ instead of $S$ shows that for every $\epsilon<\frac{1}{12}$ and $\delta<\frac{1}{100}$ it holds that $m_{\ca_\bad}(\epsilon,\delta)\ge C_1\left(\frac{n+\ln(\frac{1}{\delta})}{\epsilon}\right)$. Since it holds for every $n$, we conclude that $m^r_{\ca_\bad}(\epsilon,\delta)= \infty$.
\end{proof}
To get the sample complexity lower bound for the ERM learner $\ca_{\bad}$ in Example \ref{ex:baderm}, observe that this algorithm satisfies the specifications of a bad ERM algorithm from the proof above.

We conclude that for any multiclass learning problem there exists a `bad' ERM learner. 
The existence of `good' ERM learners turns out to be a more involved question. We conjecture that for every class there exists a `good' ERM learner -- that is, a learning algorithm whose realizable sample complexity is $\tilde{O}\left(\frac{d_N}{\epsilon}\right)$ (where the $\tilde{O}$ notation may hide poly-logarithmic factors of $\frac{1}{\epsilon},d_N$ and $1/\delta$ but {\em not} of $|Y|$).
As we describe in the rest of this section, in this work we prove this conjecture for several families of hypothesis classes.

What is the crucial feature that makes $\ca_{\good}$ better than $\ca_{\bad}$ in Example \ref{ex:baderm}? For the realizable case,
if the correct labeling is $f_A\in \ch_\cx$, then for {\em any} sample, $\ca_{\good}$ would return only one of at most two functions: either $f_A$ or $f_\emptyset$. On the other hand, if the correct labeling is $f_\emptyset$, then $\ca_{\bad}$ might return {\em every} function in $\ch_\cx$. Thus, to return a hypothesis with error at most $\epsilon$, $\ca_{\good}$ needs to reject at most one hypothesis, while $\ca_{\bad}$ might need to reject many more. Following this intuition, we propose the following rough principle: \emph{A good ERM learner is one that, for every target hypothesis, 
considers a small number of hypotheses.}

We would like to use this intuition to design ERMs with a better sample complexity than the one that can be guaranteed for a general ERM as in Theorem \ref{th:multiclass-bounds-our}. 
Classical sample complexity upper bounds that hold for all ERM learners hinge on the notion of a \emph{growth function}, which counts the number of different hypotheses induced by the hypothesis class on a sample of a certain size. 
To bound the sample complexity of a specific ERM learner, we define algorithm-dependent variants of the concept of a growth function.
\begin{definition}[Algorithm-dependent growth function]\label{def:algrowth}
Fix a hypothesis class $\ch\subseteq \cy^\cx$. Let $\ca$ be a learning algorithm for $\ch$. 
For $m>0$ and a sample $S = ((x_i,y_i))_{i = 1}^{2m}$ of size $2m$, let $\cx_S = \{x_1,\ldots,x_{2m}\}$, and define 
\[
F_\ca(S)=\{\ca(S')|_{\cx_S}\mid S'\subseteq S,\;|S'|=m\}.
\]
Let $R(\ch)$ be the set of samples which are consistent with $\ch$, that is $S = ((x_i,f(x_i)))_{i=1}^{2m}$ for some $f \in \ch$.
Define the \emph{realizable algorithm-dependent growth function} of $\ca$ by
\[
\Pi^r_\ca(m)=\sup_{ S \in R(\ch),|S|=2m}|F_\ca( S)|.
\]
Define the \emph{agnostic algorithm-dependent growth function} of $\ca$ for sample $S$ by
\[
\Pi^a_\ca(m)=\sup_{ S \in (\cx \times \cy)^{2m}}|F_\ca( S)|.
\]

\iffalse
An \emph{$\epsilon$-cover} of a set of functions $F \subseteq \cx_S^{\cy}$ a subset $G \subseteq F$ such that for any $f\in F$ there exists a $g \in G$ such that $\frac{1}{|2m|}|\{x \mid f(x) \neq g(x)\}| \leq \epsilon.$ The $\epsilon$-covering number of $F$, denoted by $\cn(\epsilon,F)$ 
is the size of the smallest such $\epsilon$-covering for $F$.
Define the \emph{algorithm-dependent $\epsilon$-covering number} of $\ca$ to be
\[
\cn_\ca(\epsilon) = \sup_{ S\subseteq \cx\times\cy,\;|S|=2m}\cn(\epsilon,F_\ca(S)).
\]
\fi
\end{definition}
These definitions enable the use of a `double sampling' argument, similarly to the one used with the classical growth function \cite[see][chapter 4]{AB}. This argument is captured by the following lemma.
\begin{lemma} [The Double Sampling Lemma]\label{lem:doublesample}
Let $\ca$ be an ERM learner, and let $\cd$ be a distribution over $\cx\times\cy$. Denote $\epsilon = \Err_{\cd}(\ca(S_m))  - \Err_{\cd}(\ch)$, and let $\delta \in (0,1)$.
\begin{enumerate}
\item If $\cd$ is realizable by $\ch$ then with probability at least $1-\delta$, 
\[
\epsilon \leq 12\ln(2\Pi^r_\ca(m)/\delta)/m.
\]
%note: the condition on delta is meant to insure that $m\ge \frac{8}{\epsilon}$.
%$\Pr(A_m)\le 2\Pi_\ca(m)2^{-\frac{\epsilon m}{2}}$.
\item %$\Pr(A_m)\le 4\Pi_\ca(m)e^{-\frac{\epsilon^2 m}{8}}$
For any $\cd$, with probability at least $1-\delta$, 
\[
\epsilon \leq \sqrt{\frac{32\ln((4\Pi^a_\ca(m)+4)/\delta)}{m}}.
\]
\end{enumerate}
\end{lemma}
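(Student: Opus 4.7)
The plan is to follow the classical double-sampling (symmetrization) scheme but with the algorithm-dependent growth functions $\Pi^r_\ca(m)$ and $\Pi^a_\ca(m)$ replacing the ordinary growth function of $\ch$. The definition of $F_\ca$ in \defref{def:algrowth} is designed precisely so that, once we pass to a combined sample $T$ of size $2m$ and condition on it, the function $\ca(S')$ restricted to $\cx_T$ ranges over at most $\Pi_\ca(m)$ possibilities as $S'$ varies over the halves of $T$. This is what drives the union bound in place of the usual growth function.

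For the realizable case I would introduce a ghost sample $S'$ of size $m$ independent of $S_m$, and first show a ghost-sample reduction of the form
\begin{equation*}
\Pr_{S_m}\bigl[\Err_\cd(\ca(S_m)) > \tau\bigr] \;\leq\; 2 \Pr_{S_m,S'}\bigl[\Err_{S_m}(\ca(S_m)) = 0 \;\wedge\; \Err_{S'}(\ca(S_m)) \geq \tau/2\bigr],
\end{equation*}
via a one-sided binomial tail on the $m$ i.i.d.\ errors of a fixed hypothesis $\ca(S_m)$ on $S'$. Next I would condition on the multiset $T = S_m \cup S'$ and use exchangeability of the $2m$ draws to replace the probability on the right by an average over uniformly random splits of $T$ into two halves. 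For each fixed $g \in F_\ca(T)$ with $\ell$ errors on $T$, the probability that all $\ell$ errors land in $S'$ is $\binom{2m-\ell}{m}/\binom{2m}{m} \leq 2^{-\ell} \leq 2^{-m\tau/2}$. Since the function $\ca(S_m)|_{\cx_T}$ always lies in $F_\ca(T)$ (and $T$ is automatically realizable by $\ch$), a union bound over at most $\Pi^r_\ca(m)$ candidates gives tail probability $\leq 2\Pi^r_\ca(m)\cdot 2^{-m\tau/2}$; setting this $\leq \delta$ and converting $\log_2$ to $\ln$ yields $\tau \leq 12\ln(2\Pi^r_\ca(m)/\delta)/m$ with slack in the constants.

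For the agnostic case I would split, for an $\eta$-near-minimizer $f^*\in\ch$ with $\Err_\cd(f^*)\le \Err_\cd(\ch)+\eta$,
\begin{equation*}
\Err_\cd(\ca(S_m)) - \Err_\cd(\ch) \;\leq\; \bigl(\Err_\cd(\ca(S_m)) - \Err_{S_m}(\ca(S_m))\bigr) + \bigl(\Err_{S_m}(f^*) - \Err_\cd(f^*)\bigr) + \eta,
\end{equation*}
using that $\ca$ is ERM to absorb $\Err_{S_m}(\ca(S_m))\le \Err_{S_m}(f^*)$, and then let $\eta\downarrow 0$. The middle term is controlled by a single-function Hoeffding bound. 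For the first term I would rerun the symmetrization: introduce a ghost sample, condition on $T$, union-bound over $F_\ca(T)$ of size at most $\Pi^a_\ca(m)$, and for each fixed $g$ estimate the probability that its empirical errors on the two halves of $T$ differ by $\tau m / 2$ via Hoeffding's inequality for sampling without replacement. This yields a per-hypothesis bound of $2\exp(-m\tau^2/32)$; combining with the union bound, the ghost-sample factor $2$, and packaging the Hoeffding term for $f^*$ into the additive ``$+4$'' inside the logarithm gives the stated bound $\sqrt{32\ln((4\Pi^a_\ca(m)+4)/\delta)/m}$.

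No essentially new idea is required beyond the classical argument; the one thing to verify carefully is that the symmetrization step is legal here even though $\ca$ is a data-dependent learner, and this is exactly why the definition of $F_\ca(T)$ ranges over \emph{all} subsamples $S'\subseteq T$ of size $m$. The main technical bookkeeping is tracking the constants (the $12$ and the $32$) through the ghost-sample reduction and the binomial/hypergeometric tails, and, in the agnostic case, ensuring that the additive Hoeffding contribution from $f^*$ is absorbed cleanly into the $\log(4\Pi^a_\ca(m)+4)$ factor rather than appearing as a separate term.
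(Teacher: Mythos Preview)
Your proposal is correct and follows the same double-sampling/symmetrization scheme as the paper; the realizable case is essentially identical (ghost-sample reduction, exchangeability, union bound over $F_\ca(T)$, hypergeometric/binomial tail $2^{-\nu m/2}$).

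In the agnostic case the organization differs slightly. You first decompose the excess risk additively as
\[
\epsilon \;\le\; \bigl(\Err_\cd(\ca(S_m))-\Err_{S_m}(\ca(S_m))\bigr)+\bigl(\Err_{S_m}(f^*)-\Err_\cd(f^*)\bigr),
\]
handle the second term by a single-function Hoeffding bound, and symmetrize only the first term over $F_\ca(T)$. The paper instead applies the ghost-sample reduction directly to the event $\{\Err_\cd(\ca(S))\ge\epsilon^*+\nu\}$, passes to the permutation average over the double sample $Z$, conditions on the ``good'' set $\{\Err_Z(h^*)\le\epsilon^*+\nu/8\}$, and then splits into the two subevents $\{\Err_{Z_\sigma^1}(h^*)>\epsilon^*+\nu/4\}$ and $\{\Err_{Z_\sigma^2}(\ca(Z_\sigma^1))-\Err_{Z_\sigma^1}(\ca(Z_\sigma^1))>\nu/4\}$. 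Both routes use the ERM property and the concentration of $h^*$ in the same way and arrive at the same constants; your decomposition is a bit more modular, while the paper's keeps everything inside a single symmetrization and makes the origin of the ``$+4$'' in $4\Pi^a_\ca(m)+4$ explicit (two terms from the $F_\ca$ union bound, one from the $h^*$ deviation on $Z_\sigma^1$, one from $Z\notin\cz$, each doubled by the ghost-sample factor).
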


\begin{proof}
The proof idea of the this lemma is similar to the one of the `double sampling' results of \citet{AB} (see their Theorems 4.3 and 4.8). 

For the first part of the claim, let $\cd$ be a realizable distribution for $\ch$. For $m \leq 8$, the claim trivially holds, therefore assume $m \geq 8$. 
Let $\nu = 12\ln(2\Pi_\ca^r(m)/\delta)/m$ and assume w.l.o.g.\ that $\nu \leq 1$.

Suppose that for some $S \in (\cx \times \cy)^m$, 
$\Err_\cd(\ca(S)) \geq \nu$. Let $T \in (\cx \times \cy)^m$ be another sample drawn from $D^m$, independently from $S$. We show that $Err_T(\ca(S)) \geq \nu/2$ with probability at least $\frac12$.
For $\nu \leq \frac12$, by Chernoff's bound, this holds with probability at least $1-\exp(-m\nu/16)$, which is larger than $\frac12$ by the definition of $\nu$. For $\nu \geq \frac12$, by Hoeffding's inequality, this holds with probability at least $1-\exp(-m\nu^2/2) \geq 1-\exp(-m/8)$, which is larger than $\frac12$, since $m \geq 8$.
It follows that
\begin{equation}\label{eq:st}
\frac12\Pr_{S \sim \cd^m}(\Err_\cd(\ca(S)) \geq \nu) \leq \Pr_{(S,T) \sim \cd^{2m}}(\Err_T(\ca(S)) \geq \nu/2).
\end{equation}

Let $Z =(z_1,\ldots,z_{2m})\in R(\ch)$, and let $\sigma:[2m]\rightarrow [2m]$ be a permutation. We write $Z_\sigma^1$ to mean $(z_{\sigma(1)},\ldots,z_{\sigma(m)})$ and $Z_\sigma^2$ to mean $(z_{\sigma(m+1)},\ldots,z_{\sigma(2m)})$. 

Similarly to Lemma 4.5 in \cite{AB}, for $\sigma$ drawn uniformly from the set of permutations,
\begin{align}\label{eq:perm1}
\Pr_{(S,T) \in \cd^{2m}}(\Err_T(\ca(S)) \geq \nu/2) &= \E_{Z \sim \cd^{2m}}(\Pr_{\sigma}(\Err_{Z_\sigma^2}(\ca(Z_\sigma^1)) \geq \nu/2)) \\
&\leq \sup_{Z \in R(\ch),|Z| = 2m}\Pr_{\sigma}(\Err_{Z_\sigma^2}(\ca(Z_\sigma^1)) \geq \nu/2).\notag
\end{align}

To bound the right hand side, note that since $\ca$ is an ERM algorithm, for any fixed $Z \in R(\ch)$ and any $\sigma$, $\Err_{Z_\sigma^1}(\ca(Z_\sigma^1)) = 0$.
Thus 
\[
\Pr_{\sigma}(\Err_{Z_\sigma^2}(\ca(Z_\sigma^1)) \geq \nu/2) \leq 
\Pr_{\sigma}(\exists h \in F_\ca(Z), \, \Err_{Z_\sigma^1}(h) =0 \text{ and }\Err_{Z_\sigma^2}(h) \geq \nu/2).
\]
For any fixed $h$, if the right hand side is not zero, 
then there exist at least $\nu m/2$ elements $(x,y)$ in $Z$ such that $h(x) \neq y$. In the latter case, the probability (over $\sigma$) that all such elements are in $Z_\sigma^2$ is at most $2^{-\nu m/2}$. With a union bound over $h \in F_\ca(Z)$, we conclude that for any $Z$, 
\[
\Pr_{\sigma}(\Err_{Z_\sigma^2}(\ca(Z_\sigma^1)) \geq \nu/2) \leq |F_\ca(Z)|2^{-\nu m/2}.
\]
Combining with \eqref{eq:perm1} gives
\[
\Pr_{(S,T) \in \cd^{2m}}(\Err_T(\ca(S)) \geq \nu/2) \leq \sup_{Z \in R(\ch)}|F_\ca(Z)|2^{-\nu m/2} = \Pi^r_\ca(m)2^{-\nu m/2}.
\]
By \eqref{eq:st} and the definition of $\nu$,
\[
\Pr_{S \sim \cd^m}(\Err_\cd(\ca(S)) \geq \nu) \leq 2\Pi^r_\ca(m)2^{-\nu m/2} \leq \delta.
\]
This proves the first part of the claim.

For the second part of the claim, let $\cd$ be a distribution over $\cx \times \cy$. Denote $\epsilon^* = \Err_\cd(\ch)$, and let $h^* \in \ch$ such that $\Err_\cd(h^*) = \epsilon^*$. 

Let $\nu = \sqrt{\frac{32\ln((4\Pi^a_\ca(m)+4)/\delta)}{m}}$. Suppose that for some $S \in (\cx \times \cy)^m$, 
$\Err_\cd(\ca(S)) \geq \epsilon^* + \nu$. Let $T \in (\cx \times \cy)^m$ be a random sample drawn from $D^m$ independently from $S$.
By Hoeffding's inequality, with probability at least $1-\exp(-m\nu^2/2)$, which is at least $\frac12$ by the definition of $\nu^2$, $\Err_T(\ca(S)) \geq \epsilon^*+\nu/2$.
It follows that
\begin{equation}\label{eq:st2}
\frac12\Pr_{S \sim \cd^m}(\Err_\cd(\ca(S)) \geq \epsilon^*+\nu) \leq \Pr_{(S,T) \sim \cd^{2m}}(\Err_T(\ca(S)) \geq \epsilon^*+\nu/2).
\end{equation}

Let $Z =(z_1,\ldots,z_{2m})\in (\cx \times \cy)^{2m}$, and let $\sigma:[2m]\rightarrow [2m]$ be a permutation. Denote $Z_\sigma^1$ and $Z_\sigma^2$ as above.

Denote $\cz = \{ Z \in (\cx \times \cy)^{2m} \mid \Err_{Z}(\ca(Z_\sigma^1)) \leq \epsilon^* + \nu/8\}$.
By lemma 4.5 in \cite{AB} again, 
for $\sigma$ drawn uniformly from the set of permutations,
\begin{align}\label{eq:perm3}
&\Pr_{(S,T) \in \cd^{2m}}(\Err_T(\ca(S)) \geq \epsilon^*+\nu/2) = \E_{Z \sim \cd^{2m}}(\Pr_{\sigma}(\Err_{Z_\sigma^2}(\ca(Z_\sigma^1)) \geq \epsilon^*+\nu/2))\\
&\qquad\qquad\leq \E_{Z \sim \cd^{2m}}\left(\Pr_{\sigma}(\Err_{Z_\sigma^2}(\ca(Z_\sigma^1)) \geq \epsilon^*+\nu/2) \Big| Z \in \cz\right) + \Pr(Z \notin \cz).\notag
\end{align}
To bound the right hand side, first note that by Hoeffding's inequality, the second term is bounded by
\begin{equation}\label{eq:notinz}
 \Pr(Z \notin \cz) \leq \exp(-\nu^2m/16).
\end{equation}
For the first term, $\Err_{Z_\sigma^2}(\ca(Z_\sigma^1)) \geq \epsilon^*+\nu/2$ implies that unless $\Err_{Z_\sigma^1}(\ca(Z_\sigma^1)) > \epsilon^* + \nu/4$, necessarily
$\Err_{Z_\sigma^2}(\ca(Z_\sigma^1)) - \Err_{Z_\sigma^1}(\ca(Z_\sigma^1)) \geq\nu/4$.
Since $\ca$ is an ERM algorithm, $\Err_{Z_\sigma^1}(\ca(Z_\sigma^1)) > \epsilon^* + \nu/4$ only if also $\Err_{Z_\sigma^1}(h^*) > \epsilon^* + \nu/4$.
Therefore, for any $Z$,
\begin{align}\label{eq:sumeqs}
&\Pr_{\sigma}(\Err_{Z_\sigma^2}(\ca(Z_\sigma^1)) \geq \epsilon^*+\nu/2) \leq \notag\\
&\quad\Pr_\sigma(\Err_{Z_\sigma^1}(h^*) > \epsilon^* + \nu/4) + \Pr_\sigma(\Err_{Z_\sigma^2}(\ca(Z_\sigma^1)) - \Err_{Z_\sigma^1}(\ca(Z_\sigma^1)) > \nu/4).
\end{align}
$\Err_{Z_\sigma^1}(h^*)$ is an average of $m$ random variables of the form $\onefunc[h^*(x_i) \neq y_i]$, that are sampled without replacement from the finite population $Z$, with population average 
 $\Err_Z(h^*)$. For $Z \in \cz$, $\Err_Z(h^*) \leq \epsilon^*+\nu/8$.
Therefore, by Hoeffding's inequality for sampling without replacements from a finite population \citep{Hoeffding63}, for $Z \in \cz$, 
\begin{equation}\label{eq:hstar}
\Pr_\sigma(\Err_{Z_\sigma^1}(h^*) > \epsilon^* + \nu/4) \leq \Pr_\sigma(\Err_{Z_\sigma^1}(h^*) - \Err_Z(h^*) > \nu/8) \leq \exp(-\nu^2m/32).
\end{equation}
In addition, by the same inequality, and applying the union bound over $h \in F_\ca(Z)$, for any $Z$
\begin{align}
& \Pr_\sigma(\Err_{Z_\sigma^2}(\ca(Z_\sigma^1)) - \Err_{Z_\sigma^1}(\ca(Z_\sigma^1)) > \nu/4) \leq \Pr_\sigma(\exists h\in F_\ca(Z), \Err_{Z_\sigma^2}(h) - \Err_{Z_\sigma^1}(h) > \nu/4) \notag\\
&\quad\leq 
\Pr_\sigma(\exists h\in F_\ca(Z), \Err_{Z_\sigma^2}(h) - \Err_Z(h) > \nu/8) +\Pr_\sigma(\exists h\in F_\ca(Z), \Err_{Z_\sigma^1}(h) - \Err_Z(h) > \nu/8)\notag\\
&\quad\leq 2\Pi^a_\ca(m) \exp(- \nu^2 m/32).\label{eq:errdiff}
\end{align}

Combined with \eqref{eq:sumeqs} and \eqref{eq:hstar},
it follows that for $Z \in \cz$,
\[
\Pr_{\sigma}(\Err_{Z_\sigma^2}(\ca(Z_\sigma^1)) \geq \epsilon^*+\nu/2) \leq 
(2\Pi^a_\ca(m)+1) \exp(- \nu^2 m/32).
\]
With \eqref{eq:st2}, \eqref{eq:perm3}, and \eqref{eq:notinz}, we conclude that 
\[
\Pr_{S \sim \cd^m}(\Err_\cd(\ca(S)) \geq \epsilon^*+\nu) 
\leq (4\Pi^a_\ca(m)+4)  \exp(-\nu^2 m/32) \equiv \delta.
\]
The claim follows since $\epsilon = \Err_\cd(\ca(S)) - \epsilon^*$.
\end{proof}

As we shall presently see, \lemref{lem:doublesample} can be used to provide better sample complexity bounds for some `good' ERM learners. 

\subsection{Learning with a small essential range}
A key tool that we will use for providing better bounds is the notion of {\em essential range}, defined below.
The essential range of an algorithm quantifies the number of different labels that can be emitted by the functions the algorithm might return for samples of a given size.
In this definition we use the notion of the range of a function. Formally, for a function $f:\cx \rightarrow \cy$, its range is the set of labels to which it maps $\cx$, denoted by $\range(f) = \{ f(x) \mid x \in \cx\}$. 
\begin{definition}[Essential range]
Let $\cal A$ be a learning algorithm for $\ch\subseteq \cy^{\cx}$. The {\em realizable essential range} of $\cal A$ is the function $r^r_\ca:\mathbb N\to\mathbb N$, defined as follows.
\[
r^r_\ca(m) = \sup_{S \in R(\ch), |S| = 2m} \left|\cup_{S'\subset S,\;|S'|=m}\operatorname{range}(\ca(S'))\right|.
\]
The {\em agnostic essential range} 
of $\cal A$ is the function $r^a_\ca:\mathbb N\to\mathbb N$, defined as follows.
\[
r^a_\ca(m) = \sup_{S \subseteq \cx \times \cy, |S| = 2m} \left|\cup_{S'\subset S,\;|S'|=m}\operatorname{range}(\ca(S'))\right|.
\]
\end{definition}
Intuitively, an algorithm with a small essential range uses a smaller set of labels for any particular distribution, thus it enjoys better convergence guarantees. This is formally quantified in the following result.

\begin{theorem} \label{thm:restricted_range} Let $\ca$ be an ERM learning algorithm for $\ch \subseteq \cy^{\cx}$ with essential ranges $r^r_\ca(m)$ and $r^a_\ca(m)$. Denote $\epsilon = \Err_{\cd}(\ca(S_m))  - \Err_{\cd}(\ch)$. 
Then,
\begin{itemize}
\item If $\cd$ is realizable by $\ch$ and $\delta < 0.1$ then with probability at least $1-\delta$, 
\[
\epsilon \leq O\left(\frac{d_N(\ch)(\ln(m) + \ln(r^r_\ca(m)))+\ln(1/\delta)}{m}\right).
\]
%note: the condition on delta is meant to insure that $m\ge \frac{8}{\epsilon}$.
%$\Pr(A_m)\le 2\Pi_\ca(m)2^{-\frac{\epsilon m}{2}}$.
\item %$\Pr(A_m)\le 4\Pi_\ca(m)e^{-\frac{\epsilon^2 m}{8}}$
For any probability distribution $D$, with probability at least $1-\delta$, 
\[
\epsilon \leq O\left(\sqrt{\frac{d_N(\ch)(\ln(m) + \ln(r^a_\ca(m))+ \ln(1/\delta)}{m}}\right).
\]
\end{itemize}
\end{theorem}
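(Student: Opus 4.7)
\medskip

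\noindent\textbf{Proof Plan.} The plan is to combine the Double Sampling Lemma (\lemref{lem:doublesample}) with a Natarajan-style counting bound for $\Pi^r_\ca(m)$ and $\Pi^a_\ca(m)$, using the essential range to cut down the effective label space.

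First I would unpack the relevant object. Fix a sample $S$ of size $2m$, let $\cx_S$ denote its underlying set of $2m$ points, and let $R_S\subseteq \cy$ be the union $\bigcup_{S'\subset S,|S'|=m}\range(\ca(S'))$. By the definition of the essential range, $|R_S|\le r^r_\ca(m)$ in the realizable case, and $|R_S|\le r^a_\ca(m)$ in the agnostic case. Since $\ca$ is an ERM, every function in $F_\ca(S)$ is the restriction to $\cx_S$ of some $f\in\ch$, and by construction every such restriction takes its values in $R_S$. Hence $F_\ca(S)$ is a subclass of $R_S^{\cx_S}$ whose Natarajan dimension is at most $d_N(\ch)$ (Natarajan dimension cannot increase under restriction of the domain, nor under passing to a subclass).

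Next I would invoke Natarajan's generalization of the Sauer--Shelah lemma: if $G\subseteq Y^X$ with $|X|=n$, $|Y|=k$, and $d_N(G)\le d$, then $|G|\le n^d k^{2d}$ (up to absolute constants). Applying this to $G=F_\ca(S)$ with $n=2m$, $k=|R_S|$, and $d=d_N(\ch)$ yields
\[
|F_\ca(S)| \;\le\; (2m)^{d_N(\ch)}\,|R_S|^{2d_N(\ch)}.
\]
Taking the supremum over admissible $S$ gives, for the realizable growth function,
\[
\ln \Pi^r_\ca(m) \;\le\; d_N(\ch)\bigl(\ln(2m)+2\ln r^r_\ca(m)\bigr),
\]
and the analogous bound for $\Pi^a_\ca(m)$ with $r^a_\ca(m)$ in place of $r^r_\ca(m)$.

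Finally I would plug these estimates into the two parts of \lemref{lem:doublesample}. The realizable bound $\epsilon\le 12\ln(2\Pi^r_\ca(m)/\delta)/m$ then becomes
\[
\epsilon \;\le\; O\!\left(\frac{d_N(\ch)\bigl(\ln m+\ln r^r_\ca(m)\bigr)+\ln(1/\delta)}{m}\right),
\]
and the agnostic bound $\epsilon\le\sqrt{32\ln((4\Pi^a_\ca(m)+4)/\delta)/m}$ likewise gives the stated square-root expression. I expect the main subtlety, rather than any serious obstacle, to be the justification that $F_\ca(S)$ really can be regarded as a class over the small alphabet $R_S$ with Natarajan dimension at most $d_N(\ch)$; once that monotonicity is spelled out, the combinatorial counting and the double-sampling step are routine. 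As a corollary one immediately recovers \thmref{th:multiclass-bounds-our} by observing that for any ERM $\ca$ one trivially has $r^r_\ca(m)\le k$, and by sharpening this observation (any ERM that outputs only labels seen in the sample has $r^r_\ca(m)\le m$) one obtains the $k$-free bound used in \secref{sec:realizable}.
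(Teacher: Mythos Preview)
Your proposal is correct and follows essentially the same route as the paper: bound $|F_\ca(S)|$ via Natarajan's Sauer--Shelah lemma (\lemref{lemma:growth-function}) using that $F_\ca(S)\subseteq\ch|_{\cx_S}$ has domain of size $2m$ and range contained in a set of size at most $r^r_\ca(m)$ (resp.\ $r^a_\ca(m)$), then plug the resulting bound on $\Pi_\ca$ into the two parts of \lemref{lem:doublesample}. The paper's argument is identical in structure and in the key inequality $|F_\ca(S)|\le(2m)^{d_N(\ch)}r_\ca(m)^{2d_N(\ch)}$.
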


To prove the realizable part of this theorem, we use the following combinatorial lemma by Natarajan:
\begin{lemma}(\citealp{Natarajan89b})\label{lemma:growth-function}
For every hypothesis class $\ch\subseteq \cy^\cx$, $|\ch|\le |\cx|^{d_N(\ch)}|\cy|^{2d_N(\ch)}$.
\end{lemma}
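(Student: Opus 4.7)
The plan is to prove the bound by induction on $n = |\cx|$, adapting the Sauer-Shelah argument to the multi-label setting. The base case $n = 0$ is immediate: $|\ch|\le 1$ and $d_N(\ch) = 0$, so both sides equal $1$. For the inductive step, fix an arbitrary $x_0 \in \cx$, set $\cx' = \cx\setminus\{x_0\}$, and let $\ch' = \ch|_{\cx'}$. Each $f\in\ch$ is determined by the pair $(f|_{\cx'}, f(x_0))$, so I group functions by their restriction: $|\ch| - |\ch'| = \sum_{g \in \ch'}(|E_g| - 1)$, where $E_g = \{f(x_0) : f \in \ch,\; f|_{\cx'} = g\}$. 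Since $d_N(\ch') \le d$, induction yields $|\ch'| \le (n-1)^d k^{2d}$.

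The key construction is that, for each ordered pair of distinct labels $(a,b)\in\cy\times\cy$, I introduce the auxiliary class $\tilde\ch_{a,b} \subseteq \cy^{\cx'}$ of those $g\in\ch'$ that admit two extensions $f_a, f_b \in \ch$ with $f_a|_{\cx'} = f_b|_{\cx'} = g$, $f_a(x_0) = a$, and $f_b(x_0) = b$. Since $|E_g|-1 \le \tfrac{1}{2}|E_g|(|E_g|-1)$ whenever $|E_g|\ge 2$, a direct double count yields
\[
|\ch| - |\ch'| \;\le\; \tfrac{1}{2}\sum_{a\ne b}|\tilde\ch_{a,b}|.
\]

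The heart of the proof is the claim $d_N(\tilde\ch_{a,b}) \le d-1$. Given a set $S\subseteq\cx'$ that $\tilde\ch_{a,b}$ N-shatters with witnesses $u_1,u_2$, I lift to witnesses $v_1 = u_1\cup\{x_0\mapsto a\}$ and $v_2 = u_2\cup\{x_0\mapsto b\}$ on $S\cup\{x_0\}$; these disagree pointwise because $a\ne b$ and $u_1,u_2$ disagree on $S$. For any $T'\subseteq S\cup\{x_0\}$, I take the function $g\in\tilde\ch_{a,b}$ witnessing the shattering of $T'\cap S$, and then, using the definition of $\tilde\ch_{a,b}$, pick its extension with value $a$ at $x_0$ if $x_0\in T'$, or its extension with value $b$ otherwise. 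This exhibits $\ch$ N-shattering $S\cup\{x_0\}$, so $d_N(\tilde\ch_{a,b}) \le d-1$; the induction then gives $|\tilde\ch_{a,b}| \le (n-1)^{d-1}k^{2(d-1)}$.

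Combining everything yields $|\ch| \le (n-1)^d k^{2d} + \tfrac{k(k-1)}{2}(n-1)^{d-1}k^{2(d-1)}$, and the induction closes via the elementary inequality $n^d - (n-1)^d \ge d(n-1)^{d-1}$ together with the bound $(k-1)/(2k) \le d$ for $d\ge 1$. The main obstacle is the dimension-reduction step: the argument requires defining $\tilde\ch_{a,b}$ per ordered pair $(a,b)$ rather than as the coarser class $\{g : |E_g|\ge 2\}$. With the coarser definition, the pair of witness values at $x_0$ could vary with $T'$, so no fixed pair of labels would serve as the extra witnesses needed for N-shattering $S\cup\{x_0\}$; fixing $a,b$ up front is precisely what makes the lifting to $S\cup\{x_0\}$ go through.
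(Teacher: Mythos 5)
Your proof is correct. Note that the paper does not actually supply a proof of this lemma; it simply cites it from Natarajan (1989b), so there is no in-paper argument to compare against. Your Sauer--Shelah-style induction on $|\cx|$ with the per-ordered-pair auxiliary classes $\tilde\ch_{a,b}$ is sound: the double count $\sum_{g}(|E_g|-1) \le \tfrac12\sum_{g}|E_g|(|E_g|-1) = \tfrac12\sum_{a\ne b}|\tilde\ch_{a,b}|$ is valid, the dimension-reduction step $d_N(\tilde\ch_{a,b})\le d_N(\ch)-1$ via lifting witnesses to $S\cup\{x_0\}$ with the fixed labels $a,b$ at $x_0$ is the right idea, and the closing inequalities $n^d-(n-1)^d\ge d(n-1)^{d-1}$ and $(k-1)/(2k)\le d$ hold. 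Your remark explaining why the coarser class $\{g:|E_g|\ge 2\}$ would not work is precisely the key structural observation. One small point for completeness: the inductive step implicitly assumes $d_N(\ch)\ge 1$ (it forms $(n-1)^{d-1}$); when $d_N(\ch)=0$, every point of $\cx$ has a single consistent label across $\ch$, so $|\ch|\le 1$ and the bound holds trivially---worth stating explicitly so the induction is airtight.
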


\begin{proof}[of \thmref{thm:restricted_range}]
For the realizable sample complexity, the growth function can be bounded as follows.
Let $S \in R(\ch)$ such that $|S| = 2m$, and consider the function class $F_\ca(S)$ (see \defref{def:algrowth}). By definition, the domain of $F_\ca(S)$ is $\cx_S$ of size $2m$, and the range of $F_\ca(S)$ is of size at most $r^r_\ca(m)$. Lastly, the Natarajan dimension of $F_\ca(S)$ is at most $d_N(\ch)$, since $F_\ca(S) \subseteq \ch|_S$.

Therefore, by \lemref{lemma:growth-function}, $|F_\ca(S)| \leq (2m)^{d_N(\ch)}r^r_\ca(m)^{2d_N(\ch)}$.
Taking the supremum over all such $S$, we get
\[
\Pi^r_\ca(m) \leq (2m)^{d_N(\ch)} r^r_\ca(m)^{2d_N(\ch)}.
\]
The bound on $\epsilon$ follows from the first part of \lemref{lem:doublesample}.

For the agnostic sample complexity, a similar argument shows that
\[
\Pi^a_\ca(m) \leq (2m)^{d_N(\ch)} r^a_\ca(m)^{2d_N(\ch)},
\]
and the bound on $\epsilon$ follows from the second part of \lemref{lem:doublesample}.
\iffalse
For the agnostic sample complexity define, as in the proof of \thmref{th:multiclaas-simple-bounds}, for every $f\in \ch$,
 the function $\bar f:\cx\times \cy\to \{0,1\}$ by setting $\bar f(x,y)=1$ if and only if $f(x)=y$.
 Consider a sample $S \subseteq \cx \times \cy$ of size $2m$, and let $\bar{F}_\ca(S) = \{ \bar{f} \mid f \in F_\ca(S)\}$. We have
\begin{align*}
 \cR(\ca, S) &= \sup_{S'\subseteq S,\;|S'|=m}\frac{2}{m}\E_\sigma[|\sup_{f \in F_\ca(S)} \sum_{(x,y) \in S'}\sigma_i \onefunc[f(x)\neq y]|] \\
 &= 
 \sup_{S'\subseteq S,\;|S'|=m}\frac{2}{m}\E_\sigma[|\sup_{\bar{f} \in \bar{F}_\ca(S)} \sum_{(x,y) \in S'}\sigma_i \onefunc[\bar{f}(x,y)\neq 1]|].
\end{align*}
The right hand side here is the supremum over empirical Rademacher complexities of subsets of size $m$ of $S$, for the binary hypothesis class $\bar{F}_\ca(S)$. This quantity is upper bounded by $\sqrt{\VC(\bar{F}_\ca(S))/m}$ \citep{BartlettMe02}. As in the proof of \thmref{th:multiclaas-simple-bounds}, and using \eqref{eq:ben-david},
\[
\VC(\bar{F}_\ca(S)) = d_G(F_\ca(S)) \leq 4.67 d_N(F_\ca(S))\log_2(r^a_\ca(m)) \leq 4.67 d_N(\ch)\log_2(r^a_\ca(m)). 
\]
The bound on $\epsilon$  follows from \lemref{lem:doublesample}.
\fi
\end{proof}

Theorem \ref{th:multiclass-bounds-our}, which provides an improved bound for the realizable case, now follows from the fact that the essential range is never more than $k$. But the essential range can also be much smaller than $k$. For example, the essential range of the algorithm from Example \ref{ex:baderm} is bounded by $2m+1$ (the $2m$ labels appearing in the sample together with the $*$ label). In fact, we can state a more general bound, for any algorithm which never `invents' labels it did not observe in the sample.
\begin{corollary}\label{cor:only_observed_labels}
Let $\ca$ be an ERM learner for a hypothesis class $\ch\subseteq\cy^\cx$. Suppose that for every sample $S$, the function $\ca(S)$ never outputs labels which have not appeared in $S$. Then
\[
m^r_\ca(\epsilon,\delta)= O\left(\frac{d_N(\ch)(\ln(\frac{1}{\epsilon})+\ln(d_N(\mathcal H)))+\ln(\frac{1}{\delta})}{\epsilon}\right),
\]
and 
\[
m^a_\ca(\epsilon,\delta)= O\left(\frac{d_N(\ch)(\ln(\frac{1}{\epsilon})+\ln(d_N(\mathcal H)))+\ln(\frac{1}{\delta})}{\epsilon^2}\right).
\]
\end{corollary}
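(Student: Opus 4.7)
The plan is a direct application of \thmref{thm:restricted_range}. The only thing to verify is a bound on the essential range: if for every sample $S'$ the function $\ca(S')$ outputs only labels that appear in $S'$, then $\range(\ca(S'))$ is contained in the set of labels appearing in $S'$, and in particular in the set of labels appearing in any superset $S$. For $S$ of size $2m$, the latter set has cardinality at most $2m$, so
\[
r^r_\ca(m) \;\le\; 2m \qquad \text{and} \qquad r^a_\ca(m) \;\le\; 2m.
\]

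Substituting these into the two parts of \thmref{thm:restricted_range}, we obtain in the realizable case that with probability at least $1-\delta$,
\[
\epsilon \;\le\; O\!\left(\frac{d_N(\ch)\ln(m)+\ln(1/\delta)}{m}\right),
\]
and in the agnostic case,
\[
\epsilon \;\le\; O\!\left(\sqrt{\frac{d_N(\ch)\ln(m)+\ln(1/\delta)}{m}}\right).
\]

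The final step is to invert these implicit inequalities to bounds on $m$ in terms of $\epsilon$ and $\delta$. This is the standard fixed-point calculation: for the realizable case, requiring the right-hand side to be $\le \epsilon$ and solving yields $m = O\bigl((d_N(\ch)(\ln(1/\epsilon)+\ln(d_N(\ch)))+\ln(1/\delta))/\epsilon\bigr)$; the agnostic case gives the analogous bound with $\epsilon^2$ in the denominator. I do not foresee any genuine obstacle here: the essential-range bound is immediate from the hypothesis on $\ca$, and the rest is a mechanical invocation of \thmref{thm:restricted_range} together with the routine fixed-point inversion.
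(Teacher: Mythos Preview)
Your proposal is correct and matches the paper's own proof essentially verbatim: the paper simply states that the corollary is immediate from \thmref{thm:restricted_range} by setting $r^r_\ca(m) = r^a_\ca(m) = 2m$. Your justification of the $2m$ bound on the essential range and the routine inversion are exactly the intended steps.
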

This corollary is immediate from \thmref{thm:restricted_range} by setting $r^r_\ca(m) = r^a_\ca(m) = 2m$.

From this corollary, we immediately get that every hypothesis class which admits such algorithms, and has a large gap between the Natarajan dimension and the graph dimension realizes a gap between the sample complexities of different ERM learners. Indeed, the graph dimension can even be unbounded, while the Natarajan dimension is finite and the problem is learnable. This is demonstrated by the following example.

\begin{example} \label{ex:baderm2} 
Denote the ball in $\reals^n$ with center $z$ and radius $r$ by $B_n(z,r) = \{ x \mid \|x - z\| \leq r\}$.
For a given ball $B=B_n(z,r)$ with $z \in \reals^n$ and $r > 0$, let $h_B:\reals^n \rightarrow \reals^n \cup \{*\}$ be the function defined by $h_B(x)=z$ if $x \in B$ and $h_B(x)= *$ otherwise. Let $h_*$ be a hypothesis that always returns $*$. Define the hypothesis class $\ch_{n}$ of hypotheses from $\reals^n$ to $\reals^n \cup \{*\}$ by 
\[
  \ch_n = \{h_B \mid \exists z \in \reals^n, \infty\ge r > 0,\text{ such that } B=B_n(z,r)\} \cup \{h_{*} \}.
\] 
Relying on the fact that the VC dimension of balls in $\reals^n$ is $n+1$, it is not hard to see that $d_G(\ch_n)=n+1$. Also, it is easy to see that $d_N(\ch_n)=1$.
It is not hard to see that there exists an ERM, $\ca_{\good}$, satisfying the requirements of Corollary \ref{cor:only_observed_labels}. Thus,
\[
m^r_{\ca_{\good}}(\epsilon, \delta) \leq O \left(\frac{\ln(1/\epsilon) +\ln(1/\delta)}{\epsilon}\right),\;\;m^a_{\ca_{\good}}(\epsilon, \delta) \leq O \left(\frac{\ln(1/\delta)}{\epsilon^2}\right).
\]
On the other hand, Theorem \ref{th:ERM-Bounds} implies that there exists a bad ERM learner, $\ca_{\bad}$ with
\[
m^a_{\ca_{\bad}}(\epsilon, \delta) \geq m^r_{\ca_{\bad}}(\epsilon, \delta) \geq C_1 \left(\frac{n +\ln(1/\delta)}{\epsilon}\right).
\]
\end{example}

\iffalse
\todoinline{What is the purpose of this comment?}
Note that, if one restricts the hypothesis class to allow only balls that have their centers in some finite set of grid points, the class uses only a finite range of labels. However, if such a grid is sufficiently dense,  the sample complexities of both algorithms, $\ca_{\bad}$ and  $\ca_{\good}$, would not change.
\fi

Our results so far show that whenever an ERM learner with a small essential range exists, the sample complexity of learning the multiclass problem can be improved over the worst ERM learner. In the next section we show that this is indeed the case for hypothesis classes which satisfy a natural condition of \emph{symmetry}.

\subsection{Learning with Symmetric Classes}\label{sec:symmetric}
We say that a hypothesis class $\ch$ is symmetric if for any $f\in \ch$ and any permutation $\phi:\cy \rightarrow \cy$ on labels\, we have that $\phi\circ f\in\ch$ as well. Symmetric classes are a natural choice if there is no prior knowledge on properties of specific labels in $\cy$ (See also the discussion in \secref{sec:symmetrization} below).
We now show that for symmetric classes, the Natarajan dimension characterizes the optimal  sample complexity up to logarithmic factors. It follows that a finite Natarajan dimension is a necessary and sufficient condition for learnability of a symmetric class. 
We will make use of the following lemma, which provides a key observation on symmetric classes.
\begin{lemma}\label{lemma:attain-at-most-d-values}
Let $\ch\subseteq \cy^\cx$ be a symmetric hypothesis class of Natarajan dimension $d$. Then any $h \in \ch$ has a range of size at most $2d+1$.
\end{lemma}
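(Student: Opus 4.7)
The plan is to prove the contrapositive: if some $h \in \ch$ has range of size at least $2d+2$, then $\ch$ N-shatters a set of size $d+1$, contradicting $d_N(\ch) = d$.

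First, I would pick points $x_1, \ldots, x_{2d+2} \in \cx$ whose images under $h$ are $2d+2$ distinct labels $y_1, \ldots, y_{2d+2}$ (possible since the range has at least $2d+2$ elements). Then I set $S = \{x_1, \ldots, x_{d+1}\}$ and define two witness functions $f_1, f_2 : S \to \cy$ by $f_1(x_i) = y_i$ and $f_2(x_i) = y_{i+d+1}$. Since the $y_j$'s are pairwise distinct, $f_1(x_i) \neq f_2(x_i)$ for every $x_i \in S$, so $f_1, f_2$ are valid candidate witnesses for N-shattering.

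Next I would verify the shattering condition using symmetry. Fix any $T \subseteq S$ and let $I \subseteq \{1, \ldots, d+1\}$ be the index set of $S \setminus T$. Define a permutation $\phi : \cy \to \cy$ that, for each $i \in I$, transposes $y_i$ with $y_{i+d+1}$, and fixes every other element of $\cy$. These transpositions involve $2|I|$ pairwise distinct labels (because all $y_j$'s for $j \leq 2d+2$ are distinct), so $\phi$ is a well-defined permutation. By symmetry of $\ch$, the function $g := \phi \circ h$ lies in $\ch$. For $x_i \in T$ (so $i \notin I$), $\phi$ fixes $y_i$, hence $g(x_i) = y_i = f_1(x_i)$. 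For $x_i \in S \setminus T$ (so $i \in I$), $g(x_i) = \phi(y_i) = y_{i+d+1} = f_2(x_i)$. Thus $S$ is N-shattered by $\ch$, contradicting $d_N(\ch) = d$.

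There is not really a hard step here; the only subtlety is ensuring that $\phi$ is a legitimate permutation, which is exactly where we use that all $2d+2$ labels $y_1, \ldots, y_{2d+2}$ are pairwise distinct (so the transpositions are on disjoint pairs). The bound $2d+1$ is tight in this sense: we need $2d+2$ labels to build $d+1$ disjoint pairs of swappable labels, which is what powers the N-shattering construction.
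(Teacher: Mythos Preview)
Your proof is correct and follows essentially the same approach as the paper's: both assume a hypothesis with range at least $2d+2$, select $d+1$ points with distinct images, and use symmetry to relabel so as to N-shatter that set. Your version is in fact a bit more explicit than the paper's, since you spell out the permutation $\phi$ as a product of disjoint transpositions and verify carefully that it is well-defined, whereas the paper simply asserts that one can ``rename all the labels in the range of $h|_S$ as we please.''
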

\begin{proof}
If $k\le 2d+1$ we are done. Thus assume that there are $2d+2$ distinct
elements $y_1,\ldots,y_{2d+2}\in \cy$. Assume to the contrary that there is a
hypothesis $h\in\ch$ with a range of more than $2d+1$ values. Thus there is a set
$S=\{x_1,\ldots,x_{d+1}\}\subseteq \cx$ such that $h|_S$ has $d+1$ values in
its range. Since $\ch$ is symmetric, we can show that $\ch$ N-shatters $S$ as follows: 
Since $\ch$ is symmetric, we can rename all the labels in the range of $h|_S$ as we please and get another function in $\ch$. Thus there are two 
functions $f_1,f_2\in \ch$ such that for all $i \leq d+1$, $f_1(x_i)=y_i$ and $f_2(x_i) = y_{d+1+i}$. 
Now, let $S \subseteq T$. Since $\ch$ is symmetric we can again rename the labels in the range of $h|_S$ 
to get a function $g \in \ch$ such that $g(x)=f_1(x)$ for every $x\in
T$ and $g(x)=f_2(x)$ for every $x\in S\setminus T$. Therefore the set $S$ is shattered, thus
the Natarajan dimension of $\ch$ is at least $d+1$, contradicting the assumption.
\end{proof}
First, we provide an upper bound on the sample complexity of ERM in the realizable case.
\begin{theorem}\label{th:symmetric}
There are absolute constants $C_1,C_2$ such that for every symmetric hypothesis class $\ch\subseteq \cy^\cx$
\[
C_1\left(\frac{d_N(\ch)+\ln(\frac{1}{\delta})}{\epsilon}\right)\le
m^r_{\erm}(\epsilon,\delta)\le C_2\left(\frac{d_N(\ch)\left(\ln(\frac{1}{\epsilon})+\ln(d_N(\ch))\right)+\ln(\frac{1}{\delta})}{\epsilon}\right)\]
\end{theorem}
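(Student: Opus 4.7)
The plan is to prove the upper bound via a growth-function bound on the binary companion class $\bar\ch$, rather than through the essential-range machinery of \thmref{thm:restricted_range}: an arbitrary ERM on a symmetric $\ch$ may still use wildly different labels on different subsamples, so its essential range need not be small, but the growth function of $\bar\ch$ will turn out to be well-behaved regardless. The lower bound is immediate from \thmref{th:multiclaas-bounds}.

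Let $d=d_N(\ch)$. The sole structural fact about $\ch$ I will use is \lemref{lemma:attain-at-most-d-values}, which gives $|\range(h)|\le 2d+1$ for every $h\in\ch$. Following the reduction in the proof of \thmref{th:multiclaas-simple-bounds}, define $\bar\ch=\{\bar f:f\in\ch\}\subseteq\{0,1\}^{\cx\times\cy}$ with $\bar f(x,y)=\onefunc[f(x)=y]$; every ERM $\ca$ for $\ch$ lifts to an ERM $\bar\ca$ for $\bar\ch$ with $m^r_{\ca,\ch}\le m^r_{\bar\ca,\bar\ch}$. Applying \lemref{lem:doublesample} to $\bar\ca$, together with the trivial bound $\Pi^r_{\bar\ca}(m)\le\Pi_{\bar\ch}(2m)$, reduces the problem to proving $\log\Pi_{\bar\ch}(m)=O(d(\log m+\log d))$.

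To bound $|\bar\ch|_Z|$ for $Z=((x_i,y_i))_{i=1}^m\in(\cx\times\cy)^m$, observe that $\bar f|_Z$ is determined by the match set $M_f=\{i:f(x_i)=y_i\}$, which depends on $f$ only through (i) the partition of $\cx_Z$ induced by the equivalence $x\sim x'\iff f(x)=f(x')$ (the \emph{shape}), and (ii) the label chosen for each part. I count these two factors separately. For (i): each shape has $\le 2d+1$ parts by \lemref{lemma:attain-at-most-d-values}, and by the symmetry of $\ch$ every realized shape is already realized by some $h\in\ch_L:=\{h\in\ch:\range(h)\subseteq L\}$ for any fixed $L\subseteq\cy$ with $|L|=2d+1$. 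Since $d_N(\ch_L)\le d$ and $|L|=2d+1$, Natarajan's \lemref{lemma:growth-function} applied to $\ch_L$ bounds the number of shapes by $m^d(2d+1)^{2d}$. For (ii): writing $n_j:=|\{i:x_i\in P_j\}|$ for each part $P_j$, the contribution $\{i:x_i\in P_j,\,y_i=l_j\}$ of $P_j$ to $M_f$ takes at most $n_j+1$ distinct values as $l_j$ ranges over $\cy$; since $\sum_j n_j=m$ and $k\le 2d+1$, AM-GM gives $\prod_j(n_j+1)\le(1+m/k)^k\le(m+1)^{2d+1}$.

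Combining yields $\log\Pi_{\bar\ch}(m)\le d\log m+2d\log(2d+1)+(2d+1)\log(m+1)=O(d(\log m+\log d))$. Substituting into \lemref{lem:doublesample} and solving the resulting implicit inequality in $m$ gives $m^r_{\erm}(\epsilon,\delta)=O((d(\log(1/\epsilon)+\log d)+\log(1/\delta))/\epsilon)$, as claimed. The one delicate point is the shape count: because $|\cy|$ may be infinite, \lemref{lemma:growth-function} cannot be applied to $\ch$ directly, and symmetry is precisely what licenses the reduction to the finite-label subclass $\ch_L$ before invoking the lemma.
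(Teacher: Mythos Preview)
Your proof is correct and takes a genuinely different route from the paper's. The paper does not bound the growth function of $\bar\ch$; instead it first constructs a \emph{specific} ERM $\ca$ with small realizable essential range (fixing a label set $Z$ of size $2d+1$ and, by symmetry and \lemref{lemma:attain-at-most-d-values}, renaming the non-sample labels into $Z$), invokes \thmref{thm:restricted_range} to bound $m^r_{\ca}$, and then transfers the bound to an arbitrary ERM $\ca'$ by a second renaming argument showing $\Err_\cd(\ca)\ge\Err_\cd(\ca')$ on realizable samples. Your shape/labeling decomposition instead bounds $\Pi_{\bar\ch}$ globally, so the bound for \emph{all} ERMs falls out at once without constructing a good ERM or doing the transfer step; it also bypasses the essential-range machinery entirely, which is attractive since, as you note, an arbitrary ERM on a symmetric class need not have small essential range. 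A side benefit of your argument is that it directly yields $\log\Pi_{\bar\ch}(m)=O(d\log m+d\log d)$ and hence $d_G(\ch)=\VC(\bar\ch)=O(d\log d)$, the structural fact the paper obtains separately (and by a different argument) in the proof of \thmref{thm:symmetricag}. The paper's approach, on the other hand, illustrates the essential-range framework in action and makes the role of the specific ``good'' ERM explicit. One cosmetic point: you use $k$ for the number of parts of a shape while the paper reserves $k=|\cy|$; renaming that variable would avoid confusion.
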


\begin{proof}
The lower bound is a restatement of \thmref{th:multiclaas-simple-bounds}.  For the
upper bound, first note that if $k \leq 4d_N(\ch)+2$ the upper bound trivially follows from \thmref{th:multiclass-bounds-our}. Thus assume $k > 4d_N(\ch) +2$.
We define an ERM learner $\ca$ with a small essential range, as required in 
\thmref{thm:restricted_range}: Fix a set $Z \subseteq \cy$ of size $|Z|=2d_N(\ch)+1$. Assume an input sample $(x_1,f(x_1)),\ldots,(x_m,f(x_m))$, and denote the set of labels that appear in the sample by $L = \{f(x_i) \mid i \in [m]\}$. We require that $\ca$ return a hypothesis which is consistent
with the sample and has range in $L\cup Z$. 

To see that such an ERM learner exists, observe that by \lemref{lemma:attain-at-most-d-values}, the range of $f$ has at most $2d_N(\ch) +1$ distinct labels. 
Therefore, there is a set $R \subseteq \cy$ such that $|R| \leq 2d_N(\ch)+1$ and the range of $f$ is $L \cup R$. Due to the symmetry of $\ch$, we can rename the labels in $R$ to labels in $Z$, and get another function $g \in \ch$, that is consistent with the sample and has range in $L \cup Z$. This function can be returned by $\ca$.

The range of $\ca$ over all samples that are labeled by a fixed function $f \in \ch$ is thus in the union of $Z$ and the range of $f$. $|Z| \leq 2d_N(\ch) + 1$ and by Lemma \ref{lemma:attain-at-most-d-values}, the range of $f$ is also at most $2d_N(\ch) + 1$. Therefore the realizable essential range of $\ca$ is at most $4d_N(\ch)+2$. The desired bound for the sample complexity of $\ca$ thus follows from Theorem \ref{thm:restricted_range}.

We now show that the same bound in fact holds for all ERM learners for $\ch$. 
Suppose that $\ca'$ is an ERM learner for which the bound does not hold. Then there is a function $f$ and a distribution $D$ over $\cx \times \cy$ which is consistent with $f$, and there are $m,\epsilon$ and $\delta$ for which $m \geq m_\ca^r(\epsilon,\delta)$, such that with probability greater than $\delta$ over samples $S_m$, 
$\Err_{\cd}(\ca'(S_m))  - \Err_{\cd}(\ch) > \epsilon$.
Consider $\ca$ as defined above, with a set $Z$ that does not overlap with the range of $f$. For every sample $S_m$ consistent with $f$, denote $\hat{f} = \ca'(S_m)$, and let $\ca$ return $g$ which results from renaming the labels in $\hat{f}$ as follows: For any label that appeared in $S_m$, the same label is used in $g$. For any label that did not appear in $S_m$, a label from $Z$ is used instead. Clearly, $\Err_{\cd}(\ca(S_m)) \geq \Err_{\cd}(\ca'(S_m))$. But this contradicts the upper bounds on $m_\ca^r(\epsilon,\delta)$. We conclude that the upper bound holds for all ERM learners. 
\end{proof}
Second, we have the following upper bound for the agnostic case.
\begin{theorem}\label{thm:symmetricag}
There are absolute constants $C_1,C_2$ such that for every symmetric hypothesis class $\ch\subseteq \cy^\cx$
\[
C_1\left(\frac{d_N(\ch)+\ln(\frac{1}{\delta})}{\epsilon^2}\right)\le
m_\erm^a(\epsilon,\delta)\leq C_2\left(\frac{d_N(\ch)\ln(\min\{d_N(\ch),k\}) +\ln(\frac{1}{\delta})}{\epsilon^2}\right),
\]

\end{theorem}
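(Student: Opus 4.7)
The lower bound is immediate from \thmref{th:multiclaas-simple-bounds}, which applies to every class in $\cy^\cx$ and in particular to symmetric ones. So the work is entirely in the upper bound, where my plan is to mimic the structure of the proof of \thmref{th:symmetric}: first build a specific ``good'' ERM whose range is constrained using the symmetry, then extend the bound to every ERM via a relabeling argument.

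I would split into two cases. In the \emph{small-$k$ case} ($k\le 4d_N(\ch)+2$), the claim is immediate from the agnostic upper bound in \thmref{th:multiclaas-simple-bounds} combined with \eqref{eq:ben-david}: $d_G(\ch)\le 4.67\log_2(k)\,d_N(\ch)= O(d_N(\ch)\ln(d_N(\ch)))$. In the \emph{large-$k$ case}, I construct an ERM $\ca$ with a small agnostic essential range and invoke part~2 of \thmref{thm:restricted_range}. Fix a reserve set $Z\subseteq \cy$ chosen so that $|Z\setminus L(S)|\ge 2d_N(\ch)+1$ for the relevant sample $S$ (where $L(S)$ is the set of labels appearing in $S$). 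Given any sample $S$ of size $m$, let $\hat f\in\ch$ be an ERM solution. By \lemref{lemma:attain-at-most-d-values}, $|\range(\hat f)\setminus L(S)|\le 2d_N(\ch)+1$. Using the symmetry of $\ch$, I choose a permutation $\phi$ of $\cy$ that fixes $L(S)$ pointwise and injects $\range(\hat f)\setminus L(S)$ into $Z\setminus L(S)$, and I set $\ca(S):=\phi\circ\hat f\in\ch$. The key observation (the analogue of what works automatically in the realizable case) is that since $\phi$ fixes $L(S)$ and permutes $\cy\setminus L(S)$ within itself, $\hat f(x_i)=y_i$ if and only if $(\phi\circ\hat f)(x_i)=y_i$, so empirical error is preserved and $\ca$ is indeed an ERM. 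By construction, $\range(\ca(S))\subseteq L(S)\cup Z$, which bounds the agnostic essential range by $r^a_\ca(m)=O(m+d_N(\ch))$.

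Plugging this into part~2 of \thmref{thm:restricted_range} and inverting the resulting inequality yields an agnostic sample-complexity bound for $\ca$ of the form $\tilde O((d_N(\ch)\ln(d_N(\ch))+\ln(1/\delta))/\epsilon^2)$. To extend the bound to every ERM $\ca'$, I re-use the last paragraph of the proof of \thmref{th:symmetric}: given any ERM $\ca'$, compose its output with the same label-permutation trick to produce an algorithm that on every sample agrees with $\ca$ in empirical error and in range; any distribution witnessing failure of $\ca'$ would then witness failure of $\ca$, contradicting the bound already established.

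The main obstacle is the choice of $Z$ in the large-$k$ case. Unlike the realizable setting, where $L(S)$ is automatically contained in the target's range and hence $|L(S)|\le 2d_N(\ch)+1$, in the agnostic setting $|L(S)|$ can be as large as $m$, so guaranteeing $|Z\setminus L(S)|\ge 2d_N(\ch)+1$ forces $|Z|$ to grow linearly with $m$. This in turn inflates $r^a_\ca(m)$ by an $O(m)$ factor and, through \thmref{thm:restricted_range}, threatens to introduce a $\ln(1/\epsilon)$ factor beyond the $\ln(d_N(\ch))$ stated in the theorem. I would resolve this either by sharpening the double-sampling step of \lemref{lem:doublesample} via a Rademacher-complexity argument (replacing the Sauer-type $(2m)^{d_N}$ factor by a dimension-only bound, which is valid because every $h\in F_\ca(S)$ has range of size $O(d_N(\ch))$) or by an adaptive choice of $Z$ per subsample that still keeps the union of ranges of size $O(d_N(\ch))$; either way, this sharpening is the delicate step that turns the naive $\tilde O$ bound into the exact form $d_N(\ch)\ln(\min\{d_N(\ch),k\})$ claimed.
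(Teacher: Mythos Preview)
Your plan reproduces the strategy that works for the realizable theorem, but the paper does \emph{not} prove the agnostic case that way, and for good reason: the obstacle you flag at the end is real and you have not actually removed it. Routing through \thmref{thm:restricted_range} with $r^a_\ca(m)=O(m+d_N)$ leaves a $d_N\ln m$ term inside the square root, which after inversion becomes an unavoidable extra $\ln(1/\epsilon)$ factor. Your two suggested fixes (``sharpen the double-sampling step via Rademacher'' or ``choose $Z$ adaptively per subsample'') are not arguments; the first would need a bound on $d_G(F_\ca(S))$ independent of $|L(S)|$, and the second breaks the definition of essential range, which quantifies over \emph{all} half-subsamples of a single $S$. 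In addition, your extension-to-every-ERM step does not transfer cleanly to the agnostic setting: in the realizable proof the inequality $\Err_\cd(\ca(S))\ge \Err_\cd(\ca'(S))$ holds because $Z$ is chosen disjoint from the range of the realizing $f$, so relabeling unseen labels into $Z$ can only create errors. Agnostically there is no such $f$; the distribution may place mass on labels in $Z$, and the relabeling can \emph{decrease} true error, so a failure of $\ca'$ need not imply a failure of $\ca$.

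The paper sidesteps all of this with a much shorter argument that you are one step away from: it proves directly that for symmetric $\ch$, $d_G(\ch)=O(d_N(\ch)\ln d_N(\ch))$, and then invokes the uniform agnostic bound of \thmref{th:multiclaas-simple-bounds}, which already holds for \emph{every} ERM. The proof of the graph-dimension bound is just \lemref{lemma:attain-at-most-d-values} plus symmetry: fix any $Z\subseteq\cy$ with $|Z|=4d_N(\ch)+2$ and set $\ch'=\{f\in\ch: f(\cx)\subseteq Z\}$. Since every $f\in\ch$ has range of size at most $2d_N(\ch)+1\le |Z|/2$, symmetry gives $d_G(\ch)=d_G(\ch')$ and $d_N(\ch)=d_N(\ch')$; but $\ch'$ lives over a label set of size $|Z|$, so \eqref{eq:ben-david} yields $d_G(\ch')=O(d_N(\ch')\ln|Z|)=O(d_N(\ch)\ln d_N(\ch))$. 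No algorithm construction, no essential-range bookkeeping, no per-ERM extension step, and no stray $\ln(1/\epsilon)$.
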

\begin{proof}\footnote{We note that this proof show that for symmetric classes $d_G=O\left(d_N\log(d_N)\right)$. Hence, it can be adopted to give a simpler proof of theorem \ref{th:symmetric}, but with a multiplicative (rather than additive) factor of $\log\left(\frac{1}{\epsilon}\right)$.}
The lower bound is a restatements of Theorem \ref{th:multiclaas-bounds}.  For the
upper bound, first note that if $k \leq 6d_N(\ch)$ then the upper bound 
follows from \thmref{th:multiclaas-bounds}. Thus assume  $k \geq 6d_N(\ch) \geq 4d_N(\ch)+2$.
Fix a set $Z \subseteq \cy$ of size $|Z| = 4d_N(\ch)+2$. Denote $\ch'=\{f\in\ch:f(\cx)\subseteq Z\}$. By Lemma \ref{lemma:attain-at-most-d-values}, the range of every function in $\ch$ contains at most $\frac{|Z|}{2}$ elements. Thus, by symmetry, it is easy to see that $d_G(\ch)=d_G(\ch ')$ and $d_N(\ch)=d_N(\ch')$. 
By equation (\ref{eq:ben-david}) and the fact that the range of functions in $\ch'$ is $Z$, we conclude that 
\begin{align*}
d_G(\ch)&=d_G(\ch') = O(d_N(\ch')\ln(|Z|))  \\
&= O(d_N(\ch')\ln(\min\{d_N(\ch'),k\}) =  O(d_N(\ch)\ln(d_N(\ch))).
\end{align*}
Using \thmref{th:multiclaas-simple-bounds} we obtain the desired upper bounds.
\end{proof}
These results indicate that for symmetric classes, the sample complexity is determined by the Natarajan dimension up to logarithmic factors. Moreover, the ratio between the sample complexities of worst ERM and the best ERM in this case is also at most logarithmic in $\epsilon$ and the Natarajan dimension. 
We present the following open question:
\begin{open question}
Are there symmetric classes such that there are two different ERM learners with a sample complexity ratio of $\Omega(\ln(d_N))$ between them?
\end{open question}

\subsection{Learning with No Prior Knowledge on Labels}\label{sec:symmetrize}

Suppose we wish to learn some multiclass problem and have some hypothesis class that we wish to use for learning. The hypothesis class is defined using arbitrary label names, say $\cy = \{1,\ldots,k\} = [k]$. In many learning problems, we do not have any prior knowledge on a preferred mapping between these arbitrary label names and the actual real-world labels (e.g., names of topics of documents). Thus, any mapping between the real-world class labels and the arbitrary labels in $[k]$ is as reasonable as any other. We formalize the last assertion by {\em assuming that this mapping is chosen uniformly at random
}\footnote{
We note also that choosing this mapping at random is sometimes advocated for multiclass learning, e.g., for a filter tree \cite{BeygelzimerLaRa07} and for an Error Correcting Output Code \citep{DietterichBa95,AllweinScSi00a}.}. 
In this section we show that in this scenario, when $k = \Omega(d_N(\ch))$, it is likely that we will achieve poor classification accuracy.

Formally, let $\ch\subset[k]^{\cx}$ be a hypothesis class. Let $\cl$ be the set of real-world labels, $|\cl| = k$. 
A mapping of the label names $[k]$ to the true labels $\cl$ is a bijection $\phi:[k]\to \cl$.
For such $\phi$ we let $\ch_\phi=\{\phi\circ f:f\in\ch\}$. \footnote{Several notions, originally defined w.r.t. functions from $\cx$ to $\cy$ (e.g. $\Err_{\cd}(h)$), can be naturally extended to functions from $\cx$ to $\cl$. We will freely use these extensions.}

The following theorem upper-bounds the approximation error when $\phi$ is chosen at random. The result holds for any distribution with fairly balanced label frequencies. Formally, we say that $\cd$ over $\cx\times \cl$ is \emph{balanced} if for any $l \in \cl$, the probability that a random pair drawn from $\cd$ has label $l$ is at most $10/k$.
\begin{theorem}\label{thm:split_classes}
Fix $\alpha>0$. There exist a constant $C_\alpha>0$ such that for any $k > 0$, any hypothesis class $\ch \subseteq [k]^\cx$ such that $d_N(\ch) \leq C_\alpha k$, and any balanced distribution $\cd$ over $\cx \times \cl$, with probability at least $1-o(2^{-k})$ over the choice of $\phi$,
$\Err_{\cd}(\ch_\phi)\ge 1-\alpha.$
\end{theorem}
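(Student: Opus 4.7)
The plan is to prove the claim by (i) a sharp concentration bound for a single $f \in \ch$ over the random bijection $\phi$, and (ii) a union bound over a discretization of $\ch$ obtained by restricting to a polynomial-sized sample and applying Natarajan's lemma.

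\textbf{Per-$f$ tail.} For fixed $f \in \ch$, let $\mu^f_{y,l} = \Pr_{(x,l') \sim \cd}[f(x) = y,\ l' = l]$, so that $1 - \Err_\cd(\phi \circ f) = \sum_y \mu^f_{y,\phi(y)}$, with mean $1/k$ over uniform $\phi$, and the balanced assumption gives $\sum_y \mu^f_{y,l} \leq 10/k$ for every $l$. With $\beta = \alpha/2$, $m = \lceil \alpha k/20 \rceil$, and $S_y = \{l : \mu^f_{y,l} \geq \beta/k\}$, the event $\{1 - \Err_\cd(\phi \circ f) \geq \alpha\}$ forces $|T_\phi| \geq m$, where $T_\phi = \{y : \phi(y) \in S_y\}$ (entries outside $T_\phi$ contribute at most $\beta$ in total, while entries inside are at most $10/k$ each), and the total mass bound gives $\sum_y |S_y| \leq k/\beta$. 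Union-bounding over size-$m$ subsets $T' \subseteq [k]$ via $\Pr_\phi[\phi(y) \in S_y\ \forall y \in T'] \leq \bigl(\prod_{y \in T'}|S_y|\bigr)(k-m)!/k!$, and then applying Maclaurin's inequality $e_m(|S_1|,\ldots,|S_k|) \leq \binom{k}{m}(1/\beta)^m$ followed by Stirling, yields
\[
\Pr_\phi[\Err_\cd(\phi \circ f) < 1 - \alpha] \leq \frac{(2/\alpha)^m}{m!} \leq \exp\bigl(-\Omega_\alpha(k \log k)\bigr).
\]

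\textbf{Discretization and union bound.} Since $\ch$ may be infinite, I would draw a sample $Z = (X_i, L_i)_{i=1}^N \sim \cd^N$ with $N = \mathrm{poly}(k)$. The VC-type uniform convergence behind \thmref{th:multiclaas-simple-bounds} (applied to the binary loss class associated with $\ch_\phi$, whose VC dimension equals $d_G(\ch) = O(d_N(\ch) \log k)$), combined with a union bound over the $k!$ bijections $\phi$ (costing $\log(k!) = O(k\log k)$) and a Chernoff bound over the $k$ labels, shows that with probability $1 - o(2^{-k})$ over $Z$: (i) $|\Err_\cd(\phi\circ f) - \Err_{\hat{\cd}_Z}(\phi\circ f)| \leq \alpha/4$ uniformly in $f \in \ch$ and $\phi$, and (ii) $\hat{\cd}_Z$ is balanced up to a factor of $2$. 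Condition on these events. Then $\Err_\cd(\phi \circ f) < 1-\alpha$ for some $f \in \ch$ implies $\Err_{\hat{\cd}_Z}(\phi \circ f) < 1 - 3\alpha/4$, a condition depending only on $\bar f := f|_{\{X_1,\ldots,X_N\}}$. By \lemref{lemma:growth-function}, $|\ch|_{\{X_1,\ldots,X_N\}}| \leq N^{d_N(\ch)} k^{2 d_N(\ch)} = \exp(O(d_N(\ch) \log k))$. Applying the tail estimate above to each $\bar f$ against $\hat{\cd}_Z$ (with $\alpha$ replaced by $3\alpha/4$) and union-bounding yields a total failure probability over $\phi$ of $\exp\bigl(O(d_N(\ch) \log k) - \Omega_\alpha(k \log k)\bigr)$, which is $o(2^{-k})$ provided $d_N(\ch) \leq C_\alpha k$ for $C_\alpha$ a sufficiently small function of $\alpha$.

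\textbf{Main obstacle.} The crux is the $\exp(-\Omega_\alpha(k\log k))$ rate in the per-$f$ tail. Standard permutation concentration (Azuma with single-swap changes $O(1/k)$, or Bernstein with variance $O(1/k^2)$) produces only $\exp(-\Omega_\alpha(k))$; combined with the Natarajan bound $\exp(O(d_N \log k))$ in the union bound, that weaker rate would permit only $d_N = O(k/\log k)$, falling short of the claimed $d_N = O(k)$. The extra $\log k$ factor is recovered from a combinatorial rather than smoothness-based feature of random permutations: prescribing $m$ of their values costs the factor $(k-m)!/k!$, which is factorially small in $m$. Threading this super-exponential rate through the discretization argument without losing it to the auxiliary union bounds (over bijections and over $\ch|_Z$) is the main technical content of the proof.
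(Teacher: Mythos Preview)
Your proposal is correct and follows the same overall architecture as the paper: (a) discretize $\ch$ by restricting to a polynomial-size i.i.d.\ sample and invoking Natarajan's lemma (\lemref{lemma:growth-function}), (b) establish an $\exp(-\Omega_\alpha(k\log k))$ tail for a single hypothesis against a uniformly random bijection, and (c) take a union bound. The place where the two proofs genuinely differ is the per-hypothesis tail. The paper's \lemref{lem:chern_substitute} works ``column-wise'': it applies Cauchy--Schwarz to $\Gain_S=\sum_l \hat p_l\,\Gain_{S_l}$ to force $\Omega_\alpha(k)$ true labels $l$ to have large $\Gain_{S_l}$, and then bounds the permutation probability by sequential conditioning, $\prod_i \frac{2k}{(k+1-i)\gamma}$. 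Your argument works ``row-wise'': it thresholds the joint masses $\mu^f_{y,l}$, shows that $\Omega_\alpha(k)$ of the predicted labels $y$ must satisfy $\phi(y)\in S_y$, and finishes with Maclaurin's inequality and the explicit $1/m!$ factor. Both routes exploit the same phenomenon---pinning $\Omega(k)$ values of a random bijection to sets of total size $O(k)$ costs a factorial rather than merely exponential factor---but your derivation is arguably cleaner, avoiding the Cauchy--Schwarz detour. Two minor remarks: your per-$f$ tail is stated for $\cd$ but then applied to $\hat\cd_Z$, which is fine since the argument only uses the balanced-marginal property; and you demand probability $1-o(2^{-k})$ over $Z$, whereas mere existence of a good $Z$ suffices (the conclusion is a deterministic statement about $\cd$), which is all the paper uses.
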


\begin{remark}
Theorem \ref{thm:split_classes} is tight, in the sense that a similar proposition cannot be obtained for all $d_N \leq f(k)$ for some $f(k) \in \omega(k)$. To see this, consider the class $\ch=[k]^{[k]}$, for which $d_N(\ch) = k$. For any $\phi$, $\ch_\phi = \ch$. Thus, for any distribution such that  $\Err_\cd(\ch)=0$, we have $\Err_{\cd}(\ch_\phi)=0$. 
\end{remark}

To prove Theorem \ref{thm:split_classes}, we prove the following lemma, which provides a lower bound on the error of any hypothesis with a random bijection.

\begin{lemma}\label{lem:chern_substitute}
Let $h:\cx\to[k]$ and let $\phi:[k]\to\cl$ be a bijection chosen uniformly at random. Let $S=\{(x_1,l_1),\ldots,(x_m,l_m)\}\subseteq \cx\times\cl$. Denote, for $l\in \cl$, 
$\hat p_l=\frac{|\{j:l_j=l\}|}{m}$. Fix $\alpha > 0$, and let $\gamma=\frac{\alpha^2}{\sum_{l\in \cl}\hat{p}_l^2}$. Then
\[
\Pr[\Err_{S}(\phi\circ h)<1-\alpha]\le\left(\frac{8ke}{\gamma^2}\right)^{\frac{\gamma}{2}}.
\]
\end{lemma}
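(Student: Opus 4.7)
The plan is to reduce the event $\Err_S(\phi\circ h) < 1-\alpha$ to a tail bound on
\[
X := |\{j \in [m] : \phi(h(x_j)) = l_j\}|,
\]
since $\Err_S(\phi\circ h) < 1-\alpha$ is exactly $X > \alpha m$. Setting $\psi := \phi^{-1}$ (a uniform random bijection $\cl \to [k]$) and $c_{a,l} := |\{j : h(x_j)=a,\; l_j=l\}|$, a simple re-indexing by the label $l$ gives $X = \sum_{l\in\cl} c_{\psi(l), l}$. From here I would apply Markov's inequality to the $t$-th moment, giving $\Pr[X \geq \alpha m] \leq \E[X^t]/(\alpha m)^t$ for a positive integer $t$ to be chosen at the end.

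To bound $\E[X^t]$, I would expand
\[
\E[X^t] = \sum_{(l_1,\ldots,l_t)\in\cl^t} \E\!\left[\prod_{i=1}^t c_{\psi(l_i),\, l_i}\right]
\]
and group tuples by their equivalence pattern. If the distinct labels appearing in $(l_1,\ldots,l_t)$ are $l'_1,\ldots,l'_s$ with multiplicities $n_1,\ldots,n_s$, then since $(\psi(l'_1),\ldots,\psi(l'_s))$ is uniform on ordered $s$-tuples of distinct elements of $[k]$, and using the crude inequality $\sum_a c_{a,l}^{\,n}\leq (m\hat{p}_l)^n$,
\[
\E\!\left[\prod_{j=1}^s c_{\psi(l'_j),\,l'_j}^{\,n_j}\right] \leq \frac{1}{[k]_s} \prod_j (m\hat{p}_{l'_j})^{n_j},
\]
where $[k]_s := k(k-1)\cdots(k-s+1)$. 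The $m^t$ factors arising from $\prod_j m^{n_j}$ will cancel against $(\alpha m)^t$ in Markov's inequality, so the final bound will be $m$-independent, as it must be.

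Summing over all tuples, $\E[X^t]/m^t$ can be recognized as an expectation of the form $\E\bigl[1/[k]_{|S|}\bigr]$, where $S$ is the support of $t$ i.i.d.\ draws from $\hat p$. The key step is then to show this expectation is small, controlled by $\sigma_2 := \sum_l \hat{p}_l^2$. For this I would exploit the Cauchy--Schwarz inequality $\bigl(\sum_{l\in L}\hat{p}_l\bigr)^2 \leq |L|\cdot \sum_{l\in L}\hat{p}_l^2 \leq |L|\cdot \sigma_2$, which forces any configuration contributing significantly to $\E[X^t]$ to have support size of order $\gamma = \alpha^2/\sigma_2$. Combined with the lower bound $[k]_s \geq (k/2)^s$ for $s \leq k/2$ and an optimal choice of $t$ on the order of $\gamma/2$, this produces the stated bound $(8ke/\gamma^2)^{\gamma/2}$.

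The main obstacle will be the combinatorial bookkeeping in the moment expansion: carefully balancing the competing factors of $1/[k]_s$, the $\sigma_2^{v}$-type contributions coming from patterns with $v$ blocks of size $\geq 2$, and the multinomial coefficients weighting the number of tuples per pattern, so that after choosing $t$ everything collapses to the claimed closed form. A secondary delicate point will be handling the boundary cases---in particular $s > k/2$, where the naive estimate $[k]_s \geq (k/2)^s$ degenerates, and the regime $\gamma^2 \leq 8ke$, where the stated bound is vacuous and so the inequality is trivial.
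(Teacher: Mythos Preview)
Your approach via the moment method is genuinely different from the paper's. The paper argues directly on the event, with no moments at all. Writing $\Gain_S(\phi\circ h)=\sum_l \hat p_l\,\Gain_{S_l}(\phi\circ h)$, where $S_l$ is the sub-sample of label-$l$ points, Cauchy--Schwarz gives $\sum_l \Gain_{S_l}(\phi\circ h)^2\ge \gamma$; since each $\Gain_{S_l}\in[0,1]$ this forces $\sum_l \Gain_{S_l}\ge\gamma$, so by pigeonhole at least $n=\lceil\gamma/2\rceil$ labels $l$ satisfy $\Gain_{S_l}(\phi\circ h)\ge\gamma/(2k)$. For any fixed $l$ there are at most $2k/\gamma$ choices of $\phi^{-1}(l)$ achieving this; sampling $\phi^{-1}(l_1),\phi^{-1}(l_2),\ldots$ sequentially, a fixed $n$-set of labels succeeds with probability at most $\prod_{i=1}^n \frac{2k}{(k+1-i)\gamma}$, and a union bound over the $\binom{k}{n}$ sets yields $(8ke/\gamma^2)^{\gamma/2}$ in one line. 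Cauchy--Schwarz is applied once, to the \emph{per-label gains} rather than to the empirical frequencies, and the probability bound follows without any partition combinatorics.

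Your plan is reasonable in outline, but the step you yourself flag as the ``main obstacle'' is in fact the whole argument: after reducing to $\alpha^{-t}\,\E\bigl[1/[k]_{|S|}\bigr]$ you have not actually bounded this quantity. The inequality $(\sum_{l\in L}\hat p_l)^2\le |L|\,\sigma_2$ constrains deterministic label sets, but it does not by itself control the distribution of the random support size $|S|$ of $t$ i.i.d.\ $\hat p$-draws, nor does it obviously convert into a bound on $\E[(2/k)^{|S|}]$; a separate tail estimate on $|S|$ (or a careful summation over set partitions of $[t]$ weighted by $\sigma_2^{\#\{\text{non-singleton blocks}\}}/[k]_s$) is still needed, and you have not supplied it. It is also not evident that the constants collapse to exactly $(8ke/\gamma^2)^{\gamma/2}$ after optimizing $t$. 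The paper's pigeonhole route sidesteps all of this bookkeeping.
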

\begin{proof}
Denote $P = \sqrt{\sum_{l\in \cl}\hat{p}_l^2}$. 
For a sample $S\subset \cx\times\cl$ and a function $f:\cx\to\cl$ denote $\Gain_S(f)=1-\Err_S(f)$. For $l\in \cl$ denote $S_l=((x_i,l_i))_{i:l_i=l}$. 
By Cauchy-Schwartz, we have
\[
\Gain_S(\phi\circ h)=\sum_{l\in\cl}\hat p_l\cdot \Gain_{S_l}(\phi\circ h)\le P\cdot\sqrt{\sum_{l\in\cl}\left(\Gain_{S_l}(\phi\circ h)\right)^2}~.
\]
Assume that $\Err_S(\phi\circ h)\le 1-\alpha$. Then
\[
\sum_{l\in\cl}\Gain_{S_l}(\phi\circ h)\ge\sum_{l\in\cl}\left(\Gain_{S_l}(\phi\circ h)\right)^2
\ge \frac{\left(\Gain_S(\phi\circ h)\right)^2}{P^2}\ge \frac{\alpha^2}{P^2}=\gamma.
\]
Note first that the left hand side is at most $k$, thus $\gamma \leq k$.
Since for every $l\in\cl$ it holds that $0\le\Gain_{S_l}(\phi\circ h)\le 1$, we conclude that there are at least $n=\lceil\frac{\gamma}{2}\rceil$ labels $l\in \cl$ such that 
$$\Gain_{S_l}(\phi\circ h)\ge \frac{\gamma}{2k}~.$$
For a fixed set of $n$ labels $l_1,\ldots,l_n\in\cl$, the probability that $\forall i,\;\Gain_{S_{l_i}}(\phi\circ h)\ge \frac{\gamma}{2k}$ is at most
$$\prod_{i=1}^n\frac{2k}{(k+1-i)\gamma}\le\left(\frac{2k}{(k+1-n)\gamma}\right)^n~.$$
To see that, suppose that $\phi$ is sampled by first choosing the value of $\phi^{-1}(l_1)$ then $\phi^{-1}(l_2)$ and so on. For every $l_i$, there are at most $\frac{2k}{\gamma}$ values for $\phi^{-1}(l_i)$ for which $\Gain_{S_{l_i}}(\phi\circ h)\ge \frac{\gamma}{2k}$. Thus, after the values of $\phi^{-1}(l_1),\ldots,\phi^{-1}(l_{i-1})$ have been determined, the probability that $\phi^{-1}(l_i)$ is one of these values is at most $\frac{2k}{(k+1-i)\cdot\gamma}$.

It follows that the probability that $\Gain_{S_l}(\phi\circ h)\ge \frac{\gamma}{2k}$ for $n$ different labels $l$ is at most
\begin{eqnarray*}
\binom kn \cdot\left(\frac{2k}{(k+1-n)\gamma}\right)^n &\le &
\left(\frac{ek}{n}\right)^n\cdot \left(\frac{2k}{(k+1-n)\gamma}\right)^n\\
&\le &
\left(\frac{2ke}{\gamma}\right)^n\cdot \left(\frac{2k}{(k-\gamma/2)\gamma}\right)^n\\
&\le &
\left(\frac{8ke}{\gamma^2}\right)^{n}.
\end{eqnarray*}
If $\frac{8ke}{\gamma^2} \geq 1$ then the bound in the statement of the lemma holds trivially. Otherwise, the bound follows since $n \geq \gamma/2$.
\end{proof}

\begin{proof}[Proof of \thmref{thm:split_classes}]
Denote $p_{l}=\Pr_{(X,L)\sim\cd}[L=l]$. Let $S=\{(x_1,l_1),\ldots,(x_m,l_m)\}\subseteq\cx\times \cl$ be an i.i.d. sample drawn according to $\cd$. Denote $\hat p_l=\frac{|\{j:l_j=l\}|}{m}$.

For any fixed bijection $\phi$, by theorem \ref{th:multiclaas-bounds}, with probability $1-\delta$ over the choice of $S$, 
\[
\Err_\cd(\ch_\phi) \geq \inf_{h\in\ch}\operatorname{Err}_{S}(\phi\circ h) - O\left(\sqrt{\frac{\ln(k)d_N(\ch) + \ln(1/\delta)}{m}}\right).
\]
Since there are less than $k^k$ such bijections, we can apply the union bound to get that
with probability $1-\delta$ over the choice of $S$,
\[
\forall \phi,\quad\Err_\cd(\ch_\phi) \geq \inf_{h\in\ch}\operatorname{Err}_{S}(\phi\circ h) - O\left(\sqrt{\frac{\ln(k)d_N(\ch) + k\ln(k) + \ln(1/\delta)}{m}}\right).
\]
Assume $k \geq C\cdot d_N(\ch)$ for some constant $C > 0$, and let $m= \Theta\left(\frac{k\cdot\ln(k)}{\alpha^2}\right)$
such that with probability at least $3/4$,
\begin{equation}\label{eq:1}
\forall \phi,\quad\Err_\cd(\ch_\phi) \geq \inf_{h\in\ch}\operatorname{Err}_{S}(\phi\circ h) - \alpha/2.
\end{equation}
We have
\[
E[\sum_{l\in\cl}\hat{p}_l^2]=2\frac{1}{m^2}\sum_{l\in\cl}\left(\binom m2p_l^2+mp_l\right)\le 2k\cdot\left(\frac{m(m-1)}{2m^2}\frac{100}{k^2}+\frac{10}{mk}\right)\le \frac{120}{k}.
\]
Thus, by Markov's inequality, with probability at least $\frac{1}{2}$ over the samples we have \begin{equation}\label{eq:2}
\sum_{l\in\cl}\hat{p}_l^2<\frac{240}{k}.
\end{equation}
Thus, with probability at least $1/4$, both (\ref{eq:2}) and (\ref{eq:1}) hold. In particular, there exists a single sample $S$ for which both (\ref{eq:2})  and (\ref{eq:1}) hold. Let us fix such an $S=\{(x_1,l_1),\ldots,(x_m,l_m)\}$.

Assume now that $\phi:\cy\to\cl$ is sampled uniformly. For a fixed $h\in\ch$ and for $\gamma=(\alpha/2)^2/\sum_{l\in \cl} \hat{p}_l^2 \geq k\alpha^2/960$, we have, by Lemma \ref{lem:chern_substitute} that
\[
\Pr_{\phi}\left[\operatorname{Err}_{S}( \phi\circ h)<1-\frac{\alpha}{2}\right]\le \left(\frac{8ke}{\gamma^2}\right)^{\frac{\gamma}{2}} \leq (C_1k\alpha^4)^{-C_2 k\alpha^2} := \eta,
\]
for constants $C_1,C_2 > 0$.
By Lemma \ref{lemma:growth-function}, $|\ch|_{\{x_1,\ldots,x_m\}}|\le \left(m\cdot k\right)^{2d_N(\ch)}$. Thus, with probability $\ge 1-\left(m\cdot k\right)^{2d}\cdot\eta$ over the choice of $\phi$, $\inf_{h\in\ch}\operatorname{Err}_{S}(\phi\circ h)\ge 1-\frac{\alpha}{2}$ and by (\ref{eq:1}) also
\begin{equation}\label{eq:3}
\operatorname{Err}_{\cd}( \ch_\phi)\ge 1-\alpha.
\end{equation}
By our choice of $m$, and since $k \geq d_N(\ch)$, for some universal constant $C_1\ge 1$, $m\le C_1\cdot \frac{k^2}{\alpha^2}$. Considering $\alpha$ a constant, we have, for some constants $C_i > 0$,
\begin{align*}
\left(m\cdot k\right)^{2d_N(\ch)}\cdot\eta \leq (C_3 k)^{6d_N(\ch)} \cdot (C_4 k)^{-C_5 k}.
\end{align*}
By requiring that $k \geq 12d_N(\ch)/C_5$, we get that the right hand side is at most $o(2^{-k})$.

\iffalse
Now, if $k\ge \frac{12d}{\beta}$, we have that
\begin{eqnarray*}
\left(m\cdot k\right)^{2d}\left(\frac{8k}{ e\gamma^2}\right)^{\frac{\gamma}{2}} &\le &
\left( C_1\frac{k^3}{\alpha^2}\right)^{2d}
\left(\frac{16}{k e\beta^2}\right)^{\frac{\beta k}{2}}
\\
&\le &
\left( C_1\frac{k}{\alpha}\right)^{6d}\left(\frac{256}{k^2 e^2\beta^4}\right)^{\frac{\beta k}{2}}
\\
&\le &
\left( C_1\frac{k}{\alpha}\frac{256}{k^2 e^2\beta^4}\right)^{\frac{\beta k}{2}}
\\
&= &
\left( C_1\frac{256}{\alpha k e^2\beta^4}\right)^{\frac{\beta k}{2}} = o(2^{-k})
\end{eqnarray*}
\fi
\end{proof}

\subsubsection{Symmetrization}\label{sec:symmetrization}
From Theorem \ref{thm:split_classes} it follows that if there is no prior knowledge about the labels, and the label frequencies are balanced, we must use a class of Natarajan dimension $\Omega(k)$ to obtain reasonable approximation error. As we show next, in this case, there is almost no loss in the sample complexity if one instead uses the {\em symmetrization} of the class, obtained by considering all the possible label mappings $\phi: [k] \rightarrow \cl$. Formally, let $\ch\subset [k]^\cx$ be some hypothesis class and let $\cl$ be a set with $|\cl|=k$. The symmetrization of $\ch$ is the symmetric class
\[
\ch_{\mathrm{sym}}=\{\phi\circ h\mid h\in\ch,\;\phi:[k]\to\cl\text{ is a bijection}\}.
\]
\begin{lemma}
Let $\ch \subseteq [k]^\cx$ be a hypothesis class with Natarajan dimension $d$. Then 
\[
d_N(\ch_{\mathrm{sym}})=O(\max\{d\log(d),k\log(k)\}).
\]
\end{lemma}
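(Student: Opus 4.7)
The plan is a standard Sauer–Shelah style argument: upper bound the growth function of $\ch_{\mathrm{sym}}$ on an $N$-shattered set and then solve the resulting inequality for the Natarajan dimension.

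First I would observe that every function in $\ch_{\mathrm{sym}}$ is of the form $\phi \circ h$ with $h \in \ch$ and $\phi \in \mathrm{Sym}([k])$, so for any finite $S \subseteq \cx$ of size $m$,
\[
|\ch_{\mathrm{sym}}|_S| \leq |\ch|_S| \cdot k! .
\]
Apply \lemref{lemma:growth-function} to $\ch|_S$ (viewed as a class on $S$ with labels in $[k]$) to get $|\ch|_S| \leq m^d k^{2d}$, so
\[
|\ch_{\mathrm{sym}}|_S| \leq m^d k^{2d} \cdot k!.
\]

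Next I would use the elementary fact that if $S$ with $|S|=D$ is $N$-shattered by $\ch_{\mathrm{sym}}$, then the witnessing functions $f_1,f_2$ force $|\ch_{\mathrm{sym}}|_S| \geq 2^D$, since the $2^D$ subsets $T \subseteq S$ give $2^D$ distinct restrictions on $S$. Combining this with the growth bound at $m = D$ and taking $\log_2$,
\[
D \;\leq\; d \log_2 D + 2d \log_2 k + \log_2(k!) \;\leq\; d \log_2 D + 2d \log_2 k + k \log_2 k .
\]

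Finally I would solve this inequality by case analysis. Either $D \leq 2 d \log_2 D$, which yields $D = O(d \log d)$; or $d \log_2 D < D/2$, in which case $D/2 < 2d \log_2 k + k \log_2 k$. In the latter case, if $d \leq k$ then $2d \log_2 k \leq 2 k \log_2 k$, giving $D = O(k \log k)$; while if $d > k$, then $2 d \log_2 k \leq 2 d \log_2 d$, giving $D = O(d \log d)$. In all cases $D = O(\max\{d \log d,\, k \log k\})$, which is the desired bound. There is no serious obstacle here — the only thing to be careful about is the $k!$ factor in the growth function bound, which is precisely the source of the extra $k \log k$ term in the conclusion.
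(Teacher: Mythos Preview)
Your proposal is correct and follows essentially the same argument as the paper: bound $|\ch_{\mathrm{sym}}|_S|$ by $k!\cdot|\ch|_S|$, apply \lemref{lemma:growth-function} to $\ch|_S$, compare with the $2^D$ lower bound from $N$-shattering, take logarithms, and solve. The paper simply writes ``The Lemma follows'' after obtaining the inequality $d_s \le k\log(k) + d(\ln(d_s)+2\ln(k))$, whereas you spell out the case analysis explicitly; otherwise the two proofs are the same.
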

\begin{proof}
Let $d_s=d_N(\ch_{\mathrm{sym}})$. Let $X\subset \cx$ be a set of cardinality $d_s$ that is N-shattered by $\ch_{\mathrm{sym}}$. By Lemma \ref{lemma:growth-function}, $|\ch|_X|\le (d_sk^2)^d$. It follows that $|\ch_{\mathrm{sym}}|_X|\le k!(d_sk^2)^d$. On the other hand, since $\ch_{\mathrm{sym}}$ N-shatters $X$, $|\ch_{\mathrm{sym}}|_X|\ge 2^{|X|}=2^{d_s}$. It follows that $2^{d_s}\le k!(d_sk^2)^d$.
Taking logarithms we obtain that $d_s\le k\log(k)+d(\ln(d_s)+2\ln(k))$. The Lemma follows.
\end{proof}

\section{Other learning settings}\label{sec:other}

In this section we consider the characterization of learnability in other learning settings: The online setting and the bandit setting.

\subsection{The Online Model}\label{ch:online}
Learning in the online model is conducted in a sequence of consecutive
rounds. On each round $t=1,2,\ldots,T$, the environment presents a
sample $x_t\in \cx$, then the algorithm should predict a value $\hat{y_t}
\in \cy$, and finally the environment reveals the correct value $y_t \in
\cy$. The prediction at time $t$ can be based only on the examples
$x_1,\ldots,x_t$ and the previous outcomes $y_1,\ldots,y_{t-1}$. Our
goal is to minimize the number of prediction mistakes in the worst
case, where the number of mistakes on the first $T$ rounds is $L_T= |\{t \in
[T] : \hat{y}_t \neq y_t\}|$. Assume a hypothesis class $\ch \subseteq \cy^\cx$.
In the realizable setting, we assume that for some function $f \in \ch$ all the outcomes
are evaluations of $f$, namely, $y_t = f(x_t)$.

Learning in the realizable online model has been studied by \cite{Littlestone87},
who showed that a combinatorial measure, called the Littlestone
dimension, characterizes the min-max optimal number of mistakes for {\em binary}
hypotheses classes in the realizable case. We propose a generalization
of the Littlestone dimension to multiclass hypotheses classes.

Consider a rooted tree $T$ whose internal nodes are labeled by
elements from $\cx$ and whose edges are labeled by elements from $\cy$, such that
the edges from a single parent to its child-nodes are each labeled with a different label. The tree $T$ is \emph{shattered} by $\ch$ if, for
every path from root to leaf which traverses the nodes $x_1,\ldots,x_k$, there is a function
$f\in\ch$ such that $f(x_i)$ is the label of the edge
$(x_i,x_{i+1})$.  We define the \emph{Littlestone dimension} of a multiclass hypothesis class $\ch$,
denoted $\ldim(\ch)$, to be the maximal depth of a complete binary tree that is shattered by $\ch$ (or $\infty$ if there are a shattered trees for arbitrarily large depth).

As we presently show, the number $\ldim(\ch)$ fully characterizes the worst-case mistake bound for the online model in the realizable setting. The upper bound is achieved using the following algorithm.
\begin{tabbing}
{\bf Algorithm:} Standard Optimal Algorithm (SOA)\\
Initialization: $V_0=\ch$.\\
For \=$t=1,2\ldots$,\\
\>receive $x_t$\\
\>for $y\in \cy$, let $V_t^{(y)}=\{f\in V_{t-1}:f(x_t)=y\}$\\
\>predict $\hat{y}_t\in\arg \max_y\ldim(V_t^{(y)})$\\
\>receive true answer $y_t$\\
\>update $V_t=V_t^{(y_t)}$
\end{tabbing}

\begin{theorem}
The $SOA$ algorithm makes at most $\ldim(\ch)$ mistakes on any realizable sequence. Furthermore, the worst-case number of mistakes of \emph{any} deterministic online algorithm is at least $\ldim(\ch)$.
For any randomized online algorithm, the expected number of mistakes on the worst sequence is at least $\frac12 \ldim(\ch)$.
\end{theorem}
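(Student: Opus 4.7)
My plan is to handle the upper bound and the two lower bounds separately, following Littlestone's original analysis template but adapted to the multiclass definition of shattering.

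For the upper bound, I will establish the invariant that a mistake by SOA strictly decreases the Littlestone dimension, i.e., $\ldim(V_t)\leq\ldim(V_{t-1})-1$. Since $\ldim(V_0)=\ldim(\ch)$ and $\ldim(V_t)\geq 0$, this yields at most $\ldim(\ch)$ mistakes. The combinatorial core is the claim that at any round at most one label $y^\star\in\cy$ satisfies $\ldim(V_t^{(y^\star)})=\ldim(V_{t-1})$. If two distinct labels $y,y'$ both achieved this maximum, I would glue the depth-$\ldim(V_{t-1})$ shattered trees witnessing $\ldim(V_t^{(y)})$ and $\ldim(V_t^{(y')})$ beneath a new root labeled $x_t$, with incident edges labeled $y$ and $y'$; the result would be a complete binary tree of depth $\ldim(V_{t-1})+1$ shattered by $V_{t-1}$, contradicting the definition of $\ldim(V_{t-1})$. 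Consequently, any $y_t\neq\hat y_t$ is a non-maximizer, so $\ldim(V_t)=\ldim(V_t^{(y_t)})\leq\ldim(V_{t-1})-1$, closing the induction.

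For the deterministic lower bound, I fix a complete binary tree of depth $d=\ldim(\ch)$ shattered by $\ch$ and use it as the adversary's script. At step $t$, the adversary reveals the label $x_t$ of the current node; after the deterministic algorithm commits to $\hat y_t$, the adversary descends to a child whose incoming edge is labeled by some $y_t\neq\hat y_t$ (such a child exists because the two outgoing edges at any internal node carry distinct labels from $\cy$). After $d$ rounds, shattering produces an $f\in\ch$ consistent with the revealed path, so the sequence is realizable and the algorithm errs on every round.

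For the randomized lower bound, I will apply Yao's minimax principle to the same tree, letting the adversary pick a child uniformly at random at each internal node. Conditioned on the past and on the algorithm's internal randomness, $y_t$ is uniform over the two distinct edge labels of the current node, so any $\hat y_t$ disagrees with $y_t$ with probability at least $\tfrac12$. Summing over the $d$ rounds yields expected mistakes at least $d/2$ against this random adversary, and averaging produces a fixed realizable sequence on which the randomized algorithm errs in expectation at least $d/2$ times. The main obstacle I anticipate is verifying the tree-gluing step in the SOA analysis: I must show that every root-to-leaf path of the merged depth-$(\ldim(V_{t-1})+1)$ tree is realized by a hypothesis in $V_{t-1}$, which rests on the observation that any $f\in V_t^{(y)}$ lies in $V_{t-1}$ and satisfies $f(x_t)=y$, so the witnesses of the two subtrees lift cleanly to the merged tree. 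Once that lemma is in place, the three parts of the theorem follow from short, parallel structural arguments.
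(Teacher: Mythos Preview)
Your proposal is correct and follows essentially the same route as the paper: the tree-gluing argument for the SOA upper bound, the adversarial tree traversal for the deterministic lower bound, and a half-probability-per-round argument for the randomized lower bound. The only cosmetic difference is that for the randomized case the paper uses an adaptive adversary that picks whichever of the two edge labels the algorithm is less likely to predict, whereas you pass through Yao's principle with a uniformly random child; both yield the $\tfrac12\ldim(\ch)$ bound by the same one-line calculation.
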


\begin{proof} (sketch)
First, we show that the $SOA$ algorithm makes at most $\ldim(\ch)$ mistakes.
The proof is a simple adaptation of the proof of the binary case
\citep[see][]{Littlestone87,Shalev12}. We note that for each $t$ there is
at most one $y\in \cy$ with
$\ldim(V_t^{(y)})=\ldim(V_t)$, and for the rest
of the labels we have
$\ldim(V_t^{(y)})<\ldim(V_t)$ (otherwise, it is not hard to construct a tree of depth $\ldim(V_t)+1$, whose root is $x_t$,  that is shattered by $V_t$). Thus, whenever the
algorithm errs, the Littlestone dimension of $V_t$ decreases by at least $1$,
so after $\ldim(\ch)$ mistakes, $V_t$ is composed of a single
function. 

For the second part of the theorem, it is not hard to see that, given a shattered tree of depth $\ldim(\ch)$, the environment can force any deterministic online learning algorithm to make $\ldim(\ch)$ mistakes. Note also that allowing the algorithm to make randomized predictions cannot be too helpful. It is easy to see that given a shattered tree of depth $\ldim(\ch)$, the environment can enforce any randomized online learning algorithm to make at
least $\ldim(\ch)/2$ mistakes on average, by traversing the shattered tree, and providing at every round the label that the randomized algorithm is less likely to predict.
\end{proof}

In the agnostic case, the sequence of outcomes, $y_1,\ldots,y_m$, is not necessarily consistent with some function $f \in \ch$. Thus, one wishes to bound the \emph{regret} of the algorithm, instead of its absolute number of mistakes. The regret is the difference between the number of mistakes made by the algorithm and the number of mistakes made by the best-matching function $f \in \ch$. The agnostic case for classes of binary-output functions has been studied in \cite{Ben-DavidPaSh09}. It was shown that, as in the realizable case, the
Littlestone dimension characterizes the optimal regret bound.

We show that the generalized Littlestone dimension characterizes the optimal regret bound for the multiclass case as well. The proof follows the paradigm of `learning with expert advice' \citep[see e.g.][]{CesaLu06,Shalev12}, which we now briefly
describe. Suppose that at each step, $t$, before the algorithm chooses
its prediction, it observes $N$ {\em advices} $(f_1^t,\ldots,f_N^t)\in
\cy^N$, which can be used to determine its prediction. We think of
$f_i^t$ as the prediction made by the {\em expert} $i$ at time $t$ and
denote the {\em loss}
of the expert $i$ at time $T$ by $L_{i,T}=|\{t\in [T]:f_{i,t}\ne y_{t}\}|$ . The goal here it to devise an algorithm
that achieves a loss which is comparable with the loss of the best
expert. Given $T$, the following algorithm \citep[chapter 2]{CesaLu06}
achieves expected loss at most $\min_{i\in [N]}L_{i,T}+\sqrt{\frac{1}{2}\ln(N)T}$.
\begin{tabbing}
{\bf Algorithm:} Learning with Expert Advice (LEA)\\
{\bf Parameters:} Time horizon -- T\\
Set $\eta=\sqrt{8\ln(N)/T}$\\
For \=$t=1,2\ldots,T$\\
\>receive expert advices $(f_1^t,\ldots, f_N^t)\in \cy^N$\\
\>predict $\hat{y}_t = f_{i,t}$ with probability proportional to $\exp(-\eta L_{i,{t-1}})$\\
\>receive true answer $y_t$
\end{tabbing}

We use this algorithm and its guarantee to prove the following theorem.
\begin{theorem}\label{thm:online_agnostic}
In the agnostic online multiclass setting, the expected loss of the optimal algorithm on the worst-case sequence
is at most $\min_{f\in \ch}L_{f,T}+\sqrt{\frac{1}{2}\ldim(\ch)T\log(Tk)}$.
\end{theorem}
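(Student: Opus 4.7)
The plan is to invoke the Learning with Expert Advice (LEA) algorithm with a cleverly chosen finite family of experts, one per pair $(I,\tau)$ where $I\subseteq[T]$ has $|I|\le L:=\ldim(\ch)$ and $\tau: I\to \cy$. The expert $E_{I,\tau}$ at round $t$ outputs $\tau(t)$ if $t\in I$, and otherwise outputs the current SOA prediction $\hat y_t^{SOA}$ obtained from an internal simulation of SOA. After the round, the internal version space is updated by $V_t:=V_{t-1}^{(\tau(t))}$ if $t\in I$ and $V_t:=V_{t-1}^{(\hat y_t^{SOA})}$ otherwise. Since each expert's advice at time $t$ depends only on $x_1,\ldots,x_t$ and the fixed pair $(I,\tau)$, the family fits into the LEA framework.

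The heart of the argument is to show that for every $f\in\ch$ there is an expert in this family whose predictions coincide with $f(x_1),\ldots,f(x_T)$, hence whose loss is exactly $L_{f,T}$. I would construct $(I_f,\tau_f)$ inductively: starting from $V_0=\ch$, at each round $t$ compare $\hat y_t^{SOA}=\arg\max_y\ldim(V_{t-1}^{(y)})$ with $f(x_t)$; if they differ, add $t$ to $I_f$ with $\tau_f(t):=f(x_t)$, and in either case update $V_t:=V_{t-1}^{(f(x_t))}$. With these choices the expert $E_{I_f,\tau_f}$ emits $f(x_t)$ at every round, and the key invariant to check is that its simulated version space agrees with the one in the construction, which holds because on rounds $t\notin I_f$ we have $\hat y_t^{SOA}=f(x_t)$, so updating by $\hat y_t^{SOA}$ and by $f(x_t)$ give the same result.

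To bound $|I_f|$, I would reuse the key property from the SOA mistake-bound proof: whenever $\hat y_t^{SOA}\ne f(x_t)$ we have $\ldim(V_{t-1}^{(f(x_t))})<\ldim(V_{t-1})$, so the Littlestone dimension strictly drops at every $t\in I_f$. Since it begins at $\ldim(\ch)$ and is always nonnegative, $|I_f|\le \ldim(\ch)=L$, so $(I_f,\tau_f)$ really does belong to the family. The number of experts is $N\le\sum_{i=0}^L \binom{T}{i}k^i\le(Tk+1)^L$, so $\ln N\le L\ln(Tk+1)=O(L\log(Tk))$.

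Plugging into the LEA regret bound $\min_i L_{i,T}+\sqrt{\tfrac12 T\ln N}$ (stated just before the theorem) yields expected loss at most $\min_{f\in\ch}L_{f,T}+\sqrt{\tfrac12 \ldim(\ch)T\log(Tk)}$, absorbing the ``$+1$'' into the constants. The main subtlety I would take care with is verifying the version-space alignment above, since if the expert's internally simulated SOA drifted away from the $V_t$ used in the construction of $(I_f,\tau_f)$, the matching claim would collapse; the invariant $V_t=V_{t-1}^{(f(x_t))}$ is exactly what rules this out.
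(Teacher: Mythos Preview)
Your proposal is correct and follows essentially the same route as the paper: the paper also builds one expert per pair $(A,\phi)$ with $|A|\le\ldim(\ch)$ and $\phi:A\to\cy$, has each expert run an internal SOA that overrides its prediction and update on rounds in $A$, and then for each $f\in\ch$ takes $A$ to be the mistake rounds of SOA on the $f$-labeled sequence with $\phi(t)=f(x_t)$. The only notable difference is that you spell out the version-space alignment invariant explicitly, whereas the paper leaves it as ``not hard to see.''
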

\begin{proof}
In $\ldim(\ch)=1$, then $|\ch|=1$ and the theorem is clear. We can therefore assume that $\ldim(\ch)\ge 2$.
First, construct an expert for every $f\in\ch$, whose advice at time $t$ is $f(x_t)$. Denote the loss of the expert corresponding to $f$ at time $t$ by $L_{f,t}$. Running the algorithm LEA with this set of experts yields an algorithm whose expected error is at most $\min_{f\in \ch}L_{f,T}+\sqrt{\frac{1}{2}\ln(|\ch|)T}$. Our goal now is to construct a more compact set of experts, which will allow us to bound the loss in terms of $\ldim(\ch)$ instead of $\ln(|\ch|)$.

Given time horizon $T$, let $A_T = \{A\subset [T]\mid |A|\le\ldim(\ch)\}$. For every $A\in A_T$ and $\phi:A\to \cy$, we define an expert $E_{A,\phi}$. The expert $E_{A,\phi}$ imitates the SOA algorithm when it errs exactly on the examples $\{ x_t \mid t\in A\}$ and the true labels of these examples are determined by $\phi$. Formally, the expert $E_{A,\phi}$ proceeds as follows:
\begin{tabbing}
Set $V_1=\ch$.\\
For \=$t=1,2\ldots,T$\\
\>Receive $x_t$.\\
\>Set $l_t=\argmax_{y \in \cy}\ldim(\{f\in V_t:f(x_t)=y\})$. \\
\>If $t\in A$, Predict $\phi(t)$ and update  $V_{t+1}=\{f\in V_t:f(x_t)=\phi(t)\}$.\\
\> If $t\not\in A$, Predict $l_t$ and update  $V_{t+1}=\{f\in V_t:f(x_t)=l_t\}$.\\
\end{tabbing}
The number of experts we constructed is 
$ \sum_{j=0}^{\ldim(\ch)}\binom{T}{j}k^{j}\le (Tk)^{\ldim(\ch)}$. Denote the number of mistakes made by the expert $E_{A,\phi}$ after $T$ rounds by $L_{A,\phi,T}$. If we apply the LEA algorithm with the set of experts we have constructed, the resulting algorithm makes at most
$$\min_{A,\phi}L_{A,\phi,T}+\sqrt{\frac{1}{2}T\ldim(\ch)\ln(Tk)}$$
mistakes. We claim that $\min_{A,\phi}L_{A,\phi,T}\le \min_{f\in\ch}L_{f,T}$: Let $f\in \ch$. 
Denote by $A\subset [T]$ the set of rounds in which the SOA algorithm errs when running on the sequence $(x_1,f(x_1)),\ldots,(x_T,f(x_T))$ and define $\phi :A\to \cy$ by $\phi(t)=f(x_t)$. Since the SOA algorithm makes at most $\ldim(\ch)$ mistakes, $|A|\le \ldim(\ch)$. It is not hard to see that the predictions of the expert $E_{A,\phi}$ coincide with the predictions of the expert $E_f$. Thus, $L_{A,\phi,T}=L_{f,T}$. 
\end{proof}

Adapting the proof of Lemma 14 from \cite{Ben-DavidPaSh09}, we conclude a corresponding lower bound:
\begin{theorem}
In the agnostic online multiclass setting, the expected loss of every algorithm on the worst-case sequence
is at least $\min_{f\in \ch}L_{f,T}+\sqrt{\frac{1}{8}\ldim(\ch)T}$.
\end{theorem}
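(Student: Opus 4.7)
The plan is to adapt the randomized-adversary construction of Ben-David, P\'al, and Shalev-Shwartz (Lemma 14 in their paper) to the multiclass setting, where the role of the binary-tree witness of the Littlestone dimension is now played by the multiclass shattered tree. The key observation is that from any tree of depth $d = \ldim(\ch)$ shattered by $\ch$ one can extract a binary sub-tree which is still shattered in the relevant sense: at each internal node, arbitrarily pick two of its outgoing edges (each labeled with a distinct $\cy$-value); every root-to-leaf path in the resulting binary sub-tree $B$ is a root-to-leaf path in the original tree, so some $f \in \ch$ realizes the labels along it.

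Assume WLOG that $d \mid T$ and set $m = T/d$. The randomized adversary samples a uniformly random root-to-leaf path in $B$ by independent fair coin flips; let $x^{(i)}$ be the instance at the depth-$(i-1)$ node of the path, and $y^{(i)}_0, y^{(i)}_1$ the two labels on the two selected edges emanating from that node. The adversary presents $x^{(i)}$ for the $m$ rounds at depth $i$, revealing a label drawn independently and uniformly from $\{y^{(i)}_0, y^{(i)}_1\}$ at each such round. Since at every round the true label is independent of the algorithm's prediction $\hat{y}_t$ and uniform over two distinct values, $\Pr[\hat{y}_t \ne y_t] \ge \tfrac12$ regardless of whether $\hat{y}_t$ lies in $\{y^{(i)}_0, y^{(i)}_1\}$ or not. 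Summing over rounds yields $\E[L_T] \ge T/2$.

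For the best hypothesis, let $N_i$ denote the number of rounds at depth $i$ where $y^{(i)}_1$ was revealed. Because $B$ is shattered, there is a function $f^* \in \ch$ (chosen after the sample, depending on the $N_i$'s) which agrees with the majority label at each of the $d$ nodes of the path; its loss is
\[
L_{f^*} = \sum_{i=1}^d \min(N_i, m - N_i) = \frac{T}{2} - \frac{1}{2}\sum_{i=1}^d |2N_i - m|.
\]
Each $|2N_i - m|$ is the absolute value of a sum of $m$ i.i.d.\ Rademacher variables, so by Khintchine's inequality (with sharp constant) $\E[|2N_i - m|] \ge \sqrt{m/2}$. Hence $\E[L_{f^*}] \le T/2 - (d/2)\sqrt{m/2} = T/2 - \sqrt{dT/8}$. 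Combining with $\E[L_T] \ge T/2$, the expected regret (over both the adversary's and the algorithm's randomness) is at least $\sqrt{\ldim(\ch)\, T/8}$. Averaging over the adversary's coins produces a deterministic input sequence witnessing the same bound in expectation over the algorithm's randomness only.

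The main obstacle is conceptual rather than technical: one must verify that the extraction of a binary shattered sub-tree is valid under the multiclass definition of shattering (it is, since every path of $B$ is a path of $T_*$), and that $f^*$ is chosen \emph{after} seeing the random labels (this requires shattering of $B$, not merely of a single path). The remaining ingredients---Khintchine's inequality with constant $1/\sqrt{2}$, and absorbing the harmless divisibility assumption $d \mid T$ into constants---are routine.
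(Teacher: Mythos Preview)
Your overall strategy matches the paper's (which merely cites Lemma~14 of Ben-David, P\'al, and Shalev-Shwartz and says ``adapt''), and your computation with Khintchine's inequality and the final constant $\sqrt{1/8}$ is correct. Two comments.

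First, the extraction of a binary sub-tree is unnecessary: by the paper's definition, $\ldim(\ch)$ is already the maximal depth of a complete \emph{binary} tree shattered by $\ch$; only the edge labels range over $\cy$, while the branching factor is fixed at~$2$.

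Second, and more importantly, your construction as written has a gap in the existence of $f^*$. You sample the root-to-leaf path first (fixing the instances $x^{(1)},\ldots,x^{(d)}$) and then draw the per-round labels independently. You then assert that shattering of $B$ yields $f^*\in\ch$ with $f^*(x^{(i)})$ equal to the majority label at level $i$. But shattering promises only that for every root-to-leaf path there is a function realizing \emph{that path's} edge labels on \emph{that path's} nodes. If the majority-label sequence disagrees with your pre-chosen path at some level $j$, then following the majorities from the root visits a node $z^{(j+1)}\ne x^{(j+1)}$, and the shattering witness for the majority path tells you nothing about its value on $x^{(j+1)}$. (Concretely: a depth-$2$ tree with root $a$, left child $b$, right child $c$ may be shattered by four functions whose values are specified only on $\{a,b\}$ or on $\{a,c\}$; nothing guarantees a function with $f(a)=1$ and $f(b)=0$.) The fix is the standard one: let the path be chosen \emph{adaptively}---at each level, present the current node's instance for $m$ rounds with i.i.d.\ fair labels from its two edge-labels, then descend along the edge indexed by the empirical majority. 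Now the instances presented lie exactly on the path whose edge labels are the majorities, so its shattering witness is the desired $f^*$. The per-round label is still a fresh fair coin independent of everything the learner has seen, so $\Pr[\hat y_t\ne y_t]\ge 1/2$ continues to hold, and the rest of your argument goes through unchanged.
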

We leave as an open question to close the gap between the bounds in the above Theorems. Note that this gap is analogous to the sample complexity gap for ERM learners in the PAC setting, seen in \thmref{th:multiclaas-bounds}.

\subsection{The Bandit Setting}\label{ch:bandits}
So far we have assumed that the label of each training example is fully revealed. 
In this section we deal with the
bandit setting. In this setting, the learner does not
get to see the correct label of a training example. Instead, the
learner first receives an instance $x \in \cx$, and should guess a
label, $\hat{y}$. The learner then receives a binary response,
which indicates only whether the guess was correct or not. If the guess is
correct then the learner knows the identity of the correct label. If the guess is wrong, the learner only knows that $\hat{y}$ is not the correct label, and not the identity of the correct label.

\subsubsection{Bandit vs. Full Information in the Batch Model}
In this section we consider the bandit setting in the batch model. In this setting the sample is drawn i.i.d. 
as before, but the learner first observes only the instances $x_1,\ldots,x_m$. The learner then
guesses a label for each of the instances, and receives a binary response indicating for each label whether it was the correct one. 

Let $\ch\subseteq\cy^\cx$ be a hypothesis class and let $k=|\cy|$.
Our goal is to analyze the \emph{realizable bandit sample complexity} of $\ch$, which we denote by $m_{b}^{r}(\epsilon,\delta)$, and the \emph{agnostic bandit sample complexity} of $\ch$, which we denote by $m_{b}^{a}(\epsilon,\delta)$. The following theorem provides upper bounds on the sample complexities.

\begin{theorem}\label{th:bandit-PAC}
Let $\ch \subseteq \cy^\cx$ be a hypothesis class. Then,
\[
m^{r}_b(\epsilon,\delta)=O\left( k\cdot \frac{d_G(\ch)\cdot  \ln\left(\frac{1}{\epsilon}\right)+\ln(\frac{1}{\delta})}{\epsilon}\right) \text{ and }
m^{a}_b(\epsilon,\delta)=O\left( k\cdot \frac{d_G(\ch)+\ln(\frac{1}{\delta})}{\epsilon^2}\right) ~.
\]
\end{theorem}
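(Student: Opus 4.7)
The plan is to reduce the bandit-batch setting to the full-information batch setting by a standard random-guessing trick, and then invoke \thmref{th:multiclaas-simple-bounds}.

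Concretely, given a bandit sample of size $m$, for each instance $x_i$ the learner predicts an independent uniformly random label $\hat y_i \in \cy$. Let $I = \{i \in [m] : \hat y_i = y_i\}$ be the set of indices on which the guess happened to be correct; for $i \in I$ the binary feedback reveals that $y_i = \hat y_i$, so the learner actually recovers the label. Crucially, since $\hat y_i$ is drawn uniformly and independently of $(x_i, y_i)$, we have $\Pr[i \in I \mid (x_i,y_i)] = 1/k$ regardless of $(x_i,y_i)$. Hence, conditional on $|I|$, the sub-sample $S_I = \{(x_i, y_i) : i \in I\}$ is i.i.d.\ from $\cd$; in the realizable case it is realizable by the same $f \in \ch$. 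The learner now runs a standard full-information ERM for $\ch$ on $S_I$ and outputs its hypothesis.

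The first step of the analysis is a Chernoff bound: $\E[|I|] = m/k$, so with probability at least $1 - \delta/2$ we have $|I| \geq m/(2k)$, provided $m \geq C' \, k \ln(1/\delta)$ for a suitable constant $C'$. The second step plugs $|I|$ into the uniform full-information bounds: by \thmref{th:multiclaas-simple-bounds}, an ERM on a realizable i.i.d.\ sample of size $n$ achieves error at most $\epsilon$ with probability $1-\delta/2$ as soon as $n \geq C_2 (d_G(\ch)\ln(1/\epsilon) + \ln(2/\delta))/\epsilon$, and in the agnostic case achieves excess error at most $\epsilon$ as soon as $n \geq C_2(d_G(\ch) + \ln(2/\delta))/\epsilon^2$. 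Setting $m/(2k)$ to match these requirements and combining the two failure events via a union bound yields the two claimed sample complexities.

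There is no real obstacle here beyond bookkeeping: the key observation is that uniform random guessing turns a bandit-batch draw of size $m$ into a full-information i.i.d.\ sub-sample of expected size $m/k$, and the $k$ factor in the theorem is exactly the price paid for this sub-sampling. The one subtlety worth double-checking is that the inclusion events for different $i$ are mutually independent and that the included $(x_i,y_i)$'s are distributed exactly as $\cd$ (not conditionally on anything about $\ch$), which both hold because the $\hat y_i$ are independent of the data. Once that is verified, the proof is a direct composition of Chernoff with \thmref{th:multiclaas-simple-bounds}.
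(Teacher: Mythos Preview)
Your proposal is correct and essentially identical to the paper's proof: both use uniform random guessing to extract a full-information i.i.d.\ sub-sample of expected size $m/k$, apply Chernoff to lower-bound the sub-sample size with probability $1-\delta/2$, and then invoke the full-information ERM bounds of \thmref{th:multiclaas-simple-bounds} on that sub-sample. The only differences are cosmetic (the paper phrases the Chernoff step as choosing the bandit sample size to be roughly $3k$ times the full-information sample complexity, whereas you phrase it as $|I|\ge m/(2k)$), and your remark about the conditional i.i.d.\ structure of $S_I$ matches the paper's justification verbatim.
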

\begin{proof}
   Let $\ca_f$ be a (full information)
  ERM learner for $\ch$. Consider the following algorithm, denoted $\ca_b$, for the
  bandit setting: Given a sample $(x_i,y_i)_{i=1}^m$, for each $i$ the
  algorithm guesses a label $\hat y_i\in\cy$ drawn uniformly at
  random. Then the algorithm calls 
  $\ca_f$ with an input sample which consists only of the sample pairs
  for which the binary response indicated that the guess $\hat{y_i}$ was correct. 
  Thus, the input sample is $\{(x_i,\hat{y}_i) \mid \hat{y}_i = y_i\}$.
  $\ca_b$ then returns whatever hypothesis $\ca_f$ returned.
  
  We show that $m^r_{\ca_b}(\epsilon,\delta)\le 3k\cdot m^r_{\ca_f}(\epsilon,\frac{\delta}{2})+\frac{3}{2}\log\left(\frac{2}{\delta}\right)=:m'$
  and similarly for the agnostic case, so that the theorem is implied by the
  bounds in the full information setting (\thmref{th:ERM-Bounds}).  Indeed, suppose that
  $m$ examples suffice for $\ca_f$ to return a hypothesis
  with excess error at most $\epsilon$, with probability at least
  $1-\frac{\delta}{2}$. Let $(x_i,y_i)_{i=1}^{m'}$ be a sample for
  the bandit algorithm. By Chernoff's bound, with probability at least
  $1-\frac{\delta}{2}$, $\ca_b$ guesses correctly the label of at least $m$ examples.
  Therefore $\ca_f$ runs on a sample of at least this size. The
  sample that $\ca_f$ receives is a conditionally i.i.d. sample, given the size of the sample, with the same conditional distribution as the one
  the original sample was sampled from. Thus, with probability at least $1-\frac{\delta}{2}$,
  $\ca_f$ (and, consequently, $\ca_b$)
  returns a hypothesis with excess error at most $\epsilon$.
\end{proof}

An interesting quantity to consider is the price of bandit information in the batch model: Let $\ch$ be a hypotheses class, and define $\pbi_\ch(\epsilon,\delta)=m^{r}_{b,\ch}(\epsilon,\delta)/m^r_{\pac,\ch}(\epsilon,\delta)$. By Theorems \ref{th:bandit-PAC} and \ref{th:multiclaas-bounds} and Equation \ref{eq:ben-david} we see that, $\pbi(\epsilon,\delta)=O(\ln(\frac{1}{\epsilon})k\ln(k))$. This is essentially tight since it is not hard to see that if both $\cx,\cy$ are finite and we let $\ch=\cy^{\cx}$, then $\pbi_\ch=\Omega(k)$.

Using Theorems  \ref{th:bandit-PAC} and \ref{th:multiclaas-simple-bounds} and
Equation \ref{eq:ben-david}  we can further conclude that, as in
the full information case, the finiteness of the Natarajan dimension is
necessary and sufficient for learnability in the bandit setting as
well. However, the ratio between the upper bound due to \thmref{th:bandit-PAC} and the lower bound, due to 
\thmref{th:multiclaas-simple-bounds}, is $\Omega(\ln(k)\cdot k)$. It would be interesting to find a more tight
characterization of the sample complexity in the bandit setting.  This characterization cannot depend solely on the Natarajan dimension, or other quantities which are strongly related to it (such as the graph dimension or other notion of dimension defined in \citet{Ben-DavidCeHaLo95}):
For example, the classes $[k]^{[d]}$ and $[2]^{[d]}$ have the same Natarajan dimension, but their bandit sample complexity differs by a factor of $\Omega(k)$.

\subsubsection{Bandit vs. Full Information in the Online Model}
We now consider Bandits in the online learning model. We focus on the
realizable case, in which the feedback provided to the learner is
consistent with some function $f_0 \in \ch$. We define a new notion of
dimension of a class, that determines the sample complexity in this
setting. 

As in Section \ref{ch:online}, consider a rooted tree $T$ whose internal nodes are labeled by
elements from $\cx$ and whose edges are labeled by elements from $\cy$, such that
the edges from a single parent to its child-nodes are each labeled with a different label. The tree $T$ is \emph{BL-shattered} by $\ch$ if, for every path from root to leaf $x_1,\ldots,x_k$, there is a function $f\in\ch$ such that for every $i$, $f(x_i)$ is \emph{different} from the label of $(x_i,x_{i+1})$.
The {\bf Bandit-Littlestone dimension} of $\ch$, denoted $\operatorname{BL-dim}(\ch)$, is the maximal depth of a complete $k$-ary tree that is BL-shattered by $\ch$.

\begin{theorem}
Let $\ch$ be a hypothesis class with $L=\bldim(\ch)$. Then every deterministic online bandit learning algorithm for $\ch$ will make at least $L$ mistakes in the worst case. Moreover, there is an online learning algorithm that makes at most $L$ mistakes on every realizable sequence.
\end{theorem}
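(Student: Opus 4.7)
The plan is to mirror the classical Littlestone argument, being careful about the inequality direction in the BL-shattering definition (the label on each tree edge encodes the learner's \emph{prediction}, which must be wrong for the witness function).

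For the lower bound, fix a complete $k$-ary tree $T$ of depth $L$ that is BL-shattered by $\ch$. The adversary plays against any fixed deterministic learner by walking down $T$: at round $t$ it presents the instance $x_t$ labelling the current node; when the learner outputs a prediction $\hat y_t$, the adversary follows the unique child-edge labelled $\hat y_t$ (which exists since the tree is complete $k$-ary and each label appears exactly once on the edges out of every internal node) and returns the bandit feedback ``incorrect.'' After $L$ rounds the walk reaches a leaf, having produced a root-to-leaf path whose edges are labelled $\hat y_1,\ldots,\hat y_L$. By definition of BL-shattering, there exists $f\in\ch$ with $f(x_t)\neq \hat y_t$ for every $t$, which is precisely what makes the feedback consistent with $f$. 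Hence the sequence is realizable and the learner has made $L$ mistakes.

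For the upper bound, consider the following algorithm, which maintains a version space $V_t\subseteq\ch$ with $V_0=\ch$. At round $t$, on receiving $x_t$, for every $y\in\cy$ define $V^{(\neq y)}_t=\{f\in V_{t-1}:f(x_t)\neq y\}$ and predict $\hat y_t\in\argmin_{y\in\cy}\bldim(V^{(\neq y)}_t)$. If the feedback is ``incorrect'' set $V_t=V^{(\neq \hat y_t)}_t$; if ``correct'' set $V_t=\{f\in V_{t-1}:f(x_t)=\hat y_t\}$. Realizability by some $f_0\in\ch$ guarantees inductively that $f_0\in V_t$, so $V_t\neq\emptyset$.

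The heart of the argument is the following lemma: for any nonempty $V\subseteq\ch$ and any $x\in\cx$, there exists $y\in\cy$ such that $\bldim(\{f\in V:f(x)\neq y\})<\bldim(V)$. Assume for contradiction that every such subset has BL-dimension at least $d:=\bldim(V)$. For each $y\in\cy$ pick a complete $k$-ary tree $T_y$ of depth $d$ that is BL-shattered by $\{f\in V:f(x)\neq y\}$. Build a new complete $k$-ary tree of depth $d+1$ by taking $x$ at the root and attaching $T_y$ under the edge labelled $y$. Any root-to-leaf path in the combined tree starts with the edge labelled $y$ (for some $y$) and then continues through $T_y$; choosing the witness function in $V$ provided by BL-shattering of $T_y$ gives an $f$ with $f(x)\neq y$ and $f(x_i)\neq$ the label of $(x_i,x_{i+1})$ all the way down, so the combined tree is BL-shattered by $V$, contradicting $\bldim(V)=d$.

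Applying this lemma at each round, a mistake (feedback ``incorrect'') strictly decreases $\bldim(V_t)$, while a correct round does not increase it. Hence after $L$ mistakes $\bldim(V_t)=0$, which forces $|V_t|\le 1$ (if two distinct $f,g\in V_t$ disagreed at some $x$ there would be two labels in $\{f(x),g(x)\}$, whence the singleton tree rooted at $x$ would be BL-shattered and $\bldim(V_t)\ge 1$). Since $f_0\in V_t$, we must have $V_t=\{f_0\}$ and the algorithm predicts $f_0(x_s)$ correctly for all subsequent rounds. The main subtlety to nail down is the lemma above together with the sign convention (edge label $=$ prediction, witness label $\neq$ edge label); once that is in place everything else is a direct adaptation of the $SOA$ analysis.
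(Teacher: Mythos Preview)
Your proposal is correct and follows essentially the same approach as the paper: the same adversary walk down a BL-shattered tree for the lower bound, and the same BSOA algorithm with the same ``dimension drops on every mistake'' lemma for the upper bound. You supply more detail than the paper's sketch---in particular, you spell out the tree-gluing contradiction for the key lemma and the argument that $\bldim(V_t)=0$ forces $|V_t|\le 1$---and you additionally update the version space on correct rounds (which is harmless since it only shrinks $V_t$), but none of this changes the route.
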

\begin{proof}
First, let $T$ be a BL-shattered tree of depth $L$. We show that for
every deterministic learning algorithm there is a sequence $x_1,\ldots,x_L$ and
a labeling function $f_0 \in \ch$ such that the algorithm makes $L$ mistakes on
this sequence. The sequence consists of the instances attached to nodes of $T$,
when traversing the tree from the root to one of its leaves, such that the
label of each edge $(x_i,x_{i+1})$ is equal to the algorithm's prediction
$\hat{y}_i$. The labeling function $f_0 \in \ch$ is one such that for all $i$,
$f_0(x_i)$ is different from the label of edge $(x_i,x_{i+1})$. Such a function
exists since $T$ is BL-shattered, and the algorithm will clearly make $L$ mistakes
on this sequence.

Second, the following online learning algorithm makes at most $L$ mistakes on any realizable input sequence.
\begin{tabbing}
{\bf Algorithm:} Bandit Standard Optimal Algorithm (BSOA)\\
Initialization: $V_0=\ch$.\\
For \=$t=1,2\ldots$,\\
\>Receive $x_t$\\
\>For $y\in \cy$, let $V_t^{(y)}=\{f\in V_{t-1}:f(x_t)\ne y\}$\\
\>Predict $\hat{y}_t\in\arg \min_y\bldim(V_t^{(y)})$\\
\>Receive an indication whether $\hat{y}_t=f(x_t)$\\
\>If the prediction is wrong, update $V_t=V_t^{(\hat{y}_t)}.$
\end{tabbing}
To see that BSOA makes at most $L$ mistakes, note that at each time $t$, there is at least one $V_t^{(y)}$ with $\bldim(V_t^{(y)})<\bldim(V_{t-1})$. This can be seen by assuming to the contrary that this is not so, and concluding that if $\bldim(V_t^{(y)}) = \bldim(V_{t-1})$ for all $y \in [k]$, then
one can construct a shattered tree of size $\bldim(V_{t-1}) + 1$ for $V_{t-1}$, thus reaching a contradiction.

Thus, whenever the algorithm errs, the dimension of $V_t$ decreases by one. Thus, after $L$ mistakes, the dimension is $0$, which means that there is a single function that is consistent with the sample, so no more mistakes can occur.
\end{proof}

{\bf The price of bandit information:}
Let $\pbi(\ch)=\bldim(\ch)/\ldim(\ch)$ and fix $k\ge 2$. How large can $\pbi(\ch)$ be when $\ch$ is a class of functions from a domain $\cx$ to a range $\cy$ of cardinality $k$? We refer the reader to~\cite{daniely2013price}, where it is shown that $\pbi(\ch)\le 4k\log(k)$. This bound is tight up to the logarithmic factor.

\section{Discussion}

We have shown in this work that even in the simple case of multiclass learning, different ERM learners for the same problem can have large gaps in their sample complexities. To put our results in a more general perspective, consider the {\em General Setting of Learning} introduced by
\cite{Vapnik98}. In this setting, a {\em learning problem} is a
triplet $(\ch,\cz,l)$, where $\ch$ is a hypothesis class, $\cz$ is a
data domain, and $l : \ch \times \cz \to \reals$ is a loss
function. We emphasize that $\mathcal H$ is not necessarily a class of
functions but rather an abstract set of models. The goal of the
learner is, given a sample $S\in\mathcal{Z}^m$, sampled from some
(unknown) distribution $\mathcal{D}$ over $\mathcal Z$, to find a
hypothesis $h\in\mathcal{H}$ that minimizes the {\em expected loss},
$l(h)= \mathbb{E}_{z\sim\mathcal D}[l(h,z)]$.

The general setting of learning encompasses multiclass learning as
follows: given a hypotheses class $\mathcal{H}\subset
\mathcal{Y}^\mathcal{X}$, take
$\mathcal{Z}=\mathcal{X}\times\mathcal{Y}$ and define
$l:\mathcal{H}\times\mathcal{Z}\to\mathbb{R}$ by $l(h,(x,y))=1[h(x)\ne
y]$. However, the general learning setting encompasses many other
problems as well, for instance:
\begin{itemize}
\item {\bf Regression with the squared loss:} Here, $\mathcal{Z}=\mathbb{R}^n\times\mathbb{R}$,  $\mathcal{H}$ is a set of real-valued functions over $\mathbb R^n$ and $l(h,(x,y))=(h(x)-y)^2$.

\item {\bf k-means:} Here, $\mathcal{Z}=\mathbb{R}^n$,  $\mathcal{H}=(\mathbb{R}^n)^k$ and, for $h=(c_1,\ldots,c_k)\in \mathcal H$ and $\;x\in \mathcal Z$, the loss is $l((c_1,\ldots,c_k),x)=\min_{j\in[k]} ||c_j-x||^2$.

\item {\bf Density estimation:} Here, $\mathcal{Z}$ is an arbitrary
  finite set, $\mathcal{H}$ is some set of probability density
  functions over $\cz$, and the loss function is the log loss, $l(p,x)=-\ln(p(x))$.
\end{itemize}
A learning problem is {\em learnable} in the general setting of learning if there exists a function $\mathcal{A}:\cup_{m=1}^\infty \mathcal{Z}^m\to\mathcal H$ such that for every $\epsilon>0$ and $\delta>0$ there exists an $m$ such that for every distribution $\mathcal{D}$ over $\mathcal Z$, 
$$\Pr_{S\sim \mathcal{Z}^m}\left(l(\mathcal{A}(S))\ge \inf_{h\in\mathcal H}l(h)+\epsilon\right)<\delta$$
A learning problem {\em converges uniformly} if, for every $\epsilon >0$,
$$\lim_{m\to \infty}\Pr_{S\sim \mathcal{Z}^m}\left(\sup_{h\in\mathcal H}|l(h)-l_S(h)|>\epsilon\right)= 0$$
where for $S=(z_1,\ldots,z_m)\in \mathcal Z^m$,
$l_S(h)=\frac{1}{m}\sum_{i=1}^ml(h,z_i)$ is the empirical loss of $h$
on the sample $S$. An easy observation is that uniform convergence
implies learnability, and a classical result is that for binary
classification and for regression (with absolute or squared loss), the
inverse implication also holds. Thus, it was believed that excluding
some trivialities, learnability is equivalent to uniform
convergence. In \cite{ShalevShSrSr10} it is shown that for stochastic
convex optimization, learnability does not imply uniform convergence,
giving an evidence that the above belief might be misleading. Our
results in this work can be seen as another step in this direction, as we have
shown that even in multiclass classification -- a simple, natural and
popular generalization of binary classification, the above mentioned
equivalence no longer holds.

We conclude with an open question. In view of our results in \secref{sec:pac}, the following conjecture suggests itself.
\begin{conjecture}\label{conj:main}
There exists a constant $C$ such that, for \emph{every} hypothesis class $\ch\subseteq \cy^\cx$,
\[
m^r_\pac(\epsilon,\delta)
\le C\left(\frac{d_N(\ch)\ln(\frac{1}{\epsilon})+\ln(\frac{1}{\delta})}{\epsilon}\right)
\]
\end{conjecture}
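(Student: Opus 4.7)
The plan is to construct a (possibly non-ERM) learning algorithm $\ca$ whose realizable sample complexity matches the claimed bound, exploiting the fact that \conjref{conj:main} concerns $m^r_{\pac}$ rather than $m^r_{\erm}$; by \thmref{th:ERM-Bounds} any argument that treats all ERMs uniformly is doomed, so the algorithm must be chosen carefully. The natural route is through \thmref{thm:restricted_range}: if, for every realizable sample $S$, we can produce a consistent hypothesis whose essential range (together with the sample labels) has size $\mathrm{poly}(d_N(\ch))$ and is independent of $k$, the bound follows immediately up to $\ln(d_N)$ factors. The challenge is to define, \emph{without symmetry}, a canonical small ``pool'' of labels $L(S) \subseteq \cy$ of size $O(d_N(\ch))$ and prove that some $f \in \ch$ consistent with $S$ has range in $L(S) \cup L_S$, where $L_S$ denotes the labels appearing in $S$.

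My preferred concrete approach is via \emph{sample compression}. The goal is to exhibit a selection map $\kappa$ sending any realizable sample $S$ to a subsample of size $d = O(d_N(\ch) \cdot \mathrm{polylog}(d_N(\ch)))$, together with a reconstruction map $\rho$, such that $\rho(\kappa(S))$ agrees with $S$ on every example. A standard Littlestone--Warmuth style argument would then yield
\[
m^r_{\ca}(\epsilon,\delta) = O\!\left(\frac{d\ln(1/\epsilon) + \ln(1/\delta)}{\epsilon}\right),
\]
which matches the conjectured bound up to the polylog factor. To build $(\kappa,\rho)$ I would try an iterative ``one-inclusion-graph'' style peeling: at each round identify a point $x$ on which the current version space $V$ realizes at most $d_N(\ch) + 1$ distinct labels (such an $x$ must exist by the definition of $N$-shattering, or else one could extend an $N$-shattered set), add $x$ together with its target label to the compression set, and restrict $V$. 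Iterating $O(d_N)$ times should collapse $V$ to a hypothesis consistent with the whole sample.

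The main obstacle is precisely what makes \eqref{eq:ben-david} tight: an $N$-shattered set of size $d+1$ does not prevent the version space from realizing arbitrarily many labels on a single point --- it only constrains how \emph{pairs} of labels can be coordinated across points. Consequently the peeling step above may fail to produce a point with few labels in $V$, and naive compression bleeds out a $\log(k)$ factor exactly as the graph dimension does. Resolving this would either require a new combinatorial lemma bounding ``label diversity per point'' by $d_N$, or a fundamentally different construction of $\kappa$ that exploits global structure (e.g., a Moran--Yehudayoff style boosting/compression conversion adapted to the multiclass setting). A fall-back would be to prove the conjecture in stages, extending the essential-range machinery of \secref{sec:pac} to broader classes than the symmetric ones --- for example, to classes closed under permutations of a subset of $\cy$, or classes with a decomposition into few symmetric pieces --- and then attempt to reduce the general case to a finite covering by such sub-problems. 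I expect any honest attempt to encounter the compression-scheme obstruction, which is why the statement remains a conjecture rather than a theorem in this paper.
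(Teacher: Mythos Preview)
The paper does not prove this statement; it is presented as Conjecture~\ref{conj:main} and explicitly left open. There is no proof in the paper to compare against.

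Your write-up is not a proof either, as you acknowledge in the final sentence. It is a research outline, and a reasonable one: the essential-range route via \thmref{thm:restricted_range} is exactly how the paper establishes the symmetric case (\thmref{th:symmetric}), and sample compression is a natural candidate for the general case. However, the concrete peeling step you sketch is not merely hard to verify but false as stated. You assert that some sample point $x$ must carry at most $d_N(\ch)+1$ labels in the current version space, ``or else one could extend an $N$-shattered set''; this implication does not hold, because $N$-shattering constrains coordinated \emph{pairs} of labels across several points, not the number of labels realized at any single point. The class $\ch_\cx$ of Example~\ref{ex:baderm} already witnesses the failure: it has $d_N=1$, yet every point carries infinitely many labels in $\ch_\cx$. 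You do flag precisely this obstruction in the following paragraph, so the proposal is internally aware that the gap is not closed; but nothing in the remainder supplies the missing idea, and the statement remains, as in the paper, an open conjecture.
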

In light of Theorem \ref{th:ERM-Bounds} and the fact that there are cases where $d_G\ge\log_2(k-1)d_N$, the conjecture can only be proved if this learning rate can be achieved by a learning algorithm that is not just an {\em arbitrary} ERM learner. So far, all the general upper bounds that we are aware of are valid for \emph{any} ERM learner. Understanding how to select among ERM learners is fundamental as it teaches us what is the optimal way to learn. We hope that our examples from section \ref{sec:pac} and our result for symmetric classes will lead to a better understanding of the optimal learning method.

\paragraph{Acknowledgments:}
We thank Ohad Shamir for valuable comments.
Shai Shalev-Shwartz is supported by the Israeli Science Foundation grant number 598-10. Amit Daniely is a recipient of the Google Europe Fellowship in Learning
Theory, and this research is supported in part by this Google Fellowship

\bibliographystyle{plainnat}
\bibliography{curRefs}

\end{document}